\documentclass[12pt,a4paper]{article}
\usepackage{geometry}
\usepackage[utf8]{inputenc}
\usepackage[T1]{fontenc}
\usepackage{lmodern} 
\usepackage[english]{babel}
\usepackage{amsmath}
\usepackage{amsfonts}
\usepackage{amssymb}
\usepackage{amsthm}
\usepackage{enumitem}

\usepackage{balance} 
\usepackage{tikz}
\usetikzlibrary{shapes.geometric, arrows, calc, positioning,arrows.meta,bending,positioning}
\tikzstyle{s-arrow} = [thick, dashed, ->, >=stealth]
\tikzstyle{a-arrow} = [thick, ->, >=stealth]
\tikzstyle{c-arrow} = [thick, dotted, ->, >=stealth]

\newtheorem{theorem}{Theorem}
\newtheorem{lemma}{Lemma}

\newtheorem{proposition}[theorem]{Proposition}

\newtheorem{observation}[theorem]{Observation}
\newtheorem{definition}{Definition}
\newtheorem{example}{Example}

\newcommand{\BibTeX}{\rm B\kern-.05em{\sc i\kern-.025em b}\kern-.08em\TeX}

\makeatletter
\newcommand{\superimpose}[2]{{%
		\ooalign{%
			\hfil$\m@th#1\@firstoftwo#2$\hfil\cr
			\hfil$\m@th#1\@secondoftwo#2$\hfil\cr
		}%
}}
\makeatother

\newcommand{\supp}{\mathbin{\mathpalette\superimpose{{\to}{\color{white}{\boldsymbol{\cdot\cdot}}}}}}
\newcommand{\nsupp}{\mathbin{\mathpalette\superimpose{{\supp}{\hspace{-0.3cm}\not}}}}

\renewcommand{\phi}{\varphi}
\renewcommand{\l}{\langle}
\renewcommand{\r}{\rangle}
\renewcommand{\arg}[2]{\l\{#1\},#2\r}

\newcommand{\att}{\to}

\newcommand{\inc}{\overline}
\newcommand{\emptinc}{\overline{\phantom{A}}}

\newcommand{\sarg}{Arg_s}
\newcommand{\warg}{Arg_w}
\newcommand{\uncuts}[1]{\inc{n(#1)}}

\newcommand{\lang}{\mathcal{L}}
\newcommand{\langg}{\lang= \l L,\ \emptinc\ , n \r}
\newcommand{\af}{\mathcal{A}}
\newcommand{\aff}{\af=\l A,\att\r}
\newcommand{\baf}{\mathcal{B}}
\newcommand{\baff}{\baf=\l A,\att,\supp\r}
\newcommand{\sbaf}{\mathcal{SB}}
\newcommand{\sbaff}{\sbaf=\l\lang,A,\att,\supp\r}

\title{Rejecting Arguments Based on Doubt in Structured Bipolar Argumentation}

\author{Michael A. M\"uller\footnote{Université de Fribourg, Switzerland; CRIL CNRS Univ Artois, Lens, France.}, Srdjan Vesic\footnote{CRIL CNRS Univ Artois, Lens, France.}, and Bruno Yun\footnote{Univ Lyon, UCBL, CNRS, INSA Lyon, LIRIS, UMR5205, F-69622 Villeurbanne, France.}}

\begin{document}

	\maketitle 
	
	\begin{abstract}
		This paper develops a new approach to computational argumentation that is informed by philosophical and linguistic views. Namely, it takes into account two ideas that have received little attention in the literature on computational argumentation: First, an agent may rationally reject an argument based on mere doubt, thus not all arguments they could defend must be accepted; and, second, that it is sometimes more natural to think in terms of which individual sentences or claims an agent accepts in a debate, rather than which arguments. In order to incorporate these two ideas into a computational approach, we first define the notion of structured bipolar argumentation frameworks (SBAFs), where arguments consist of sentences and we have both an attack and a support relation between them. Then, we provide semantics for SBAFs with two features: (1) Unlike with completeness-based semantics, our semantics do not force agents to accept all defended arguments. (2) In addition to argument extensions, which give acceptable sets of arguments, we also provide semantics for language extensions that specify acceptable sets of sentences. These semantics represent reasonable positions an agent might have in a debate. Our semantics lie between the admissible and complete semantics of abstract argumentation. Further, our approach can be used to provide a new perspective on existing approaches. For instance, we can specify the conditions under which an agent can ignore support between arguments (i.e. under which the use of abstract argumentation is warranted) and we show that deductive support semantics is a special case of our approach.
	\end{abstract}

		Abstract Argumentation; Doubt Bipolar; Argumentation; Structured Argumentation; Interpreting Semantics

	
	\section{Introduction}

For most people, the primary contact point with arguments is through discussions and debates, be it in person or online. In these contexts, they are confronted with the following central question: Given all the arguments, what should you believe? One way to answer this question is through computational argumentation \cite{Dung1995acceptability, Baroni2018handbook}. We can build models of debates and use them to evaluate which sets of arguments, called \textit{extensions}, an agent considering the debate should. This can help agents navigate complex and potentially inconsistent information. The literature has produced a wide range of \textit{semantics} that determine the acceptability of arguments (see \cite{Baroni2018abstract}). Many of these approaches come out of an interest in modelling different ways of formal reasoning based on specific logics, logic programming, or how agents can reason with inconsistent or uncertain knowledge \cite{Nouioua2013afs,AmgoudCayrol1998on,Dung1995acceptability,Arieli2021logic,Bondarenko1993assumption,Potyka2018continous,Pu2015attacker}.

This paper uses computational approaches such as abstract \cite{Dung1995acceptability}, structured \cite{Besnard2014introduction}, and bipolar \cite{CayrolLagasquie-Schiex2005on} argumentation as its starting point, but incorporates aspects of informal approaches to argumentation such as philosophy \cite{Toulmin2003uses,Hamblin1970fallacies,Johnson2006logical,Walton2008argumentation}, linguistics \cite{Koszowy2022pragmatic}, and communication studies \cite{VanEemeren1984speech,VanEemeren2004systematic}. Amongst these approaches, we can find two general ideas that are not always explicitly taken into account in formal approaches: (1) In formal approaches, rational agents are assumed to automatically accept all defended arguments, while in informal ones they are allowed to reject such an argument, e.g.\ if it relies on a premise the agents deem very implausible. (2) Formal approaches often require agents to evaluate arguments as a whole, meaning they often take the sentences of an argument (its premises and conclusion) into account only to determine its relations to other arguments. Informal approaches, in contrast, often assume that agents evaluate directly which sentences they should accepted. While some computational approaches implement aspects of these ideas \cite{PolbergOren2014revisting,Gonzalez2021labeled,Toni2014tutorial,Bernreiter2024effect,CocarascuToni2017mining}, we take both of them fully into account.

The idea of rejecting defended arguments comes from the notion of critical reactions \cite{KrabbeVanLaar2011ways,VanLaarKrabbe2012burden}. While an agent has to react critically to an argument in order to reject it, a critical reaction does not have to consist of a counterargument. For instance, instead of attacking an argument, the agent can also challenge it. Such a challenge expresses \textit{doubt} about a premise of the argument and if that doubt cannot be overcome, the argument can be rejected. Thus, even in the absence of any attacks, agents can reject arguments if doubted premises are not supported. Our approach is accordingly one of \textit{bipolar argumentation} \cite{CayrolLagasquie-Schiex2005on}, where arguments can be both attacked and supported. The use of an explicit support relation is also in line with empirical results on how we intuitively deal with arguments \cite{Koszowy2022pragmatic,PolbergHunter2018empirical}. The following example illustrates the novelty and expressiveness of our approach.

{ 
	\begin{example}\label{ex:stradivarius}
		Consider the following situation, loosely inspired by \cite{AmgoudVesic2009repairing}, where an agent considers a debate containing the following arguments: $a_1:$ ``This violin is a Stradivarius since Alex says so'', $a_2:$ ``This violin is expensive since it is a Stradivarius'', $a_3:$ ``We know Clara says that Anne-Sophie owns this violin since she is cited in a newspaper saying so.'', $a_4:$ ``Anne-Sophie owns this violin since Clara says so'', and $a_5:$ ``Hilary owns this violin since Diego says so''. This gives the following situation of bipolar argumentation, where $\att$ are attacks and $\supp$ are supports.
		
		\begin{center}
			\begin{tikzpicture}[node distance=0.5cm]
				
				\node (A1) {$a_1$};
				\node (A2) [right=of A1] {$a_2$};
				\node (A3) [right=of A2] {$a_3$};
				\node (A4) [right=of A3] {$a_4$};
				\node (A5) [right=of A4] {$a_5$};
				
				\draw [s-arrow] (A1) -- (A2);
				\draw [s-arrow] (A3) -- (A4);
				\draw [a-arrow] (A4) -- (A5);
				\draw [a-arrow] (A5) -- (A4);
				
			\end{tikzpicture}
		\end{center}
		
		If the agent uses any form of complete semantics \cite{Dung1995acceptability,Yu2023principle}, they have to accept all defended arguments. Thus any such agent has to accept at least $a_1$ and $a_2$. However, one might reasonably doubt whether Alex really made the claim in $a_1$. The agent might suspect that Alex does not know anything about violins and thus his claim strikes them as odd. This suspicion does not amount to a full counterargument that could be added to the framework, but it is enough for the agent to doubt $a_1$ and reject it if there is no support to remove their doubts. 
		
		The next weaker semantics are based on admissibility \cite{Dung1995acceptability}. Admissibility allows agents to doubt $a_1$, but it also removes any notion of support to $a_2$, as the two arguments can be accepted fully independently of each other. Using d-admissibility of deductive support semantics (see \cite{Boella2010support,CayrolLagasquie-Schiex2013bipolarity} and Section \ref{sect:bip}), support is taken into account and $a_1$ can only be accepted together with $a_2$, giving the following extensions: $\emptyset, \{a_1,a_2\},\{a_3,a_4\},\{a_5\},\{a_1,a_2,a_5\}$, and $\{a_1,a_2,a_3,a_4\}$. But now also $a_3$ can only be accepted together with $a_4$. However, as $a_4$ and $a_5$ contradict each other, one might want to abstain from accepting either of them while still accepting $a_3$. That is, an agent might want to accept that Clara claims that Anne-Sophie owns the violin based on the newspaper while suspending judgment on whether she really owns it. D-admissibility does not allow for this option.
		
		One of the semantics we define, weak coherence (see Section \ref{sect:sem}), captures these dynamics that elude both completeness-based semantics and d-admissibility by allowing agents to accept the extension $\{a_3\}$. A full analysis of this case is provided in Example \ref{ex:full-stradivarius}.
		
	\end{example}
}

The idea of doubting individual sentences also illustrates how informal approaches often evaluate arguments on the level of their sentences. For people confronted with arguments, it is often more intuitive to think about what sentences they accept rather than which arguments \cite{Betz2010theorie}. The latter can be complex structures and it can be challenging to analyse them and figure out to what you are committed when accepting them. Taking into account sentences can help explain why some agent accepts (or should accept) an argument extension, namely by referring to the set of sentences they agree with. This requires taking into account the structure of arguments and accordingly our approach takes the perspective of \textit{structured argumentation}.

{
	\begin{example}
		It might be intuitive for agents to indicate which individual sentences they accept without thinking about the arguments directly. In Example \ref{ex:stradivarius}, we might accept the sentences $Str:$ ``This violin is a Stradivarius'', $Exp$: ``This violin is expensive'', $Cla:$ ``Clara says that Anne-Sophie owns this violin'', and $Hil:$ ``Hilary owns this violin'' (see Example \ref{ex:lang} for a full translation). Is $\{Str,Exp,Cla,Hil\}$ an acceptable set of sentences? In the terms we introduce in Section \ref{sect:sem}, we can understand this as a weakly coherent language extension and it corresponds to the argument extension $\{a_2\}$.
	\end{example}
}

In this paper, we define \textit{structured bipolar argumentation frameworks} (SBAFs) and provide a range of semantics for them. At the argument level, we define two versions of what we call \textit{coherent} semantics. They contrast with completeness-based semantics in that they sometimes allow rejection of defended arguments, but they also contrast with (abstract) admissible semantics in that they take support between arguments into account. We additionally provide two versions of \textit{adequate} semantics that operate on sets of sentences directly. This way, we offer a new perspective on computational argumentation which is informed by informal theories. 

The paper is organised as follows. We first sketch the familiar preliminaries from abstract argumentation (Section \ref{sect:pre}) and then introduce SBAFs in Section \ref{sect:sbaf}. Both argument semantics, which evaluate sets of arguments, and language semantics, which evaluate sets of sentences, are defined in Section \ref{sect:sem}. These semantics are then used to interpret existing semantics from abstract and bipolar argumentation in Section \ref{sect:interpreting}. Finally, we discuss related approaches in the literature (Section \ref{sect:related}).


\section{Preliminaries}\label{sect:pre}

We briefly recall the definitions of abstract argumentation \cite{Dung1995acceptability,Baroni2018abstract}.

\begin{definition}[Abstract Argumentation Framework]\label{def:abstractArgumentation}
	An abstract argumentation framework (AF) is a tuple $\aff$ where $A$ is a finite set of arguments and ${\att}\subseteq A\times A$ an attack relation.
\end{definition}

We write $a\to b$ in case $(a,b)\in{\att}$ and generalise to sets of arguments, i.e.\ $E\to b$ in case $a\to b$ for some $a\in E$ and $a\to E$ in case $a\to b$ for some $b\in E$. Further, we write $a\not\att b$ if $(a,b)\not\in{\att}$. For two sets of arguments, $E\att E'$ means $a\att b$ for some $a\in E$ and $b\in E'$. A set of arguments $E\subseteq A$ \textit{defends} an argument $a\in A$ if $\forall b\in A:b\att a\implies E\att b$.

The semantics are as usual. Let $\aff$ be an AF. An extension $E\subseteq A$ is called
\textit{conflict-free} if $\forall a,b\in E:a\not\att b$,
\textit{admissible} if it is conflict-free and defends all its arguments,
\textit{complete} if it is admissible and it contains all arguments it defends,
\textit{preferred} if it is $\subseteq$-maximal among admissible extensions.

Note the difference between admissible and complete semantics. Whereas admissible extensions are never forced to include an argument, complete semantics requires accepting all defended arguments, including any unattacked ones. Even when it comes to bipolar semantics (see Section \ref{sect:bip}), the focus still lies on variants of complete semantics (cf.\ \cite{Yu2023principle}). The semantics we define in Section \ref{sect:sem} fall in the gap between admissible and complete semantics. 


\section{Structured Bipolar Argumentation}\label{sect:sbaf}


In this section, we first introduce structured bipolar argumentation frameworks and then go on to define their semantics.

\subsection{Frameworks}

Now we start introducing the frameworks for structured bipolar argumentation. We take a structured approach and as such we first have to define the language which we use to represent arguments.

\begin{definition}[Language] \label{def:SBAFlanguage2}
	A language $\lang= \l L,\ \emptinc\ , n \r$ consists of a non-empty set of sentences $L$, a (partial) incompatibility function $\emptinc:L\to 2^{L}$, and a (partial) naming function $n:2^{L}\times L\to{L}$.
	
	We assume $\emptinc$ to be symmetric, i.e. $\forall s,t\in L:s\in\inc{t}\iff t\in\inc{s}$. Additionally, we assume that $\inc{n(\langle\{t\},t\rangle)}=\emptyset.$
\end{definition}

The set of sentences can be any set of objects. All the structure we need to represent arguments is given by the incompatibility and the naming functions. Incompatibility is used to model conflict between sentences in that it associates each sentence with the set of sentences incompatible with it. It is a symmetric notion of contrariness, cf.\ \cite{Toni2014tutorial}. Intuitively, two incompatible sentences should not be accepted together. The naming function allows us to talk about arguments within the language. We represent arguments as a tuple of a set of sentences (the premises) and another sentence (the conclusion), see Definition \ref{def:args}. Thus, $n$ takes arguments and gives them names which express the claim that one can infer the conclusion from the premises, cf.\ \cite{Modgil2014aspic}. The main use of this is to define undercutting attacks (Definition \ref{def:SuppAtt}) using sentences that are incompatible with the name of an argument. Finally, the condition $\inc{n(\langle\{t\},t\rangle)}=\emptyset$ states that an argument that uses the same sentence as its single premise and as its conclusion (see Definition \ref{def:args}) cannot be undercut, as there are no circumstances where we cannot infer a sentence from itself. Note that we do not assume any logical structure or consequence relation on the set of sentences.

{
	\begin{example}\label{ex:lang}
		Let us continue Example \ref{ex:stradivarius} and consider a language with $L=\{Ale,Str,Exp,New,Cla,Ann,Die,Hil\}$, corresponding to: $Ale:$ ``Alex says this violin is a Stradivarius'', $Str:$ ``This violin is a Stradivarius'', $Exp:$ ``This violin is expensive'', $New:$ ``Clara is cited in a newspaper article mentioning that Anne-Sophie owns this violin'', $Cla:$ ``Clara says that Anne-Sophie owns this violin'', $Ann:$ ``Anne-Sophie owns this violin'', $Die:$ ``Diego says Hilary owns this violin'', and $Hil:$ ``Hilary owns this violin'', with $Ann\in\inc{Hil}$ and $Hil\in\inc{Ann}$. We can also add $n(\arg{Ale}{Str}):$ ``One can infer that this violin is a Stradivarius from Alex saying so''.
	\end{example}
}

As in abstract argumentation, we do not construct arguments. Rather, we take them as given, e.g.\ through a debate. This means that we do not need any logic or inference rules that determine from which sentences we can infer others. Formally, any combination of premises and conclusion could be an argument. In the following definition, we use $Prem$ as a function from arguments to sets of sentences in order to indicate the premises of an argument and $Conc$ as a function from arguments to sentences in order to indicate the conclusion of an argument. For an argument $a$, $Prem(a)$ is its set of sentences and $Conc(a)$ is its conclusion.

\begin{definition}[Arguments]\label{def:args}
	An argument in a language $\langg$ is a tuple $a=\l Prem(a), Conc(a)\r$ where 
	$Prem(a)$ is a non-empty finite subset of $L$ and $Conc(a)\in L.$
	
	We also define the set of sentences of an argument $a$ as $Sent(a):=Prem(a)\cup\{Conc(a)\}$. The set of sentences generalises to sets of arguments $Sent(E)=\bigcup_{a\in E}Sent(a)$.
	
	We say $a$ is a minimal argument for sentence $s\in L$ if $a=\langle\{s\},s\rangle$ (see e.g.\ \cite{Modgil2014aspic}).
\end{definition}

{
	\begin{example}\label{ex:args}
		In Example \ref{ex:lang}, we already saw one argument: $a_1:\arg{Ale}{Str}$, which corresponds to ``This violin is a Stradivarius since Alex says so''. The other arguments are: $a_2:\arg{Str}{Exp}$, $a_3:\arg{New}{Cla}$, $a_4:\arg{Cla}{Ann}$, and $a_5:\arg{Die}{Hil}$. An example of a minimal argument would be $\arg{Str}{Str}$, i.e.\ ``This violin is a Stradivarius since this violin is a Stradivarius''. It is an edge case of an argument as it does not contain any real inference step, but they are useful to represent single sentences in frameworks.
	\end{example}
}

Next, we define supports and attacks between arguments. Attacks are defined such that an argument attacks another argument if its conclusion is incompatible with some part of it \cite{Modgil2014aspic}. We call an attack on the name of an argument an undercut and we say that a set of arguments $E$ contains \textit{undercutting information} for an argument $a$ if $\uncuts{a}\cap Sent(E)\neq\emptyset$. Note that attacks are define only through the conclusions of arguments. Accordingly, a set of arguments can contain undercutting information for an argument without attacking it and an argument containing a premise incompatible with the premise of another argument does not necessarily attack it. For instance, there are no attacks between the arguments $a:\arg{r}{x}$ and $b:\arg{y}{z}$ with $r\in\uncuts{b}$ and $y\in\inc{r}$, since the conflicts occur only between their premises. However, $a$ contains undercutting information for $b$. While there are more fine-grained notions of attack in the literature \cite{Corsi2025attack}, they are often variations of the attacks defined here.

The support relation requires more explanation. We want to capture situations where accepting some set of arguments commits you to accepting another: If you accept an argument $a_1:\arg{s}{t}$ and there is an argument $a_2:\arg{t}{u}$, you should also accept $a_2$ since you accept all its premises. When it comes to arguments with more than one premise, only a set of arguments can potentially force its acceptance. For instance, accepting $a_1$ does not force acceptance of $a_3:\arg{t,v}{w}$. But if you also accept $a_4:\arg{v}{v}$, then you should accept $a_3$. Thus, we define support not as a binary relation between arguments but as a relation between sets of arguments and arguments. Hence, we capture a notion of \textit{premise support} \cite{Cohen2018characterization}.

But there are more situations where accepting some arguments can commit you to accepting another. Suppose there is also an argument $a_5:\arg{s}{r}$. Then, accepting $a_1$ means accepting all premises of $a_5$ and thus $a_5$ should be accepted as well. This is a kind of \textit{premise-sharing-support}, sometimes called \textit{exhaustion} \cite{Ulbricht2024nonflat}. In that sense, our notion of support combines premise-support and premise-sharing-support in order to capture all situations where accepting a set of arguments commits you to accept another one as well.
This leads to the following definition.

\begin{definition}[Support and Attack]\label{def:SuppAtt}
	Let $a,b$ be arguments in language $\lang$. 
	
	We say a set of arguments $E$ supports $a$ if $Prem(a)\subseteq Sent(E)$.
	
	We say that $a$ attacks $b$ if $Conc(a)\in \inc{s}$ for some $s\in Sent(b)$ or if $Conc(a)\in\inc{n(b)}$.
\end{definition}

\begin{example}
	Using the setting of Examples \ref{ex:stradivarius}, \ref{ex:lang} and \ref{ex:args},
	we have a support from $a_1:\arg{Ale}{Str}$ to $a_2:\arg{Str}{Exp}$ and a mutual rebut between $a_4:\arg{Cla}{Ann}$ and $a_5:\arg{Die}{Hil}$. 
\end{example}

The following definition collects all of this together.

\begin{definition}[Structured Bipolar Argumentation Framework]\label{def:sbaf2}
	A structured bipolar argumentation framework (SBAF) is a tuple $\sbaff$ where $A$ is a finite set of arguments in language $\lang$ and $\att,\supp$ are the corresponding attack and support relations.
\end{definition}

We write $a\supp b$ in case $(a,b)\in {\supp}$ and use the same notational conventions as with attacks. The intuitive running example is revisited in Example \ref{ex:full-stradivarius}. Here, we introduce a more technical example.

\begin{example}\label{ex:sbaf}
	The following example is based on $t\in\inc{r}$, $r\in\inc{t}$, $r\in\inc{n(a_6)}$, and $z\in\inc{p}$ (and accordingly $p\in\inc{z}$). Note that $a_5$ undercuts $a_6$ as it attacks its inference claim.
	
	{\centering
		\scalebox{0.95}{
			\begin{tikzpicture}[node distance=0.8cm and 0.6cm]
				
				\node (A1) {$a_1:\l\{s\},s\r$};
				\node (A2) [right=of A1] {$a_2:\l\{u\},v\r$};
				\node (A3) [right=of A2] {$a_3:\l\{w\},x\r$};
				\node (A4) [above=of A1] {$a_4:\l\{s\},t\r$};
				\node (A5) [above=of $(A2)!0.5!(A3)$, yshift=0.25cm] {$a_5:\l\{v,x\},r\r$};
				\node (A6) [right=of A5] {$a_6:\l\{y\},z\r$};
				\node (A7) [right=of A3] {$a_7:\l\{p\},q\r$};
				
				\coordinate (coordinate) at ([yshift=-0.6cm]A5.south);
				
				\draw [s-arrow] (A1) -- (A4);
				\draw [s-arrow] (A4) -- (A1);
				\draw [a-arrow] (A4) -- (A5);
				\draw [a-arrow] (A5) -- (A4);
				\draw [a-arrow] (A5) -- (A6);
				\draw [s-arrow] (A2) -- (coordinate) -- (A5);
				\draw [s-arrow] (A3) -- (coordinate) -- (A5);
				\draw [a-arrow] (A6) -- (A7);
				
			\end{tikzpicture}
		}
	}
\end{example}

It is useful to define (strongly) saturated frameworks. These are such that they contain minimal arguments for sentences that are incompatible with others. This ensures that all important relations between sentences are visible as relations between arguments.

\begin{definition}[Saturated SBAFs]
	An SBAF $\sbaf$ is called saturated (resp.\ strongly saturated) if $\forall s\in Sent(A)\ s.t.\ \exists t\in {Sent(A)}\cap{\overline{s}}$, there is a minimal argument for $s$ or (resp.\ and) for $t$ in $A$, and $\forall u\in Sent(A)\ s.t.\ u\in\overline{n(a)}$ for some $a\in A$, there is a minimal argument for $u$ in $A$.
\end{definition}

The SBAF in Example \ref{ex:sbaf} is not saturated, but could be made so by adding, for instance, $\arg{r}{r}$ and $\arg{z}{z}$. To make it strongly saturated, $\arg{t}{t}$ and $\arg{p}{p}$ would also have to be added.


\subsection{Semantics}\label{sect:sem}

We now develop semantics for SBAFs with two goals in mind: to allow rejecting certain defended arguments, and to evaluate both acceptable sets of arguments and acceptable sets of sentences. This dual perspective assists agents with different capabilities to communicate and resolve potential disagreements. That is, it allows agents that stores their knowledge in form of sentences to communicate and interact with agents that only see arguments.

For argument semantics, we find a middle ground between admissible and complete semantics. One is not forced to accept all defended arguments, but their rejection is limited by support. A simple way of doing this is to start with admissible semantics and to add a condition that makes sure that extensions are closed under support (cf.\ \cite{CayrolLagasquie-Schiex2013bipolarity,Ulbricht2024nonflat}). You should accept an argument not if it is defended, but if you already accept all its premises. 

However, there are cases where this condition may be relaxed. A first exception occurs in case you already accept undercutting information for the supported argument. In such a case, you accept that the inference in the argument does not hold and as such you can reject it, even if you accept all its premises. Another exception can be argued for in case you accept all premises of an undefended argument. Depending on how strongly one interprets the support relation, you might be forced to accept it or not. 

We give two semantics, based on whether the second exception is allowed. This gives a \textit{strong} and a \textit{weak} interpretation of support.

\begin{definition}[Coherent Argument Extensions]\label{def:coherent}
	A strongly coherent argument extension in an SBAF $\sbaff$ is an admissible extension $E\subseteq A$ that satisfies:
	\begin{description}
		\item[Strong Support-Closure:] $\forall a\in A:$ if $E$ supports $a$ and $E$ does not contain undercutting information for $a$, then $a\in E$.
	\end{description}
	
	A weakly coherent argument extension is an admissible extension $E$ that satisfies:
	\begin{description}
		\item[Weak Support-Closure:] $\forall a\in A:$ if $E$ supports $a$, $E$ does not contain undercutting information for $a$, and $E$ defends $a$, then $a\in E$.
	\end{description}
\end{definition}

\begin{observation}
	Strongly coherent extensions are weakly coherent.
\end{observation}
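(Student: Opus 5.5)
The argument simply unfolds Definition \ref{def:coherent}. Both notions require $E$ to be admissible, so only the closure conditions need comparing. The claim is that Strong Support-Closure implies Weak Support-Closure, because the weak condition has the stronger antecedent: it adds the clause ``$E$ defends $a$'' to the strong one.

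Concretely, I would let $E\subseteq A$ be a strongly coherent argument extension of an SBAF $\sbaff$. By definition $E$ is admissible, which is the first requirement for weak coherence. For Weak Support-Closure, I would fix an arbitrary $a\in A$ with three properties:
\begin{itemize}
\item $E$ supports $a$;
\item $E$ contains no undercutting information for $a$, i.e.\ $\uncuts{a}\cap Sent(E)=\emptyset$;
\item $E$ defends $a$.
\end{itemize}
Dropping the third property leaves exactly the hypotheses of Strong Support-Closure for $a$. Since $E$ satisfies that condition, $a\in E$. As $a$ was arbitrary, $E$ satisfies Weak Support-Closure and is therefore weakly coherent.

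There is no real obstacle here; this is a pure weakening of hypotheses. The only care needed is to check that both notions are evaluated in the same framework, with the same support relation (Definition \ref{def:SuppAtt}) and the same notion of undercutting information. Both are fixed by $\sbaf$ independently of which closure condition is imposed.
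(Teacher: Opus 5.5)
Your proof is correct. Both definitions require admissibility, and Weak Support-Closure only adds the hypothesis that $E$ defends $a$, so Strong Support-Closure implies it directly; the paper records this as an Observation with no written proof because it follows immediately from Definition \ref{def:coherent}, exactly as you argue.
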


\begin{example}\label{ex:coherent}
	Consider the SBAF of Example \ref{ex:sbaf}. The extension $\{a_1, a_2, a_3, a_4, a_6\}$ is weakly coherent, but not strongly so. According to strong coherence, it is not possible to accept all of $a_1,a_2$, and $a_3$ together since strong support-closure forces acceptance of $a_4$ and $a_5$, violating admissibility.
\end{example}

The difference between strong and weak coherence lies in different answers to the question whether support can force you to accept undefended arguments. Strong coherence says ``yes'', weak coherence says ``no''. Weak coherence allows you to reject an argument if it would violate defence, even if you accept all its premises. In some sense, it allows you to infer that the inference of the argument must be faulty without having explicit undercutting information. Strong coherence, in contrast, requires explicit undercutting information to reject an inference. Without it, the inference is assumed to work.

A more formal way to bring out the difference between strong and weak coherence is the property of directionality \cite{BaronGiacomin2007principle,Yu2023principle}: Arguments should only affect the acceptability of other arguments if they are connected via a directed path through support and attack. This is true for weak coherence, but not for strong coherence.

\begin{description}
	\item[Directionality:] Let $\sbaf$ be an SBAF, $U\subseteq A$ be such that ${A\backslash U}\nsupp {U}$ and $A\backslash U\not\att U$ and define $\sbaf_{|U}=\l\lang, U,{\att}\cap(U\times U),{\supp}\cap(U\times U)\r$. A semantics $\sigma$ satisfies \textit{directionality} if an extension $E\subseteq U$ is acceptable according to $\sigma$ in $\sbaf_{|U}$ iff there exists a $\sigma$-acceptable extension $E'\subseteq A$ in $\sbaf$ such that $E=E'\cap U$.
\end{description}

\begin{proposition}\label{prop:support-properties}
	Strong coherence fails directionality.
	
	Weak coherence satisfies directionality.
\end{proposition}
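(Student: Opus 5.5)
For the negative claim, I will build a small SBAF in which strong support-closure forces an undefended argument outside a directionally closed part $U$, so that an extension that is fine inside $U$ cannot be extended. The one trap is that support relates sets to arguments and also counts shared premises. An argument $b$ with $Prem(a)\subseteq Sent(b)$ would therefore support $a$ back, which would break the hypothesis ${A\backslash U}\nsupp U$.

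So I take distinct sentences $s,u,t,r,q$ with $q\in\inc{t}$ and no other incompatibilities, and the arguments
\[
a:\l\{s\},u\r,\qquad b:\l\{u\},t\r,\qquad c:\l\{r\},q\r .
\]
Let $U=\{a\}$.

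I first check that nothing in $A\backslash U$ touches $U$. We have $s\notin Sent(\{b,c\})$, so no subset of $A\backslash U$ supports $a$. Also $q$ is incompatible with neither $s$ nor $u$, so $c$ does not attack $a$.

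In $\sbaf_{|U}$, the set $\{a\}$ is strongly coherent. In $\sbaf$, however, any extension containing $a$ supports $b$ and contains no undercutting information for $b$, so strong support-closure forces $b$ in. But $c$ attacks $b$ and $c$ is itself unattacked, so no such extension is admissible. Hence no strongly coherent $E'$ satisfies $E'\cap U=\{a\}$, and directionality fails.

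For weak coherence, I prove both directions.

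($\Leftarrow$) Let $E'$ be weakly coherent in $\sbaf$ and put $E=E'\cap U$. Conflict-freeness is inherited. For defence, every attacker of an argument of $E$ lies in $U$. The counter-attacker in $E'$ must then also lie in $U$, because $A\backslash U\not\att U$. For weak support-closure, take $a\in U$ that is supported by $E$, defended by $E$ in $\sbaf_{|U}$, and not undercut by $E$. Then $a$ is supported by $E'$ and defended by $E'$, since all attackers of $a$ lie in $U$. So $a\in E'$, provided $E'\backslash U$ contributes no undercutting information for $a$.

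($\Rightarrow$) Let $E$ be weakly coherent in $\sbaf_{|U}$. The set $E$ is admissible in $\sbaf$, because attackers of $U$-arguments lie in $U$. Among the admissible sets $E\cup G$ with $G\subseteq A\backslash U$, I choose one, $E'$, that is $\subseteq$-maximal. I check weak support-closure for it. Suppose some $a$ is supported by $E'$, defended by $E'$ and not undercut by $E'$, yet $a\notin E'$. Then $E'\cup\{a\}$ is still conflict-free. If $e\in E'$ attacked $a$, defence would give $E'\att e$, contradicting conflict-freeness. If $a$ attacked $e\in E'$, then $E'\att a$ because $E'$ defends $e$, and we reach the same contradiction. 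So $E'\cup\{a\}$ is admissible. If $a\in A\backslash U$, this contradicts maximality. If $a\in U$, then $a$ is defended by $E$, since its attackers lie in $U$ and so do their counter-attackers. It is also supported by $E$ because ${A\backslash U}\nsupp U$, so $a\in E$ by the closure of $E$. In both cases $E'\cap U=E$.

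The step I expect to be the main obstacle is how the set-based hypothesis ${A\backslash U}\nsupp U$ interacts with mixed sets and with undercutting information. In ($\Rightarrow$) I need: if $E'$ supports $a\in U$ then $E$ already does. I will read the hypothesis as saying that sentences of arguments outside $U$ never complete the premises of an argument in $U$.

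In ($\Leftarrow$) I need that undercutting information for $a\in U$ coming from $E'\backslash U$ cannot block closure. The way I would try to settle this first is to show it reduces to an attack or support edge from $A\backslash U$ into $U$. If it does not reduce in general, I would argue it in the intended reading of the hypothesis, or restrict to saturated SBAFs. There, undercutting sentences have minimal arguments, and those minimal arguments would attack $a$.
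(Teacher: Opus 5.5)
\textbf{Negative claim.} Your counterexample for strong coherence does not work. Incompatibility is symmetric, and an attack may target any sentence of an argument, including its conclusion. So $q\in\inc{t}$ gives $b\att c$ as well as $c\att b$, because $Conc(b)=t\in\inc{q}$ and $q\in Sent(c)$. Hence $c$ is not unattacked. The set $\{a,b\}$ is conflict-free and defends $b$ against $c$ by means of $b$ itself. It supports nothing outside itself, since $Prem(c)=\{r\}$. So $\{a,b\}$ is strongly coherent in $\sbaf$ with $\{a,b\}\cap U=\{a\}$, and your $U$ witnesses no failure. Rebuttals on conclusions are always returned, so a forced argument cannot be knocked out by a rebutter alone.

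The paper's counterexample lets $a_1:\arg{s,t,u}{v}$ support two arguments $a_2:\arg{s}{r}$ and $a_3:\arg{t}{\neg r}$ that rebut each other. Strong support-closure then breaks conflict-freeness rather than defence. The extra premise $u$ keeps $\{a_2,a_3\}$ from supporting $a_1$. A single self-undercutting forced argument would also do: keep your $a$, drop $c$, and take $b:\arg{u}{t}$ with $t\in\inc{n(b)}$ as the only incompatibility.

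\textbf{Positive claim.} The worry you flag in ($\Leftarrow$) cannot be discharged from the stated hypotheses. Undercutting information can sit in a \emph{premise} of an outside argument, and that creates neither an attack nor a support into $U$. Let $U=\{e,a\}$ with $e:\arg{p}{x}$ and $a:\arg{x}{y}$. Let $g:\arg{t}{w}$ be the only outside argument, and let $t\in\inc{n(a)}$ be the only incompatibility. There are no attacks, and no subset of $A\backslash U$ supports $e$ or $a$. Now $\{e,g\}$ is weakly coherent in $\sbaf$: it supports $a$ but contains $t$. But $\{e\}$ is not weakly coherent in $\sbaf_{|U}$: it supports and defends $a$ and contains no undercutting information. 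So the weak half of the proposition is false as stated. The paper's ``clearly satisfies weak support-closure'' skips exactly this case.

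Your saturation fallback does repair it, but you must carry the step out:
\begin{enumerate}
\item The minimal argument $m=\arg{t}{t}$ attacks $a\in U$, so $m\in U$ because $A\backslash U\not\att U$.
\item Any attacker of $m$ also attacks the argument of $E'$ containing $t$, so $E'$ defends $m$.
\item Since $\inc{n(m)}=\emptyset$, weak support-closure gives $m\in E'\cap U=E$.
\item Hence $E$ already contains the undercutting sentence $t$.
\end{enumerate}

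Your strengthened reading of ${A\backslash U}\nsupp U$ is also genuinely necessary, even in saturated SBAFs. Take $e:\arg{p}{x}$ and $a:\arg{p,z}{y}$ in $U$, $g:\arg{x}{z}$ outside, and no incompatibilities, so the framework is trivially saturated. No subset of $A\backslash U$ supports an argument of $U$, and $\{e\}$ is weakly coherent in $\sbaf_{|U}$. Yet every weakly coherent extension of $\sbaf$ containing $e$ must contain $g$, and then $a$.

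With both amendments (saturation, and the mixed-set reading of the support hypothesis), your ($\Rightarrow$) construction via a $\subseteq$-maximal admissible $E\cup G$ is sound. It also differs usefully from the paper's appeal to the closure of $E$ under $R^{Sent(E)}_{\sbaf}$. Read literally, that closure keeps the sentence set fixed at $Sent(E)$, so it need not be support-closed once it adds arguments with new conclusions.
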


What does a semantics from the language perspective look like? If we are interested in acceptable sets of sentences, we first require that they do not contain incompatible sentences. We call a set of sentences $S$ \textit{compatible} if $\forall s,t\in S:s\not\in\inc{t}$. But apart from this condition, it is the arguments that limit the choice of sentences we can accept---this is the point of arguing. For this we need to know to which arguments you are committed to when accepting some sentences and vice-versa.

Given an argument extension, we can simply say that the corresponding language extension consists of all sentences of accepted arguments. When we start with a language extension, however, it is less clear how the accepted sentences translate to accepted arguments. For instance, perhaps it is possible to accept all premises and the conclusion of an argument without accepting the argument itself---one can simply disbelief that one can infer the conclusion from the premises. We define two ways of translating from language extensions to argument extensions that are analogous to strong and weak coherence.

The first way, similar to strong coherence, is to count all arguments as accepted of which the premises are accepted and there is no undercutting information. This gives the \textit{strong argument set}. Starting with a set of accepted sentences, it collects together all arguments that should be accepted according to the strong interpretation of support.

\begin{definition}[Strong Argument Set]\label{def:SargSet}
	Given a set of sentences $S$ in $\sbaff$, we define its strong argument set $Arg_s(S):=\{a\in A\ |\ Prem(a)\subseteq S\text{ and }{\uncuts{a}}\cap S=\emptyset\}.$
\end{definition}

Analogously, we can define a \textit{weak argument set}. Given a set of sentences $S$ in an SBAF $\sbaf$, we define its \textit{weak argument set} which collects together all the arguments in $\sbaf$ which should be accepted according to the weak interpretation of support. As we only require accepting defended arguments, we define it using a fixpoint construction analogous to that in abstract argumentation for complete extensions \cite{Dung1995acceptability, Baroni2018abstract}. We first define the analogue of the characteristic function in abstract argumentation.

\begin{definition}[Characteristic Function]\label{def:RespFunct}
	Given an SBAF $\sbaff$ and a set of sentences $S\subseteq Sent(A)$, we define the characteristic function $R^S_{\sbaf}:2^A\to2^A$ as
	$$R^S_{\sbaf}(E):=\{a\in A\ |\ a\in\sarg(S)\text{ and $E$ defends $a$}\}.$$
\end{definition}

With this, we can define the \textit{weak argument set}. 

\begin{definition}[Weak Argument Set]\label{def:WargSet}
	If $S$ is compatible, then its initial set, $Init(S)$, is defined as the largest admissible subset of $\{a\in A\ |\ Sent(a)\subseteq S\text{ and }\uncuts{a}\cap S=\emptyset\}.$ The weak argument set of a compatible $S$, $Arg_w(S)$, is the least fixpoint of $R^S_\sbaf$ containing $Init(S)$.
\end{definition}

We need to make sure that the weak argument set contains $Init(S)$, as otherwise the least fixpoint of $R^S_{\sbaf}$ would sometimes be too small. For instance, in case of two arguments $a_1:\arg{s}{t}$ and $a_2:\arg{u}{u}$ with $u\in\inc{t}$ and $t\in\inc{u}$, the least fixpoint for the language extension $S=\{s,t\}$ would be empty. However, we want it to contain $a_1$ as all its sentences are accepted and it defends itself.

\begin{proposition}\label{prop:WargWD}
	The weak argument set is well-defined.
\end{proposition}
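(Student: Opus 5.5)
Two things must be shown: that $Init(S)$ exists as a unique object, and that $R^S_{\sbaf}$ has a least fixpoint containing $Init(S)$.

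\textbf{Step 1: $Init(S)$ is well defined.} Let $B=\{a\in A\mid Sent(a)\subseteq S\text{ and }\uncuts{a}\cap S=\emptyset\}$. Since $S$ is compatible, $B$ is conflict-free. Indeed, if $a\att b$ with $a,b\in B$, then either $Conc(a)\in\inc{s}$ for some $s\in Sent(b)\subseteq S$, which contradicts compatibility as $Conc(a)\in S$, or $Conc(a)\in\uncuts{b}\cap S$, which contradicts $b\in B$. Every subset of $B$ is therefore conflict-free. The union of admissible subsets of $B$ is then conflict-free, and it defends each of its arguments, since every member is defended by its own admissible subset and defence is monotone in the defending set. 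Since $A$ is finite and $\emptyset$ is admissible, this union is the unique largest admissible subset of $B$, so $Init(S)$ is well defined.

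\textbf{Step 2: $R^S_{\sbaf}$ is monotone.} If $E\subseteq E'$ and $E$ defends $a$, then $E'$ defends $a$. Hence $R^S_{\sbaf}(E)\subseteq R^S_{\sbaf}(E')$.

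\textbf{Step 3: $Init(S)$ is a post-fixpoint.} Every $a\in Init(S)$ lies in $\sarg(S)$, because $Prem(a)\subseteq Sent(a)\subseteq S$ and $\uncuts{a}\cap S=\emptyset$. Moreover, $Init(S)$ defends $a$ because $Init(S)$ is admissible. So $Init(S)\subseteq R^S_{\sbaf}(Init(S))$.

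\textbf{Step 4: the least fixpoint above $Init(S)$ exists.} Consider the complete lattice $\{E\subseteq A\mid Init(S)\subseteq E\}$. By Steps 2 and 3, $R^S_{\sbaf}$ maps this lattice into itself: if $Init(S)\subseteq E$, then $Init(S)\subseteq R^S_{\sbaf}(Init(S))\subseteq R^S_{\sbaf}(E)$. Knaster--Tarski then gives a least fixpoint in this lattice, which is exactly the least fixpoint of $R^S_{\sbaf}$ containing $Init(S)$. Concretely, since $A$ is finite, the iteration $E_0=Init(S)$, $E_{i+1}=R^S_{\sbaf}(E_i)$ is increasing and stabilises. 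Its limit lies below every fixpoint $F\supseteq Init(S)$, by induction using monotonicity.

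The main obstacle is Step 1, which needs compatibility of $S$ to make $B$ conflict-free. It is worth noting, though not required for well-definedness, that the resulting fixpoint need not be conflict-free in general. This is because $\sarg(S)$ only restricts premises, so conclusions may conflict.
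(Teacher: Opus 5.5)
Your proof is correct and follows the paper's route: uniqueness of $Init(S)$ via conflict-freeness of the candidate set and closure of admissibility under unions, monotonicity of $R^S_{\sbaf}$, and Knaster--Tarski on $\{E\subseteq A\mid Init(S)\subseteq E\}$. Your Step~3 even supplies the check $Init(S)\subseteq R^S_{\sbaf}(Init(S))$, which is needed for $R^S_{\sbaf}$ to map that lattice into itself and which the paper only establishes later in Lemma~\ref{lem:R-pres-adm}.

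Your closing aside is false, however, although it does not affect the proof. Each iterate $R^S_{\sbaf}(E_i)$ contains $E_i$ and consists of arguments defended by the conflict-free set $E_i$, so it is admissible. Hence $\warg(S)$ is admissible (Proposition~\ref{prop:SargCFWargAdm}), and two members with incompatible conclusions would in fact attack each other.
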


{ One motivation for introducing language extensions is that agents may find it easier to judge which sentences they accept than which arguments. The formal complexity of defining the weak argument set confirms this intuition. Further, it allows us explain why certain arguments are accepted or rejected by reference to their underlying sentences.} 

\begin{example}\label{ex:ArgSet}
	Consider again the SBAF of Example \ref{ex:sbaf} and take the language extension $S=\{s,t,u,v,w,x,y\}$. Its strong argument set is $\sarg(S)=\{a_1,a_2,a_3,a_4,a_5,a_6\}$, which is not conflict-free. For the weak argument set, we have: $Init(S)=\{a_1,a_2,a_3,a_4\}$ and $\warg(S)=\{a_1,a_2,a_3,a_4,a_6\}$.
	
	Consider also $S'=\{s,u,v,w,x,r\}$ with $\warg(S')=\{a_1,a_2,a_3,a_5\}$. While both $S$ and $S'$ end up accepting all of $a_1$, $a_2$, and $a_3$, thus supporting both $a_4$ and $a_5$, they differ in which of them they accept. On the argument level, the situation looks symmetrical, but using the sentence perspective, we can explain this difference through the prior commitment of $S$ to the conclusion of $a_4$ and the prior commitment of $S'$ to that of $a_5$.
\end{example}

\begin{proposition}\label{prop:SargCFWargAdm}
	For a compatible language extension $S$, $\warg(S)$ is admissible.
\end{proposition}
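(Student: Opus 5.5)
The plan is to make the least fixpoint in Definition \ref{def:WargSet} explicit as the limit of an increasing iteration starting at $Init(S)$, and then show by induction along the iteration that every stage is admissible. Defence of the limit then follows from its being a fixpoint. Conflict-freeness follows from the standard argument in abstract argumentation that defended arguments cannot attack each other when the defending set is conflict-free.

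First I fix the iteration. The map $R^S_{\sbaf}$ is monotone: if $E\subseteq E'$ and $E$ defends $a$, then $E'$ defends $a$. Set $E_0:=Init(S)$ and $E_{i+1}:=R^S_{\sbaf}(E_i)$.

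\begin{itemize}
\item \emph{Base inclusion.} I check that $E_0\subseteq E_1$. Every $a\in Init(S)$ has $Sent(a)\subseteq S$, hence $Prem(a)\subseteq S$, and $\uncuts{a}\cap S=\emptyset$, so $a\in\sarg(S)$. Since $Init(S)$ is admissible, it defends $a$, so $a\in R^S_{\sbaf}(E_0)$.
\item \emph{Chain.} By monotonicity the $E_i$ form an increasing chain. As $A$ is finite, the chain stabilises at some $E_k=R^S_{\sbaf}(E_k)\supseteq Init(S)$.
\item \emph{Leastness.} Any fixpoint $F\supseteq Init(S)$ satisfies $E_i\subseteq F$ for all $i$, by induction and monotonicity. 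Hence $E_k=\warg(S)$; this is presumably the content of Proposition \ref{prop:WargWD}, which may be assumed.
\end{itemize}

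Defence of $\warg(S)$ is then immediate: $\warg(S)=R^S_{\sbaf}(\warg(S))$, so by definition of $R^S_{\sbaf}$ every member is defended by $\warg(S)$.

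For conflict-freeness, I prove by induction that each $E_i$ is conflict-free. The base case holds because $Init(S)$ is admissible. For the step, assume $E_i$ is conflict-free and suppose $a,b\in E_{i+1}$ with $a\att b$.

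\begin{itemize}
\item Since $E_i$ defends $b$, there is $c\in E_i$ with $c\att a$.
\item Since $E_i$ also defends $a$, there is $d\in E_i$ with $d\att c$.
\item Then $c,d\in E_i$ are in conflict, which contradicts the induction hypothesis.
\end{itemize}

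Hence $E_k=\warg(S)$ is conflict-free, and together with defence it is admissible.

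The main obstacle is making sure the chain really starts below its successor, $Init(S)\subseteq R^S_{\sbaf}(Init(S))$, and that it coincides with the least fixpoint containing $Init(S)$. This is where the relationship between $Init(S)$ and $\sarg(S)$, and the admissibility of $Init(S)$, is used. Compatibility of $S$ is needed only so that $Init(S)$ and $\warg(S)$ are defined. Once the iteration is in place, the rest is the familiar argument for complete extensions.
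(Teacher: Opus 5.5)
Your proof is correct and follows essentially the same route as the paper. The paper also builds $\warg(S)$ as the limit $\bigcup_i R^S_i(Init(S))$ of the increasing iteration from $Init(S)$, identifies it with the least fixpoint, and shows admissibility because $R^S$ preserves admissibility of sets of $\sarg(S)$-arguments; the only cosmetic difference is that you read defence off the fixpoint equation and prove conflict-freeness with the explicit two-step argument (needing only conflict-freeness of the previous stage), whereas the paper cites the standard fact that arguments defended by an admissible set are conflict-free.
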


We can now specify which sets of sentences should be accepted in an SBAF. We again get a strong and a weak semantics, depending on whether we take language extensions to commit to their strong or their weak argument set. For both, we require language extensions to include conclusions of accepted arguments. Further, we know from Proposition \ref{prop:SargCFWargAdm} that weak argument sets are defended, but for strong argument sets, we have to require it explicitly.

\begin{definition}[Adequate Language Extensions]\label{def:adeqExt}
	A strongly adequate language extension in an SBAF $\sbaff$ is a compatible set of sentences $S\subseteq Sent(A)$ such that $Arg_s(S)$ defends all its arguments and it satisfies:
	\begin{description}
		\item[Sentence-Closure:] $\forall a\in Arg_s(S):Sent(a)\subseteq S$.
	\end{description}
	A weakly adequate language extension $S\subseteq Sent(A)$ is a compatible set of sentences that satisfies sentence-closure w.r.t.\ $Arg_w(S)$.
\end{definition}

\begin{example}\label{ex:adequate}
	As seen in Example \ref{ex:ArgSet}, $S=\{s,t,u,v,w,x,y\}$ is not strongly adequate, as sentence-closure would require adding $r$ to it, violating compatibility. However, it is weakly adequate, as $\warg(S)$ does not include $a_5$. In contrast, $S'=\{u,v,w,x,r,y,z\}$ has $\sarg(S')=\{a_2,a_3,a_5\}$ and is strongly adequate. Note that $S'$ contains undercutting information for $a_6$ (namely $r$) and does not have to accept it. 
\end{example}

\begin{proposition}\label{prop:strongAdeqAreWeak}
	Strongly adequate language extensions are also weakly adequate.
\end{proposition}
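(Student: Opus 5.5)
The plan is to reduce the claim to a single inclusion, namely $\warg(S)\subseteq\sarg(S)$ for every compatible $S$. Once this holds, the statement is immediate. Let $S$ be strongly adequate. Then $S\subseteq Sent(A)$ and $S$ is compatible, so by Proposition~\ref{prop:WargWD} the weak argument set $\warg(S)$ is well-defined. For any $a\in\warg(S)$ we get $a\in\sarg(S)$ from the inclusion, and sentence-closure with respect to $\sarg(S)$ then yields $Sent(a)\subseteq S$. This is exactly sentence-closure with respect to $\warg(S)$, so $S$ is weakly adequate. The defence requirement on $\sarg(S)$ in strong adequacy is not needed here; weak adequacy asks for no defence condition, and Proposition~\ref{prop:SargCFWargAdm} supplies admissibility of $\warg(S)$ anyway.

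To prove the inclusion, I would use the fixpoint characterisation rather than any iteration. By Definition~\ref{def:WargSet}, $\warg(S)$ is a fixpoint of $R^S_\sbaf$, so $\warg(S)=R^S_\sbaf(\warg(S))$. By Definition~\ref{def:RespFunct}, every output of $R^S_\sbaf$ is a subset of $\sarg(S)$, since membership requires $a\in\sarg(S)$. Hence $\warg(S)\subseteq\sarg(S)$ directly.

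As a consistency check, and for readers who view the fixpoint as built up from $Init(S)$, I would also note that $Init(S)\subseteq\sarg(S)$. Every element $a$ of $Init(S)$ satisfies $Sent(a)\subseteq S$, hence $Prem(a)\subseteq S$, and it also satisfies $\uncuts{a}\cap S=\emptyset$. Both defining conditions of $\sarg(S)$ in Definition~\ref{def:SargSet} therefore hold. So no argument outside $\sarg(S)$ can enter the weak argument set at any stage.

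I do not expect a real obstacle. The only delicate point is that $\warg(S)$ is defined only for compatible $S$ and relies on Proposition~\ref{prop:WargWD} for existence. Compatibility is part of the definition of strong adequacy, so this is covered.
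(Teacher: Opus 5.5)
Your proof is correct, and it is a shorter version of the paper's argument.

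The paper proves the stronger fact that $\sarg(S)=\warg(S)$ for every strongly adequate $S$:
\begin{itemize}
\item For $\warg(S)\subseteq\sarg(S)$, it notes $Init(S)\subseteq\sarg(S)$ and $R^S(\sarg(S))\subseteq\sarg(S)$. By monotonicity of $R^S$, the least fixpoint above $Init(S)$ then lies below this pre-fixpoint.
\item For $\sarg(S)\subseteq\warg(S)$, it uses sentence-closure to get $Sent(a)\subseteq S$ for every $a\in\sarg(S)$. It then uses admissibility of $\sarg(S)$ to conclude $\sarg(S)\subseteq Init(S)\subseteq\warg(S)$. Here conflict-freeness comes from compatibility and the undercut condition, and defence comes from the definition of strong adequacy.
\end{itemize}

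You need only the first inclusion. You get it more directly from the fixpoint equation $\warg(S)=R^S(\warg(S))$ together with the fact that the range of $R^S$ lies inside $\sarg(S)$, so monotonicity is not needed.

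Your route is shorter. It also shows that the defence clause of strong adequacy plays no role in this implication: any compatible $S\subseteq Sent(A)$ that is sentence-closed with respect to $\sarg(S)$ is already weakly adequate.

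The paper's route gives more information: a strongly adequate extension commits to exactly the same arguments under the strong and the weak reading. That equality is where the defence clause is used, and it can fail without it. For example, take $S=\{t\}$ with an unattacked argument $\l\{u\},v\r$, $v\in\inc{t}$, attacking $\l\{t\},t\r$. Then $\sarg(S)=\{\l\{t\},t\r\}$, while $\warg(S)=\emptyset$.
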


{
	\begin{example}\label{ex:full-stradivarius}
		We can now fully analyse Example \ref{ex:stradivarius}. The language is provided in Example \ref{ex:lang}. We get the following SBAF.
		\[\begin{tikzpicture}[node distance=0.5cm]
			\node (A1) {$a_1:\arg{Ale}{Str}$};
			\node (A2) [right=of A1] {$a_2:\arg{Str}{Exp}$};
			\node (A3) [below=of A1,yshift=0.4cm] {$a_3:\arg{New}{Cla}$};
			\node (A4) [right=of A3] {$a_4:\arg{Cla}{Ann}$};
			\node (A5) [right=of A4] {$a_5:\arg{Die}{Hil}$};
			
			\draw [s-arrow] (A1) -- (A2);
			\draw [s-arrow] (A3) -- (A4);
			\draw [a-arrow] (A4) -- (A5);
			\draw [a-arrow] (A5) -- (A4);
		\end{tikzpicture}\]
		
		We can see that $\{a_3\}$ is weakly, but not strongly coherent, and that $\{Str,Exp,Cla,Hil\}$ is weakly but not strongly adequate. This is because a strongly adequate extension would have to add $Ann$, since $a_4$ is in its strong argument set, but then compatibility would be violated. In this example, strong coherence coincides with d-admissibility (Section \ref{sect:bip} expands on this observation). But weak coherence captures novel dynamics of doubt in argumentation.
		
	\end{example}
}

Adequate language extensions and coherent argument extensions capture similar ideas. But, as the following example illustrates, there are SBAFs where the two notions come apart.

\begin{example}
	Consider an SBAF with two arguments $a_1:\arg{s}{t}$ and $a_2:\arg{u}{v}$ with $s\in\inc{u}$. Then, $\{a_1,a_2\}$ is a strongly (and thus weakly) coherent argument extension, but its set of sentences is not compatible and hence neither strongly nor weakly adequate. 
\end{example}

Nevertheless, we can show that for a large class of SBAFs, there is a direct correspondence between adequate language extensions and coherent argument extensions. { This confirms that even when we are interested in whether an agent accepts an acceptable argument extension, we can evaluate directly their accepted set of sentences.}

\begin{proposition}\label{prop:strong-correspondence}
	Let $\sbaff$ be a saturated SBAF. 
	Then for every strongly adequate language extension $S$, its strong argument set $Arg_s(S)$ is a strongly coherent argument extension.
	
	Also, for every strongly coherent argument extension $E$, its set of sentences $Sent(E)$ is a strongly adequate language extension.
\end{proposition}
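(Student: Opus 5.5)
The plan is to prove both directions directly from the definitions. Saturation is used in one specific way throughout. Whenever a sentence $u\in Sent(A)$ is incompatible with another sentence or with the name of an argument, saturation provides a minimal argument $m=\l\{u\},u\r\in A$ (in the first case possibly for the other sentence instead). Since $\inc{n(m)}=\emptyset$ by Definition \ref{def:SBAFlanguage2}, $m$ has no undercutting information. So $m$ is forced into any strong argument set or strongly support-closed extension that contains $u$ among its sentences. I expect the main obstacle to be exactly the places where a sentence occurs in a set but not visibly as part of an accepted argument; this device is how I would handle them.

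For the first claim, let $S$ be strongly adequate and $E=Arg_s(S)$. Defence of all arguments in $E$ is assumed, so for admissibility I only need conflict-freeness. Suppose $a,b\in E$ and $a\att b$. By sentence-closure, $Sent(a)\cup Sent(b)\subseteq S$. If $Conc(a)\in\inc{s}$ for some $s\in Sent(b)$, this contradicts compatibility of $S$. If instead $Conc(a)\in\inc{n(b)}$, then $\inc{n(b)}\cap S\neq\emptyset$, contradicting $b\in Arg_s(S)$.

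Next I check strong support-closure. Suppose $E$ supports $a$ and $\inc{n(a)}\cap Sent(E)=\emptyset$. Then $Prem(a)\subseteq Sent(E)\subseteq S$, using sentence-closure for the second inclusion. It remains to show $\inc{n(a)}\cap S=\emptyset$. Suppose $u\in\inc{n(a)}\cap S$. Since $S\subseteq Sent(A)$, saturation gives $m=\l\{u\},u\r\in A$. This $m$ satisfies $Prem(m)\subseteq S$ and has empty undercut set, so $m\in Arg_s(S)=E$. Hence $u\in Sent(E)$, a contradiction. So $a\in Arg_s(S)=E$.

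For the second claim, let $E$ be strongly coherent and $S=Sent(E)\subseteq Sent(A)$.

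Compatibility. Suppose $s,t\in S$ with $s\in\inc{t}$. By saturation there is a minimal argument for $s$ or for $t$; by symmetry of incompatibility, say it is $m=\l\{s\},s\r$. Then $E$ supports $m$ and there is no undercutting information for it, so $m\in E$ by strong support-closure. Pick $b\in E$ with $t\in Sent(b)$. Then $m\att b$, contradicting conflict-freeness of $E$.

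$Arg_s(S)=E$. For $Arg_s(S)\subseteq E$: any $a\in Arg_s(S)$ is supported by $E$ and $E$ contains no undercutting information for $a$, so strong support-closure gives $a\in E$. For $E\subseteq Arg_s(S)$: for $a\in E$ we have $Prem(a)\subseteq S$ trivially. Suppose $u\in\inc{n(a)}\cap S$. Saturation gives $\l\{u\},u\r\in A$, which lies in $E$ by support-closure as before. It undercuts $a$, contradicting conflict-freeness.

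Given $Arg_s(S)=E$, defence of all arguments of $Arg_s(S)$ follows from admissibility of $E$. Sentence-closure holds trivially, since $Sent(a)\subseteq Sent(E)=S$ for every $a\in E$. Hence $S$ is strongly adequate.
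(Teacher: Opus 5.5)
Your proof is correct and takes essentially the same route as the paper's. For the first claim, you split conflict-freeness of $Arg_s(S)$ into the rebut case (sentence-closure plus compatibility) and the undercut case, and you obtain strong support-closure from the saturation-provided minimal argument $\langle\{u\},u\rangle$, whose undercut set is empty. For the converse, you get compatibility by forcing a minimal argument into $E$, then show $Arg_s(Sent(E))=E$, and you are slightly more careful than the paper in checking $u\in Sent(A)$ before invoking saturation.
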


\begin{proposition}\label{prop:weak-correspondence}
	Let $\sbaff$ be a saturated SBAF.
	Then for every weakly adequate language extension $S$, its weak argument set $\warg(S)$ is a weakly coherent argument extension.
	
	Also, for every weakly coherent argument extension $E$, its set of sentences $Sent(E)$ is a weakly adequate language extension.
\end{proposition}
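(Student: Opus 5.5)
We prove the two directions separately. Saturation does the same job in both: whenever a sentence in the extension is incompatible with another sentence or with the name of an argument, there is a minimal argument $\arg{u}{u}$ in $A$. Since $\inc{n(\arg{u}{u})}=\emptyset$, such a minimal argument can only be attacked by rebutting it, and its conclusion $u$ is a sentence of whichever extension contains it. This lets us turn sentence-level conflicts into argument-level attacks and back.

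\emph{First direction.} Let $S$ be weakly adequate and $E=\warg(S)$. By Proposition \ref{prop:SargCFWargAdm}, $E$ is admissible, so only weak support-closure is left. Take $a$ such that $E$ supports $a$, $E$ contains no undercutting information for $a$, and $E$ defends $a$. Sentence-closure gives $Sent(E)\subseteq S$, hence $Prem(a)\subseteq S$.

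Next we show $\uncuts{a}\cap S=\emptyset$. Suppose instead that $u\in\uncuts{a}\cap S$. Saturation yields $m_u=\arg{u}{u}\in A$, and $m_u$ attacks $a$. Since $E$ defends $a$, some $c\in E$ attacks $m_u$, i.e.\ $Conc(c)\in\inc{u}$. But $Conc(c)\in S$ by sentence-closure and $u\in S$, which contradicts compatibility of $S$.

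So $a\in\sarg(S)$ and $E$ defends $a$, i.e.\ $a\in R^S_\sbaf(E)$. Since $E$ is a fixpoint of $R^S_\sbaf$, we get $a\in E$.

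\emph{Second direction.} Let $E$ be weakly coherent and $S=Sent(E)$.

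\textbf{Compatibility.} Suppose $s,t\in S$ with $s\in\inc{t}$, where $t\in Sent(b)$ and $s\in Sent(d)$ for some $b,d\in E$. Saturation gives a minimal argument for one of them; say $m_s=\arg{s}{s}$ (the other case is symmetric). Then $m_s$ attacks $b$. By admissibility some $c\in E$ attacks $m_s$, so $Conc(c)\in\inc{s}$. Hence $c$ attacks $d$, which contradicts conflict-freeness of $E$.

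\textbf{No undercutting information inside $E$.} The same argument shows that no $a\in E$ has $\uncuts{a}\cap S\neq\emptyset$. If $u\in\uncuts{a}\cap S$, then $m_u$ exists and attacks $a$. Some $c\in E$ must rebut $m_u$, so $Conc(c)\in\inc{u}$, and this clashes with the argument of $E$ containing $u$.

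\textbf{$E$ is a fixpoint of $R^S_\sbaf$.} The previous step gives $E\subseteq\sarg(S)$, and $E$ defends itself, so $E\subseteq R^S_\sbaf(E)$. Conversely, if $a\in R^S_\sbaf(E)$, then $E$ supports $a$, contains no undercutting information for $a$, and defends $a$. Weak support-closure then gives $a\in E$. Hence $R^S_\sbaf(E)=E$.

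\textbf{Sentence-closure.} If we also have $Init(S)\subseteq E$, then $E$ is a fixpoint containing $Init(S)$. Therefore $\warg(S)\subseteq E$, and so $Sent(\warg(S))\subseteq S$, which is sentence-closure.

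\emph{Main obstacle: showing $Init(S)\subseteq E$.} Let $a\in Init(S)$. Then $E$ supports $a$ and contains no undercutting information for $a$, so by weak support-closure it suffices to show that $E$ defends $a$. Let $b$ attack $a$.

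\emph{Rebut case.} If $Conc(b)$ is incompatible with some sentence of $a$, that sentence lies in $S=Sent(E)$. So $b$ attacks some $e\in E$, and admissibility of $E$ gives $E\att b$.

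\emph{Undercut case.} Here $Conc(b)\in\inc{n(a)}$, and this attack does not transfer to $E$ directly. I would first exploit admissibility of $Init(S)$: some $c\in Init(S)$ attacks $b$, with $Conc(c)\in S$. By saturation, there is a minimal argument for the relevant incompatible sentence, and I would try to show that $E$ must attack $b$ through it. If this fails, the fallback is to show directly that $E\cup Init(S)$ is admissible and support-closed in the weak sense. Weak support-closure of $E$ then forces $Init(S)\subseteq E$.
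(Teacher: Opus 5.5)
\paragraph{Verdict.} There is a genuine gap: the proof is not finished at its one non-routine step.

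\paragraph{What is correct.} Up to the step you call the main obstacle, your argument is correct, and in places leaner than the paper's. Your compatibility proof uses only admissibility and saturation. You also rightly observe that $Init(S)\subseteq E$ is enough for sentence-closure, so you do not need $E=Init(S)$.

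\paragraph{The gap.} The undercut case of showing that $E$ defends each $a\in Init(S)$ is the only non-routine step, and you leave it as a plan. Your fallback would not close it. Even if $E\cup Init(S)$ were admissible, that only says $E\cup Init(S)$ defends $a$. Weak support-closure of $E$ requires $E$ itself to defend $a$, which is exactly what is in question.

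\paragraph{The missing idea.} Apply saturation to the conclusion of the attacker $b$, not to a defender of $a$.
\begin{enumerate}
\item Since $Conc(b)\in\uncuts{a}$, the second clause of saturation gives a minimal argument $m$ for $Conc(b)$, and $m$ also undercuts $a$.
\item $Init(S)$ is admissible and contains $a$, so some $d\in Init(S)$ attacks $m$.
\item Because $\inc{n(m)}=\emptyset$, this attack is a rebut, so $Conc(d)\in\inc{Conc(b)}$.
\item $Conc(d)\in S=Sent(E)$, so it lies in some $e\in E$. By symmetry of incompatibility, $b$ rebuts $e$.
\item Since $E$ defends $e$, you get $E\att b$.
\end{enumerate}
This is how the paper finishes the case.

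\paragraph{Your own first idea.} Starting from some $c\in Init(S)$ with $c\att b$ can also be completed, but it needs a case split you did not carry out.
\begin{itemize}
\item If saturation yields a minimal argument for $Conc(c)$ (which it always does when $c$ undercuts $b$), you must first pull that argument into $E$ via weak support-closure. $E$ defends it, because any attacker rebuts the member of $E$ containing $Conc(c)$.
\item If saturation yields a minimal argument for the sentence $s\in Sent(b)$ that $c$ rebuts, that argument rebuts the member of $E$ containing $Conc(c)$. So $E$ must attack it, and the attacker then rebuts $b$.
\end{itemize}
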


This concludes the semantics for SBAF, implementing both the critical reaction of doubt and the language perspective.


\section{Interpreting Abstract and Bipolar Semantics in SBAFs}\label{sect:interpreting}

We now give new interpretations of preferred semantics and deductive support semantics \cite{Boella2010support,CayrolLagasquie-Schiex2013bipolarity}, as they both describe special cases of weakly, resp.\ strongly coherent semantics. Thus, in some circumstances, SBAFs can be seen as instantiating these semantics.


\subsection{SBAFs and Preferred Semantics}\label{sect:abst}

{We now compare our semantics with those of Dung-style abstract argumentation. Those semantics take neither support between arguments nor argument structure into account, so the following results show the circumstances under which we can ignore these aspects of SBAFs.} The following observation shows that Dung-style semantics are related to weak coherence.

\begin{observation}\label{obs:compAreWeakCoher}
	Complete extensions are weakly coherent and a weakly coherent extension is $\subseteq$-maximal iff it is preferred.
\end{observation}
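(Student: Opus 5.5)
The observation has two parts; I would prove them in order, using only the definitions of admissibility, completeness and weak support-closure (Definition \ref{def:coherent}). For the first part, take a complete extension $E$. It is admissible by definition, so it remains to check weak support-closure. Suppose $E$ supports $a$, contains no undercutting information for $a$, and defends $a$. Completeness says $E$ contains every argument it defends, so $a\in E$. The support and undercutting hypotheses are simply not needed. Hence every complete extension is weakly coherent.

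For the second part, I would prove the two directions separately, starting with the claim that a preferred extension is a $\subseteq$-maximal weakly coherent extension. Recall the standard fact from \cite{Dung1995acceptability} that preferred extensions are complete: if an admissible $E$ defends some $a\notin E$, then $E\cup\{a\}$ is again admissible, which contradicts maximality. By the first part, a preferred extension $E$ is therefore weakly coherent. Now let $E'\supseteq E$ be any weakly coherent extension. Weakly coherent extensions are admissible, so maximality of $E$ among admissible sets gives $E'=E$. Thus $E$ is $\subseteq$-maximal among weakly coherent extensions.

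For the converse, let $E$ be $\subseteq$-maximal among weakly coherent extensions, and suppose for contradiction that $E$ is not preferred. Since $A$ is finite, there is a preferred extension $P\supsetneq E$ (extend $E$ to a maximal admissible superset). By the previous step, $P$ is weakly coherent. This contradicts the maximality of $E$ among weakly coherent sets. Hence $E$ is preferred.

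The only step that needs care is recalling that preferred extensions are complete, so that they lie inside the weakly coherent class. This is the standard argument: if $E$ is admissible and defends $a$, then $E\cup\{a\}$ is conflict-free, because any attack involving $a$ would be counter-attacked by $E$, giving a conflict inside the conflict-free set $E$ (and $a\to a$ would force $E\to a$). It also defends all its members. I would cite this from \cite{Dung1995acceptability} rather than reprove it, so I expect no real obstacle.
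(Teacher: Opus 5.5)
Your proof is correct and is essentially the argument the paper has in mind. The paper gives no separate proof of this observation, but it uses exactly your chain preferred $\Rightarrow$ complete $\Rightarrow$ weakly coherent, for instance at the start of the proof of Proposition~\ref{prop:Pref-are-maxWeak}; the maximality equivalence then follows, as you argue, from weakly coherent extensions being admissible and from the finiteness of $A$.
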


But we can use the language perspective to reach a more interesting result. While preferred extensions correspond to an agent who accepts as many \textit{arguments} as possible, what kind of extensions do we get from an agent who accepts as many \textit{sentences} as possible? We first define \textit{confident} extensions to capture this idea. 

\begin{definition}[Confident Extensions]\label{def:conf-ext}
	Let $\sbaf$ be an SBAF. A language extension $S\subseteq Sent(A)$ is called a confident strongly (resp.\ weakly) adequate language extension if it is $\subseteq$-maximal amongst strongly (resp.\ weakly) adequate language extensions.
	
	An argument extensions $E\subseteq A$ is called a confident strongly (resp.\ weakly) coherent argument extension if it is strongly (resp.\ weakly) coherent and there exists a confident strongly (resp.\ weakly) adequate language extension $S\subseteq Sent(A)$ such that $E=\sarg(S)$ (resp.\ $E=\warg(S)$).
\end{definition}

\begin{example}\label{ex:confident}
	In the SBAF of Example \ref{ex:sbaf}, we have the following confident strongly coherent extensions: $\{a_1, a_2, a_4, a_6\}, \{a_1, a_3, a_4, a_6\},$ and $\{a_2, a_3, a_5, a_7\}$. And the following confident weakly coherent extensions: $\{a_1, a_2, a_3, a_5, a_7\},$ and $\{a_1, a_2, a_3, a_4, a_6\}$. 
\end{example}

We can again note correspondence between argument and language extensions. This time, the direction from arguments to sentences works only indirectly, as the set of sentences of a confident coherent argument extension might not itself be confident adequate.

\begin{proposition}\label{prop:conf-correspondence}
	Let $\sbaf$ be a saturated SBAF.
	For every confident strongly (resp.\ weakly) adequate language extension, its strong (resp.\ weak) argument set is confident strongly (resp.\ weakly) coherent.
	
	Also, for every confident strongly (resp.\ weakly) coherent argument extension $E$, there exists a confident strongly (resp.\ weakly) adequate language extension $S$ such that $\sarg(S)=E$ (resp.\ $\warg(S)=E$).
\end{proposition}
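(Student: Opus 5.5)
The plan is to derive both halves of Proposition~\ref{prop:conf-correspondence} from the earlier correspondence results, Propositions~\ref{prop:strong-correspondence} and~\ref{prop:weak-correspondence}, together with Definition~\ref{def:conf-ext}. The strong and weak cases run in parallel, with $\sarg$ replaced by $\warg$, so I describe them once.

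\emph{First half.} Let $S$ be a confident strongly (resp.\ weakly) adequate language extension.
\begin{enumerate}
\item Since $S$ is in particular strongly (resp.\ weakly) adequate and $\sbaf$ is saturated, Proposition~\ref{prop:strong-correspondence} (resp.\ \ref{prop:weak-correspondence}) gives that $\sarg(S)$ (resp.\ $\warg(S)$) is strongly (resp.\ weakly) coherent.
\item This extension is, by construction, the argument set of the confident adequate extension $S$.
\item Hence both clauses of Definition~\ref{def:conf-ext} hold, and $\sarg(S)$ (resp.\ $\warg(S)$) is confident coherent.
\end{enumerate}
So this half is essentially immediate.

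\emph{Second half.} Let $E$ be confident strongly (resp.\ weakly) coherent. By Definition~\ref{def:conf-ext}, $E$ is coherent and there is a confident adequate $S$ with $E=\sarg(S)$ (resp.\ $E=\warg(S)$). Taking this $S$ as the witness already gives the claim. In this form the statement just unpacks the definition, and saturation is not even needed.

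The one subtle point is the paper's remark that $Sent(E)$ itself need not be confident. So one must not try to use $S=Sent(E)$; the witness has to be the (possibly strictly larger) set $S$ supplied by the definition. I would also check the consistency issue in the weak case: $\warg$ is only defined for compatible sets. This is harmless, because adequate extensions are compatible by Definition~\ref{def:adeqExt}, and by Proposition~\ref{prop:WargWD} $\warg(S)$ is well-defined.

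If the intended reading of ``confident coherent'' were instead ``$\subseteq$-maximal among coherent extensions with a confident witness'', extra work would be required. In that case I would argue as follows.
\begin{enumerate}
\item Start from any adequate $S_0$ with $\sarg(S_0)=E$; for example $S_0=Sent(E)$ works by Propositions~\ref{prop:strong-correspondence}/\ref{prop:weak-correspondence}.
\item Extend $S_0$ to a $\subseteq$-maximal adequate superset $S$. This is possible because $Sent(A)$ is finite.
\item Show the argument set does not grow: $\sarg(S)=E$ (resp.\ $\warg(S)=E$). The idea is that the sentences added are exactly those no further argument can be built on.
\end{enumerate}
Step 3 is the main obstacle. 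Monotonicity of $\sarg$ fails because of undercutting information, and $\warg$ involves a fixpoint, so I would compare the sets argument by argument using saturation.
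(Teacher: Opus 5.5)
Your proposal is correct and matches the paper's proof: the first half follows from Propositions~\ref{prop:strong-correspondence} and~\ref{prop:weak-correspondence} together with Definition~\ref{def:conf-ext}, with $S$ itself as the witness, and the second half is immediate from Definition~\ref{def:conf-ext}. Your discussion of an alternative reading of ``confident coherent'' is not needed, since that definition fixes the witness-based reading you used.
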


In Example \ref{ex:confident}, confident weakly coherent extensions correspond to preferred extensions. Thus, there, maximising arguments is the same as maximising sentences. But this is not always the case.

\begin{example}\label{ex:weakNotPref}
	In the following SBAF (with $s\in\inc{u}$), $\emptyset$ is confident weakly coherent, based on $\{t,u\}$ being confident weakly adequate, but it is not preferred.
	
	\centering
	\begin{tikzpicture}[node distance=0.5cm]
		
		\node (A1) {$a_1:\l\{s\},s\r$};
		\node (A2) [right=of A1] {$a_2:\l\{t,u\},t\r$};
		
		\draw [a-arrow] (A1) -- (A2);
		
	\end{tikzpicture}
\end{example}

Nevertheless, we find that in saturated SBAFs, preferred extensions are confident weakly coherent. For the other direction, we need strongly saturated frameworks. Thus, it is only in strongly saturated frameworks, where many minimal arguments are added, that maximising arguments and maximising sentences coincide. The correspondence between confident weakly coherent extensions and preferred extensions also shows that, in strongly saturated SBAFs, the former does not take support into account, as the latter can be calculated in pure attack-frameworks.

\begin{proposition}\label{prop:Pref-are-maxWeak}
	Let $\sbaff$ be a saturated SBAF.
	Then any preferred extension $E\subseteq A$ is confident weakly coherent.
\end{proposition}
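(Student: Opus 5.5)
The plan is to use $S_0 := Sent(E)$ as a seed. We extend it to a $\subseteq$-maximal weakly adequate language extension $S$, and then show $\warg(S)=E$ using the maximality of $E$.

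First, by Observation~\ref{obs:compAreWeakCoher}, the preferred extension $E$ is weakly coherent, and in fact it is a $\subseteq$-maximal weakly coherent extension. Since the SBAF is saturated, Proposition~\ref{prop:weak-correspondence} gives that $Sent(E)$ is a weakly adequate language extension. Because $Sent(A)$ is finite, there is a $\subseteq$-maximal weakly adequate language extension $S \supseteq Sent(E)$, and this $S$ is confident by definition. Applying Proposition~\ref{prop:weak-correspondence} again shows that $\warg(S)$ is weakly coherent.

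The key step is to show $E \subseteq \warg(S)$. Maximality of $E$ among weakly coherent extensions then forces $E=\warg(S)$, which makes $E$ confident weakly coherent. It suffices to show that every $a\in E$ satisfies $Sent(a)\subseteq S$ and $\uncuts{a}\cap S=\emptyset$. Granting this, $E$ is an admissible subset of the candidate set in Definition~\ref{def:WargSet}. Hence $E\subseteq Init(S)\subseteq \warg(S)$, reading "largest admissible subset" as in Proposition~\ref{prop:WargWD}, i.e.\ containing every admissible subset of that set.

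The first condition is immediate from $Sent(E)\subseteq S$. The second condition is where saturation is used, and I expect it to be the main obstacle. Suppose toward a contradiction that $u\in S\cap\uncuts{a}$ for some $a\in E$.
\begin{enumerate}[label=(\roman*)]
\item Since $S\subseteq Sent(A)$ and $u\in\overline{n(a)}$, saturation provides a minimal argument $m=\arg{u}{u}\in A$.
\item Its conclusion lies in $\uncuts{a}$, so $m$ attacks $a$.
\item Since $E$ is admissible, it defends $a$, so some $b\in E$ attacks $m$.
\item The only sentence of $m$ is $u$, and $\inc{n(m)}=\emptyset$ by the assumption on languages. Hence $Conc(b)\in\inc{u}$.
\item But $Conc(b)\in Sent(E)\subseteq S$ and $u\in S$, which contradicts the compatibility of $S$.
\end{enumerate}
This establishes $E\subseteq\warg(S)$ and completes the argument. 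The remaining point to check is that $Init(S)$ indeed contains every admissible subset of the candidate set, which is what the well-definedness result is used for.
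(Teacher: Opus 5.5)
Your proof is correct and uses the same key step as the paper: saturation together with the defence of $E$ shows that every compatible $S\supseteq Sent(E)$ satisfies $E\subseteq Init(S)\subseteq\warg(S)$, and then the preferredness of $E$ forces equality. The difference is only in organisation: you take a globally $\subseteq$-maximal weakly adequate $S\supseteq Sent(E)$ and derive $\warg(S)=E$ directly, whereas the paper takes $S'$ maximal among weakly adequate sets with $\warg(S')=E$ and then shows by contradiction that $S'$ is globally maximal.
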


\begin{proposition}\label{prop:maxWeak-are-pref}
	Let $\sbaff$ be a strongly saturated SBAF.
	Then any confident weakly coherent argument extension $E\subseteq A$ is preferred.
\end{proposition}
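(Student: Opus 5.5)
The plan is to argue by contradiction. Let $E=\warg(S)$ be confident weakly coherent, where $S$ is a confident (i.e.\ $\subseteq$-maximal) weakly adequate language extension. Suppose $E$ is not preferred. Since $E$ is admissible, it is contained in some preferred extension $P\supsetneq E$. I would then build from $P$ and $S$ a weakly adequate language extension strictly larger than $S$, contradicting confidence. Strong saturation enters through the minimal arguments it guarantees.

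\textbf{Step 1 (minimal arguments of $S$ lie in $E$).} Let $s\in S$ be a sentence that is incompatible with some sentence of $Sent(A)$, or that lies in $\uncuts{a}$ for some $a\in A$. By strong saturation there is a minimal argument $m_s=\arg{s}{s}\in A$. Its sentences lie in $S$, and by Definition~\ref{def:SBAFlanguage2} it cannot be undercut, so $m_s$ is a candidate for $Init(S)$.

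Any attacker $b$ of $m_s$ has $Conc(b)=t\in\inc{s}$. By symmetry $s\in\inc{t}$ with $t\in Sent(b)$, so $m_s$ attacks $b$ back. Hence each $m_s$ defends itself. Moreover, the set $M_S$ of all such $m_s$ is conflict-free, because $S$ is compatible.

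It follows that $Init(S)\cup M_S$ is admissible, so by maximality of $Init(S)$ we get $M_S\subseteq Init(S)\subseteq E$.

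\textbf{Step 2 (compatibility of $T:=S\cup Sent(P)$).} Suppose $s\in S$ and $t\in Sent(P)$ with $t\in\inc{s}$. Then $m_s\in E\subseteq P$ exists and attacks the argument of $P$ containing $t$. This contradicts conflict-freeness of $P$. The two sets are internally compatible: $S$ by assumption, and $Sent(P)$ because $P$ is conflict-free and, by saturation, every conflicting sentence has a minimal argument, which is the same reasoning used in Proposition~\ref{prop:weak-correspondence}. So $T$ is compatible.

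The same minimal-argument trick shows that $T$ contains no undercutting information for any argument of $P$. Such information coming from $S$ would be witnessed by some $m_u\in P$ attacking that argument, and such information coming from $Sent(P)$ is handled the same way.

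\textbf{Step 3 (sentence-closure of $T$, the main obstacle).} I need to show that every $a\in\warg(T)$ satisfies $Sent(a)\subseteq T$. This also gives $T\supsetneq S$: otherwise $Sent(P)\subseteq S$, in which case $P$ is an admissible subset of the $Init(S)$-candidates, hence $P\subseteq Init(S)\subseteq E$, contradicting $P\supsetneq E$.

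For sentence-closure I would compare $\warg(T)$ with $\warg(S)\cup P$.

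First, $P\subseteq Init(T)$, since $P$ is admissible, undercut-free relative to $T$, and has all its sentences in $T$.

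Next, I would show by induction along the least-fixpoint construction of $R^T_\sbaf$ that every argument added beyond $Init(T)$ is supported by $T$ and defended. I would then use weak adequacy of $S$ together with weak support-closure of $P$; the latter holds because $P$ is preferred and hence weakly coherent by Observation~\ref{obs:compAreWeakCoher}. The goal is to conclude that such an argument already lies in $\warg(S)$ or in $P$, so that its conclusion belongs to $S$ or to $Sent(P)$.

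The delicate case is an argument with premises split across $S$ and $Sent(P)$. Here I would try to show that such an argument is defended by $E\cup P=P$ and supported by $Sent(P)\cup Sent(E)$, using $M_S\subseteq P$. I would then conclude via maximality of $P$ among admissible sets that it lies in $P$.

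Once Step 3 is done, $T$ is weakly adequate and strictly contains $S$, contradicting the confidence of $S$. Hence $E$ is preferred.
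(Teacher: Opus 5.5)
Your strategy is viable and differs from the paper's. The paper fixes a preferred $E'\supsetneq E$ and invokes Proposition~\ref{prop:Pref-are-maxWeak} to get a confident weakly adequate $S'$ with $\warg(S')=E'$. It then shows $S\subseteq S'$ sentence by sentence. Any $s\notin S'$ must be incompatible with some $t\in S'$. The minimal argument for $t$ lies in $E'$ and attacks the minimal argument for $s$, so the latter is not in $E$, and hence $s\notin S$. You instead build an explicit witness $T=S\cup Sent(P)$. This needs neither Proposition~\ref{prop:Pref-are-maxWeak} nor the confidence of a second language extension, so your route is more self-contained. Your Steps 1 and 2 and your argument for $T\supsetneq S$ are correct. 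You also use strong saturation exactly where it is needed: you need a minimal argument for $s\in S$, not merely for the incompatible $t\in Sent(P)$ (cf.\ Example~\ref{ex:weakNotPref}).

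The gap is Step 3, which you leave as a plan, and the plan as stated would not go through. For an argument with premises split across $S$ and $Sent(P)$, you want to show it is supported by $Sent(P)\cup Sent(E)$. But $Sent(E)\subseteq Sent(P)$, so a premise in $S\setminus Sent(P)$ is not in that set, and you cannot show otherwise without already knowing the argument lies in $P$.

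No induction is needed. Since $T$ is compatible, $\warg(T)$ is admissible by Proposition~\ref{prop:SargCFWargAdm}, and you have already shown $P\subseteq Init(T)\subseteq \warg(T)$. Maximality of $P$ among admissible sets then forces $\warg(T)=P$; indeed $Init(T)=P$ already. Sentence-closure of $T$ therefore reduces to $Sent(P)\subseteq T$, which holds by construction, and the split-premise case is vacuous. Weak support-closure of $P$ and weak adequacy of $S$ play no role here. With this one line, $T$ is a weakly adequate language extension strictly containing $S$, and your contradiction with the confidence of $S$ goes through.
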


\begin{figure}
	\centering
	\includegraphics[scale=0.5]{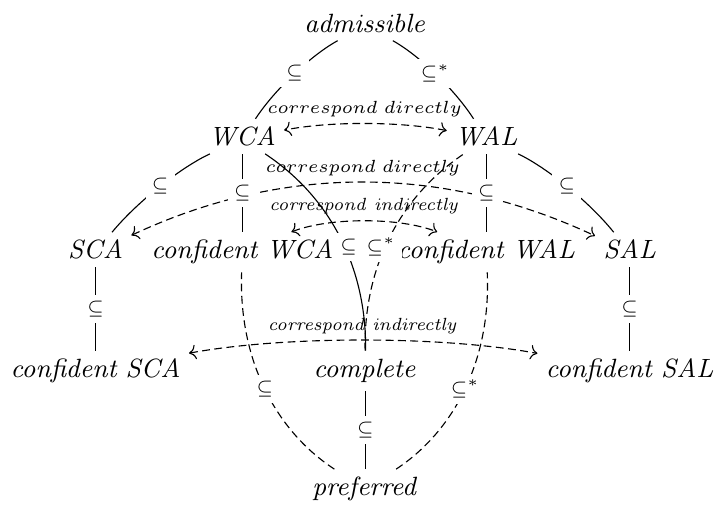}
	
	\caption{$SCA$: strongly coherent argument extensions, $WCA$: weakly coherent argument extensions, $WAL$: weakly adequate language extensions, $SAL$: strongly adequate language extensions. 
		$\mathit{SCA}\subseteq\mathit{WCA}$ indicates that each strongly coherent extension is also weakly coherent. ``$\subseteq^*$'' relates argument extensions with the argument sets of language extensions.
		Dashed relations only hold in saturated SBAFs.}
	\label{fig:Overview}
\end{figure}

{ These two propositions show that at least in a large class of SBAFs, an agent who wants to maximise either arguments or sentences can safely disregard support between arguments and solely focus on attacks. In some sense, then, support becomes redundant if we use preferred-style semantics. While it is an open question whether we can find a full correspondence between weakly coherent and complete semantics, it indicates that we should indeed use admissible-style semantics if we want to account for supports.} Figure \ref{fig:Overview} shows all our semantics and their relations.


\subsection{SBAFs and Deductive Support Semantics}\label{sect:bip}

We base our discussion of bipolar semantics on the notion of \textit{deductive} support \cite{Boella2010support,CayrolLagasquie-Schiex2013bipolarity}. This approach implements the idea that accepting a supporting argument entails accepting the supported argument as well. We show that it is related to strong coherence.

\begin{definition}[Bipolar Argumentation Framework]\label{def:baf}
	A bipolar argumentation framework (BAF) is a tuple $\baff$ where $A$ is a finite set of arguments, ${\att}\subseteq A\times A$ an attack relation, and ${\supp}\subseteq A\times A$ a support relation.
\end{definition}

\begin{definition}[Supported and Mediated Attacks]\label{def:supportedMediatedAttacks}
	Let $\baff$ be a bipolar argumentation framework and take $a,b\in A$. 
	
	We say that there is a supported attack from $a$ to $b$ if there exists $c\in A$ such that $a\supp c$ and $c\att b$.
	
	We say that there is a mediated attack from $a$ to $b$ if there exists $c\in A$ such that $b\supp c$ and $a\att c$.
\end{definition}

\begin{example}\label{ex:CompAtt}
	In the following, BAF, the attack from $a_1$ to $a_3$ is a mediated attack and that from $a_3$ to $a_4$ is a supported attack. Added attacks are indicated with dotted arrows.
	
	\centering
	\begin{tikzpicture}[node distance=0.5cm]
		
		\node (A1) {$a_1$};
		\node (A2) [right=of A1] {$a_2$};
		\node (A3) [below=of A2] {$a_3$};
		\node (A4) [left=of A3] {$a_4$};
		\node (A5) [left=of A1] {$a_5$};
		
		\draw [a-arrow] (A1) -- (A2);
		\draw [s-arrow] (A3) -- (A2);
		\draw [a-arrow] (A2) -- (A4);
		\draw [a-arrow] (A5) -- (A1);
		
		\draw [c-arrow] (A1) -- (A3);
		\draw [c-arrow] (A3) -- (A4);
		
	\end{tikzpicture}
\end{example}

\begin{definition}[Deductive Support Semantics]\label{def:deductiveSupport}
	Let $\baff$ be a bipolar argumentation framework. We define the set of complex attacks, 
	$${\att^{co}}:=\bigcup_{i\in\mathbb{N}}{\att^i},$$
	where $\att^0:=\att$ and $\att^{i+1}:={\att^i}\cup\{(a,b)\in A\times A\ |\ \text{there is a supported or mediated attack from $a$ to $b$ w.r.t.\ ${\att^i}$}\}$.
	
	An extension $E\subseteq A$ is called d-admissible, d-complete, or d-preferred if it is admissible and closed under $\supp$, complete, or preferred in $\af=\l A,\att^{co}\r$.
\end{definition}

\begin{example}
	In the BAF of Example \ref{ex:CompAtt}, we have the unique d-complete, and d-preferred extension $\{a_2, a_3, a_5\}$. On the d-admissible side, we have $\emptyset,\{a_5\},\{a_2,a_5\}$, and $\{a_2,a_3,a_5\}$. Note that $\{a_3,a_5\}$ is not d-admissible since it is not closed under $\supp$.
\end{example}

We can now observe that strong coherence behaves like d-admissibility under certain conditions. Thus, SBAFs can specify exactly the circumstances under which the structure of the arguments does not need to be considered.

\begin{proposition}\label{prop:ded-in-sbafs}
	Let $\sbaff$ be an SBAF where $\forall a\in A:|Prem(a)|=1$ and $\neg\exists t\in Sent(A),\exists a\in A:t\in\uncuts{a}$. 
	Then an extension $E\subseteq A$ that is strongly coherent is also d-admissible.
	
	If further $\forall a,b\in A$, we have $Prem(a)\neq Prem(b)$, then an extension $E\subseteq A$ that is d-admissible is also strongly coherent.
\end{proposition}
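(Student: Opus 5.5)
The plan is to first fix how an SBAF under the hypotheses is read as a BAF. Since every argument has exactly one premise, the condition ``$E$ supports $a$'' reduces to $Prem(a)\subseteq Sent(e)$ for a single $e\in E$. So I take the binary support $e\supp a$ to mean $Prem(a)\subseteq Sent(e)$, and the attack relation to be the one from Definition~\ref{def:SuppAtt}. Since no sentence is incompatible with the name of any argument, ``$E$ contains undercutting information for $a$'' never holds. Strong support-closure therefore becomes plain closure: whenever $e\in E$ and $e\supp a$, then $a\in E$. With these two translations, both directions reduce to comparing admissibility with respect to $\att$ and with respect to $\att^{co}$ for support-closed sets.

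\textbf{First direction.} Let $E$ be strongly coherent, so $E$ is admissible and closed under $\supp$. I show that $E$ is conflict-free and self-defending in $\l A,\att^{co}\r$, by induction on the level $i$ in Definition~\ref{def:deductiveSupport}.

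For conflict-freeness, consider an attack inside $E$ at level $i+1$.
\begin{itemize}
\item If it is a supported attack $a\supp c\att^i b$ with $a,b\in E$, then closure gives $c\in E$, which contradicts the induction hypothesis.
\item If it is a mediated attack $b\supp c$ and $a\att^i c$ with $a,b\in E$, then again $c\in E$, giving the same contradiction.
\end{itemize}

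For defence, I prove that $x\att^i b$ with $b\in E$ implies $E\att^{co}x$.
\begin{itemize}
\item At level $0$ this is admissibility.
\item For a supported attack $x\supp c\att^i b$, the induction hypothesis gives $e\in E$ with $e\att^{j}c$. Then $x\supp c$ together with $e\att^j c$ is a mediated attack, so $e\att^{co}x$.
\item For a mediated attack with $b\supp c$ and $x\att^i c$, closure gives $c\in E$, and the induction hypothesis applies directly to $c$.
\end{itemize}
So $E$ is d-admissible.

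\textbf{Second direction.} Let $E$ be d-admissible. Closure under $\supp$ is exactly strong support-closure, by the first paragraph. Conflict-freeness with respect to $\att$ follows from ${\att}\subseteq{\att^{co}}$. What remains is defence with respect to the original $\att$. If $x\att b\in E$, we only know $e\att^{co}x$ for some $e\in E$, and we must turn this into a direct attack $e'\att x$ with $e'\in E$.

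I would do this by induction on the level of the complex attack, unfolding what supports and attacks mean structurally. Take the mediated case $x\supp c$ and $e\att c$. Then $Conc(e)$ is incompatible with a sentence of $c$, and $Prem(c)\subseteq Sent(x)$.
\begin{itemize}
\item If the clash is with $Prem(c)$, then $e\att x$ directly.
\item If the clash is with $Conc(c)$, symmetry of $\emptinc$ gives $c\att e$. Hence $x\supp c\att e$ is a supported attack on $e$, so $E$ must counter-attack $x$ at a strictly lower level. I would try to get an induction on levels from this.
\end{itemize}
The supported case, $x\supp d\att^i b'$ with $b'\in E$, is handled similarly.

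The hypothesis $Prem(a)\neq Prem(b)$ should enter here. It prevents a support $x\supp c$ from arising merely because $x$ and $c$ share a premise. A support then must run through $Conc(x)=Prem(c)$, which makes the conclusion clashes line up and yields a direct attack on $x$.

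This last step, recovering a direct $\att$-defender from an $\att^{co}$-defender, is where I expect the real difficulty. If the level-based induction stalls, I would first check whether the distinct-premise hypothesis lets me choose a counter-attacker whose complex-attack derivation has minimal length, and argue that minimality forces the derivation to have length $0$.
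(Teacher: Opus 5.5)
Your first direction is correct and follows the paper's own argument: an induction over the levels $\att^i$, using closure under $\supp$ to reduce supported and mediated attacks to level $i$. The second direction has a genuine gap exactly where you located it, and the gap cannot be closed, because that claim is false as stated. The paper's proof skips the same step with ``It is clear that $E$ is admissible in $\sbaf$''.

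\textbf{Counterexample.} Take $a_1:\langle\{p\},q\rangle$, $a_2:\langle\{q\},r\rangle$ and $a_3:\langle\{s\},v\rangle$, with $q\in\inc{s}$ and $v\in\inc{r}$ (symmetrically), and with every name having empty incompatibility set.
\begin{itemize}
\item The hypotheses hold: all premises are single and pairwise distinct, and there is no undercutting information.
\item The attacks are $a_1\att a_3$, $a_2\att a_3$ and $a_3\att a_2$.
\item The only non-reflexive support is $a_1\supp a_2$. It runs through $Conc(a_1)=Prem(a_2)$, so it is present under any reading of the binary support.
\item The only new complex attack is the mediated attack $a_3\att^{co}a_1$, obtained from $a_1\supp a_2$ and $a_3\att a_2$.
\end{itemize}
Hence $E=\{a_3\}$ is d-admissible. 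It is $\att^{co}$-conflict-free, it counter-attacks $a_2$ directly and $a_1$ via $a_3\att^{co}a_1$, and it is closed under $\supp$. However, no argument concludes $s$, so nothing $\att$-attacks $a_1$. Therefore $E$ is not admissible in the SBAF, and so not strongly coherent.

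\textbf{Why your induction stalls.} The example is precisely your ``clash with $Conc(c)$'' case, with $x=a_1$, $c=a_2$, $e=a_3$. Here $a_2\att a_3$ makes $a_1$ a supported attacker of $a_3$. But the only defence of $a_3$ against $a_1$ is the very mediated attack $a_3\att^{co}a_1$ you are trying to eliminate. Defence in $\langle A,\att^{co}\rangle$ can be circular in this way. So there is no strictly lower level to descend to, and a counter-attack with a minimal-length derivation need not have length $0$.

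Your heuristic about distinct premises is half right. They do force the support to run through $Conc(a_1)=Prem(a_2)$. But the clash is between $Conc(a_2)$ and $Conc(a_3)$, which says nothing about $Sent(a_1)$, so no direct attack on $a_1$ results.

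What does survive for a d-admissible $E$ under these hypotheses is conflict-freeness with respect to $\att$ and strong support-closure. Defence with respect to the original $\att$ needs an additional hypothesis, or has to be built into the definition.
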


This result tells us that strong coherence coincides with d-admissibility if each argument has exactly one premise that is unique and there is no undercutting information in the framework. Accordingly, we only need to take information about the structure of the arguments into account if they are more complex than unique single-premise. Strong coherence can then be seen as a generalisation of d-admissibility to these more complex cases. 

\section{Related Approaches}\label{sect:related}

\subsection{Bipolar Argumentation}

The semantics presented in this paper rely on the notions of strong, resp.\ weak support-closure. Both variants find their analogues in the literature, be it under different background assumptions. 

A version of strong support-closure can be found in the notion of \textit{exhaustion} in \textit{premise-augmented BAFs} (pBAFs) \cite{Ulbricht2024nonflat}. pBAFs go some way towards a structured approach to bipolar argumentation by adding information about the premises of the arguments. This allows keeping track of which premises an argument extension is committed to. A \textit{p-admissible} extension is then required to be admissible and exhaustive, meaning that it contains all arguments of which all premises are accepted. As there are no exceptions made for undefended arguments this corresponds, disregarding undercutting information, to the notion of strong support-closure. 

Crucially, pBAFs do not take the conclusions of arguments into account, only their premises. This can lead to frameworks that work as pBAFs but are uninstantiable if we think about the conclusions. 

\begin{example}\label{ex:pBAF}
	Consider the following pBAF, where $\pi$ indicates the premises of the arguments.\\
	{
		
		\centering
		\begin{tikzpicture}[node distance=0.5cm]
			
			\node (a1) {$a_1$, $\pi(a_1)=\{s\}$};
			\node (a2) [right=of a1] {$a_2$, $\pi(a_2)=\{t\}$};
			\node (a3) [below=of a1] {$a_3$, $\pi(a_3)=\{\neg t\}$};
			\node (a4) [right=of a3] {$a_4$, $\pi(a_4)=\{u\}$};
			
			\draw [s-arrow] (a1) -- (a2);
			\draw [s-arrow] (a1) -- (a3);
			
		\end{tikzpicture}
		
	}
	
	This framework is uninstantiable once we think about the conclusion of $a_1$. Namely, it has to support both $t$ and $\neg t$. This might be possible if the conclusion of $a_1$ is itself contradictory, but then it would have to support everything, including $a_4$. Thus, this framework cannot be instantiated, but we only see it if we consider the conclusions of arguments. This is exactly the information SBAFs add.
\end{example}

The notion of weak support-closure occurs in \textit{bipolar set-argumentation frameworks} (BSAFs) \cite{Berthold2024capturing} in the definition of $\Gamma$-admissibility. While BSAFs do not take the structure of arguments into account, they allow for collective supports, as do SBAFs. Further, it can be shown that in saturated SBAFs, weak coherence does not need to take into account undercutting information explicitly, as this is covered by the defence-requirement. This makes weak coherence analogous to $\Gamma$-admissibility, which requires an extension to contain all supported arguments that are also defended. 

The difference between BSAFs (as well as pBAFs) and SBAFs lies in the notion of defence. Namely, BSAFs and pBAFs require defence only against \textit{closed} sets of arguments, i.e.\ against sets of arguments that are closed under the support relation. SBAFs, in contrast, require defence against all attacking arguments.

\begin{example}
	Consider the following SBAF with $s\in\inc{r}$, $r\in\inc{s}$, and $t\in\uncuts{a_3}$. 
	
	\begin{center}
		\begin{tikzpicture}[node distance=0.5cm]
			
			\node (A1) {$a_1:\l\{s\},t\r$};
			\node (A2) [right=of A1] {$a_2:\l\{r\},r\r$};
			\node (A3) [below=of A2] {$a_3\arg{r}{u}$};
			
			\draw [a-arrow] (A2) -- (A1);
			\draw [a-arrow] (A1) -- (A3);
			
			\draw [s-arrow] (A2) -- (A3);
			
		\end{tikzpicture}
	\end{center}
	
	Argument $a_1$ defends itself against the closed set $\{a_2,a_3\}$ because it attacks $a_3$, but it is not defended against $a_2$. Accordingly, $\{a_1\}$ is not weakly coherent, but it would be $\Gamma$-admissible.
\end{example}

We can further find an implementation of the notion of doubt in \textit{evidential argumentation systems} \cite{OrenNorman2008semantics,PolbergOren2014revisting}. In these frameworks, an argument is only relevant if it is supported by evidence. Any argument that has no evidential support can be rejected. This allows for rejecting unattacked arguments and thus can be seen as an expression of doubt. However, this implementation of doubt is quite different from ours in at least two respects: \textit{First}, doubt in evidential argumentation systems needs to be justified by reference to a lack of evidence and whether an argument lacks evidence is determined by the argumentation system. There is no room for different agents doubting different arguments in the same system, as is possible in SBAFs. In our approach, it is the agents that decide what to doubt, no the framework. We also make no external assumptions on when doubt is justified. \textit{Second} evidential argumentation systems rely on a notion of \textit{necessary} support \cite{NouiouaRisch2011argumentation,Nouioua2013afs}, whereas we showed in Proposition \ref{prop:ded-in-sbafs} that our notion of support generalises \textit{deductive} support. With necessary support, an argument can only be accepted if all its supporters are also accepted. In that sense, accepting the supporter is necessary for accepting the supported argument. In SBAFs it is never required to accept a supporting argument, support can only force acceptance of supported arguments. Thus, the direction goes the different way and accordingly there is no natural correspondence to necessary support in SBAFs.

\subsection{Structured Argumentation}
Our account bears similarities to structured approaches to argumentation such as ASPIC$^+$ \cite{Modgil2013general,Modgil2014aspic,Prakken2010abstract} or ABA \cite{Bondarenko1993assumption,Toni2014tutorial}. Both approaches structure arguments into premises and conclusions, where this structure then determines the relations between the arguments. However, our conception of arguments differs from theirs. \textit{First}, we take the arguments as given, while both ASPIC$^+$ and ABA use knowledge bases (resp.\ assumptions) and a set of inference rules to construct them. This guarantees some rationality in the inferences of the arguments as they must be licenced by some rule. In contrast, SBAF arguments could in principle combine any premises with conclusions, just like abstract argumentation could in principle count any statement as an argument. \textit{Second}, our account of argument structure is less complex than that of ASPIC$^+$ or ABA. Our arguments are simple premise-conclusion structures, whereas arguments in other structured approaches are full inference trees that derive their conclusions by means of potentially many inference rules and intermediary steps from their premises. This difference is brought out by our use of a support relation where ASPIC$^+$ and ABA work with attacks only. An argument that concludes with a premise of another one counts as a supporting argument (or an element of a supporting set) in SBAFs, but would count as a sub-argument of a larger, more complex argument in ASPIC$^+$. \textit{Third}, ABA allows evaluation of frameworks in terms of acceptable sets of sentences, similarly to our language extensions. The difference is that language extensions take account of all sentences that are involved in arguments, where ABA only gives sets of assumptions, which represent the original premises of their inference tree-arguments. 

Finally, claim-augmented argumentation frameworks (CAFs) \cite{Bernreiter2024effect,Dvorak2023claim-centric} associate each argument with a claim, representing their conclusion, and provide semantics that work purely on the level of claims. They also compare the strategies of maximising accepted claims and maximising accepted arguments, and they conclude that these strategies are identical in well-formed CAFs. As SBAFs are well-formed by definition, their result differs from ours: Confident weakly coherent extensions and preferred extensions do not coincide in general, as SBAFs also take into account the premises. 


\section{Conclusion}

Argumentation in practice has many features and aspects, which in turn lead to many ways of modelling it. In this paper, we followed ideas from informal approaches to argumentation and developed an approach to structured bipolar argumentation. This approach is characterised by semantics that are weaker than typical completeness-based semantics in that not all defended arguments have to be accepted and by the correspondence between the argument and the language perspective. Not only are we be able to say which arguments an agent should accept, but we can also directly evaluate which sentences they should accept. As our results show, depending on the properties of the framework (i.e.\ whether it is saturated), the two perspectives can align or come apart. Further, we can compare the strategy of maximising accepted arguments (i.e.\ choosing a preferred extension) with that of maximising accepted sentences (i.e.\ choosing a confident weakly coherent extension). Again, these strategies can lead to different results, but they coincide in strongly saturated frameworks. Amongst the semantics for bipolar argumentation, we find a correspondence to d-admissibility. 

Our approach allows agents to doubt any sentence which they cannot infer from already accepted sentences. This implements a very generous notion of doubt that may make it difficult to force some agent to accept an argument, as they could in principle always doubt our premises. Accordingly, it might be desirable to distinguish between sentences that can be freely doubted and other sentences that belong to a form of common ground that should be accepted by all agents. A strong version of this could be captured in our approach by requiring all extensions to contain the sentences representing the common ground. However, it would be useful to implement a weaker common ground where commonly accepted sentences can be rejected if they are attacked, but have to be accepted otherwise.

The comparison between SBAFs and other approaches to structured argumentation should also be explored further. For instance, it would be interesting to examine how a notion of doubt could be added to frameworks such as ASPIC$^+$ or ABA. A further point of comparison are approaches to argumentation where it is uncertain which arguments are present. There, doubt could be modelled by removing an argument, thus possibly presenting an alternative to our account. Such approaches can be found in probabilistic argumentation \cite{Li2012probabilistic,Hunter2020epistemic} or argumentation with incomplete frameworks \cite{Baumeister2018complexity,Baumeister2021acceptance}.
Finally, it would also be interesting to follow empirical approaches on argumentation \cite{CramervnderTorre2019scf2,CramerGuillame2018directionality,Vesic2022graphical,vanEemeren2009fallacies} and examine whether the informal ideas of allowing for doubt and evaluating frameworks on sentences correspond to the intuitions of ordinary reasoners.	

\section*{Acknowledgements}
	Srdjan Vesic benefited from the support of the project AGGREEY ANR-22-CE23-0005 of the French National Research Agency (ANR). Bruno Yun benefited from the University Claude Bernard, Lyon 1 AAP Accueil funding.

	\bibliographystyle{plain} 
	\bibliography{all_references}

\begin{thebibliography}{10}

\bibitem{AmgoudCayrol1998on}
Leila Amgoud and Claudette Cayrol.
\newblock {On the Acceptability of Arguments in Preference-based
  Argumentation}.
\newblock In Gregory~F. Cooper and Serafín Moral, editors, {\em Uncertainty in
  Artificial Intelligence (UAI98)}, page 1–7. Morgan Kaufmann Publishers,
  1998.

\bibitem{AmgoudVesic2009repairing}
Leila Amgoud and Srdjan Vesic.
\newblock {Repairing Preference-Based Argumentation Frameworks}.
\newblock In Craig Boutilier, editor, {\em International Joint Conference on
  Artificial Intelligence (IJCAI09)}, page 665–670. IJCAI Organization, 2009.

\bibitem{Arieli2021logic}
Ofer Arieli, AnneMarie Borg, Jesse Heyninck, and Christian Straßer.
\newblock {Logic-Based Approaches to Formal Argumentation}.
\newblock {\em Journal of Applied Logics}, 8(6):1793–1898, 2021.

\bibitem{Baroni2018abstract}
Pietro Baroni, Martin Caminada, and Massimiliano Giacomin.
\newblock {Abstract Argumentation Frameworks and Their Semantics}.
\newblock In Pietro Baroni, Dov Gabbay, Massimiliano Giacomin, and Leendert
  {Van Der Torre}, editors, {\em Handbook of Formal Argumentation. Volume 1}.
  College Publications, 2018.

\bibitem{Baroni2018handbook}
Pietro Baroni, Dov Gabbay, Massimiliano Giacomin, and Leendert van~der Torre,
  editors.
\newblock {\em {Handbook of Formal Argumentation. Volume 1}}.
\newblock College Publications, 2018.

\bibitem{BaronGiacomin2007principle}
Pietro Baroni and Massimiliano Giacomin.
\newblock {On principle-based evaluation of extension-based argumentation
  semantics}.
\newblock {\em Artificial Intelligence}, 171(10-15):675–700, 2007.

\bibitem{Baumeister2021acceptance}
Dorothea Baumeister, Matti Järvisalo, Daniel Neugebauer, Andreas Niskanen, and
  Jörg Rothe.
\newblock {Acceptance in incomplete argumentation frameworks}.
\newblock {\em Artificial Intelligence}, 295:103470, 2021.

\bibitem{Baumeister2018complexity}
Dorothea Baumeister, Daniel Neugebauer, Jörg Rothe, and Hilmar Schadrack.
\newblock {Complexity of verification in incomplete argumentation frameworks}.
\newblock {\em Proceedings of the AAAI Conference on Artificial Intelligence
  (AAAI18)}, 32(1):1753–1760, 2018.

\bibitem{Bernreiter2024effect}
Michael Bernreiter, Wolfgang Dvořák, Anna Rapberger, and Stefan Woltran.
\newblock The effect of preferences in abstract argumentation under a
  claim-centric view.
\newblock {\em Journal of Artificial Intelligence Research}, 81:203–262,
  2024.

\bibitem{Berthold2024capturing}
Matti Berthold, Anna Rapberger, and Markus Ulbricht.
\newblock Capturing non-flat assumption-based argumentation with bipolar
  setafs.
\newblock In Pierre Marquis, Magdalena Ortiz, and Maurice Pagnucco, editors,
  {\em Principles of Knowledge Representation and Reasoning (KR24)}, page
  128–133. Association for the Advancement of Artificial Intelligence, 2024.

\bibitem{Besnard2014introduction}
Philippe Besnard, Alejandro~J. García, Anthony Hunter, Sanjay Modgil, Henry
  Prakken, Guillermo~R. Simari, and Francesca Toni.
\newblock {Introduction to structured argumentation}.
\newblock {\em Argument and Computation}, 5(1):1–4, 2014.

\bibitem{Betz2010theorie}
Gregor Betz.
\newblock {\em {Theorie dialektischer Strukturen}}.
\newblock Klostermann, 2010.

\bibitem{Boella2010support}
Guido Boella, Dov Gabbay, Leendert van~der Torre, and Serena Villata.
\newblock {Support in Abstract Argumentation}.
\newblock In Pietro Baroni, Federico Cerutti, Massimiliano Giacomin, and
  Guillermo~R. Simari, editors, {\em Computational Models of Argument
  (COMMA10)}, page 111–122. IOS Press, 2010.

\bibitem{Bondarenko1993assumption}
Andrei Bondarenko, Francesca Toni, and Robert~Anthony Kowalski.
\newblock {An assumption-based framework for non-monotonic reasoning}.
\newblock In Luís~Moniz Pereira and Anil Nerode, editors, {\em Proceedings of
  the second international workshop on Logic programming and non-monotonic
  reasoning}. MIT Press, 1993.

\bibitem{CayrolLagasquie-Schiex2005on}
Claudette Cayrol and Marie-Christine Lagasquie-Schiex.
\newblock {On the acceptability of arguments in bipolar argumentation
  frameworks}.
\newblock In Lluis Godo, editor, {\em Symbolic and Quantitative Approaches to
  Reasoning with Uncertainty (ECSQARU05)}, page 378–389. Springer, 2005.

\bibitem{CayrolLagasquie-Schiex2013bipolarity}
Claudette Cayrol and Marie-Christine Lagasquie-Schiex.
\newblock {Bipolarity in argumentation graphs: Towards a better understanding}.
\newblock {\em International Journal of Approximate Reasoning}, 54:876–899,
  2013.

\bibitem{CocarascuToni2017mining}
Oana Cocarascu and Francesca Toni.
\newblock Mining bipolar argumentation frameworks fron natural language text.
\newblock In Floris Bex, Floriana Grasso, and Nancy Green, editors, {\em
  Workshop on Computational Models of Natural Argument (CMNA@ICAIL17)}, pages
  65--70. CEUR, 2017.

\bibitem{Cohen2018characterization}
Andrea Cohen, Simon Parsons, Elizabeth~I. Sklar, and Peter McBurney.
\newblock {A characterization of types of support between structured arguments
  and their relationship with support in abstract argumentation}.
\newblock {\em International Journal of Approximate Reasoning}, 94:76–104,
  2018.

\bibitem{Corsi2025attack}
Esther~Anna Corsi.
\newblock Attack principles in sequent-based argumentaiton.
\newblock {\em Journal of Logic and Computation}, 35(3), 2025.

\bibitem{CramerGuillame2018directionality}
Marcos Cramer and Mathieu Guillaume.
\newblock {Directionality of attacks in natural language argumentation}.
\newblock {\em CEUR Workshop Proceedings}, 2261:40–46, 2018.

\bibitem{CramervnderTorre2019scf2}
Marcos Cramer and Leendert van~der Torre.
\newblock {SCF2 – An argumentation semantics for rational human judgments on
  argument acceptability}.
\newblock In Christoph Beierle, Marco Ragni, Frieder Stolzenburg, and Matthias
  Thimm, editors, {\em Proceedings of the 8th Workshop on Dynamics of Knowledge
  and Belief (DKB-2019) and the 7th Workshop KI \& Kognition (KIK-2019)}, page
  24–35, 2019.

\bibitem{Dung1995acceptability}
Phan~Minh Dung.
\newblock {On the acceptability of arguments and its fundamental role in
  nonmonotonic reasoning, logic programming and n-person games}.
\newblock {\em Artificial Intelligence}, 77:321–357, 1995.

\bibitem{Dvorak2023claim-centric}
Wolfgang Dvořák, Anna Rapberger, and Stefan Woltran.
\newblock A claim-centric perspective on abstract argumentation semantics:
  Claim-defeat, principles, and expressiveness.
\newblock {\em Artificial Intelligence}, 324:104011, \#no 2023.

\bibitem{Gonzalez2021labeled}
Melisa {Esca Ñuela Gonzalez}, Maximiliano Budán, Gerardo Simari, and
  Guillermo Simari.
\newblock Labeled bipolar argumentation frameworks.
\newblock {\em Journal of Artificial Intelligence Research}, 70:1557–1636,
  2021.

\bibitem{Hamblin1970fallacies}
Charles~Leonard Hamblin.
\newblock {\em {Fallacies}}.
\newblock Methuen and Co, 1970.

\bibitem{Hunter2020epistemic}
Anthony Hunter, Sylwia Polberg, and Matthias Thimm.
\newblock Epistemic graphs for representing and reasoning with positive and
  negative influences of arguments.
\newblock {\em Artificial Intelligence}, 281(103236), 2020.

\bibitem{Johnson2006logical}
Ralph~H. Johnson and J.~Anthony Blair.
\newblock {\em {Logical Self-Defense}}.
\newblock International Debate Education Association, 2006.

\bibitem{Koszowy2022pragmatic}
Marcin Koszowy, Katarzyna Budzynska, Barbara Konat, Steve Oswald, and Pascal
  Gygax.
\newblock {A Pragmatic Account of Rephrase in Argumentation: Linguistic and
  Cognitive Evidence}.
\newblock {\em Informal Logic}, 42(1):49–82, 2022.

\bibitem{KrabbeVanLaar2011ways}
Erik~C.W. Krabbe and Jan~Albert van Laar.
\newblock {The ways of criticism}.
\newblock {\em Argumentation}, 25(2):199–227, 2011.

\bibitem{Li2012probabilistic}
Hengfei Li, Nir Oren, and Timothy~J. Norman.
\newblock {Probabilistic argumentation frameworks}.
\newblock In Sanjay Modgil, Nir Oren, and Francesca Toni, editors, {\em Theorie
  and Applications of Formal Argumentation (TAFA 2011)}, page 1–16. Springer,
  2012.

\bibitem{Modgil2013general}
Sanjay Modgil and Henry Prakken.
\newblock {A general account of argumentation with preferences}.
\newblock {\em Artificial Intelligence}, 195:361–397, 2013.

\bibitem{Modgil2014aspic}
Sanjay Modgil and Henry Prakken.
\newblock {The ASPIC+ framework for structured argumentation: A tutorial}.
\newblock {\em Argument and Computation}, 5(1):31–62, 2014.

\bibitem{Nouioua2013afs}
Farid Nouioua.
\newblock {AFs with Necessities: Further Semantics and Labelling
  Characterization}.
\newblock In Weiru Liu, V.~S. Subrahmanian, and Jef Wijsen, editors, {\em
  Scalable Uncertainty Management (SUM13)}, page 120–133. Springer, 2013.

\bibitem{NouiouaRisch2011argumentation}
Farid Nouioua and Vincent Risch.
\newblock {Argumentation frameworks with necessities}.
\newblock In Salem Benferhat and John Grant, editors, {\em Scalable Uncertainty
  Management (SUM11)}, page 163–176. Springer, 2011.

\bibitem{OrenNorman2008semantics}
Nir Oren and Timothy~J. Norman.
\newblock {Semantics for Evidence-Based Argumentation}.
\newblock In Philippe Besnard, Sylvie Doutre, and Anthony Hunter, editors, {\em
  Computational Models of Argument (COMMA08)}, page 276 – 284. IOS Press,
  2008.

\bibitem{PolbergHunter2018empirical}
Sylwia Polberg and Anthony Hunter.
\newblock {Empirical evaluation of abstract argumentation: Supporting the need
  for bipolar and probabilistic approaches}.
\newblock {\em International Journal of Approximate Reasoning}, 93:487–543,
  2018.

\bibitem{PolbergOren2014revisting}
Sylwia Polberg and Nir Oren.
\newblock {Revisiting Support in Abstract Argumentation Systems}.
\newblock In Simon Parsons, Nir Oren, Chris Reed, and Federico Cerutti,
  editors, {\em Computational Models of Argument (COMMA 2014)}, pages 369--376.
  IOS Press, 2014.

\bibitem{Potyka2018continous}
Nico Potyka.
\newblock {Continuous Dynamical Systems for Weighted Bipolar Argumentation}.
\newblock In {\em Principles of Knowledge Representation and Reasoning (KR18)},
  page 148–157. Association for the Advancement of Artificial Intelligence,
  2018.

\bibitem{Prakken2010abstract}
Henry Prakken.
\newblock {An abstract framework for argumentation with structured arguments}.
\newblock {\em Argument and Computation}, 1(2):93–124, 2010.

\bibitem{Pu2015attacker}
Fuan Pu, Jian Luo, Yulai Zhang, and Guiming Luo.
\newblock {Attacker and Defender Counting Approach for Abstract Argumentation}.
\newblock In David~C. Noelle, Rick Dale, Anne~S. Warlaumont, Jeff Yoshimi,
  Teenie Matlock, Carolyn~D. Jennings, and Paul~P. Maglio, editors, {\em
  Proceedings of the 37th Annual Meeting of the Cognitive Science Society
  (CogSci15)}, page 1913–1918, 2015.

\bibitem{Toni2014tutorial}
Francesca Toni.
\newblock {A tutorial on assumption-based argumentation}.
\newblock {\em Argument and Computation}, 5(1):89–117, 2014.

\bibitem{Toulmin2003uses}
Stephen~E. Toulmin.
\newblock {\em {The Uses of Argument: Updated Edition}}.
\newblock Cambridge University Press, 2003.

\bibitem{Ulbricht2024nonflat}
Markus Ulbricht, Nico Potyka, Anna Rapberger, and Francesca Toni.
\newblock {Non-flat ABA Is an Instance of Bipolar Argumentation}.
\newblock In {\em Proceedings of the AAAI Conference on Artificial Intelligence
  (AAAI24)}, volume~38, page 10723–10731, 2024.

\bibitem{vanEemeren2009fallacies}
Frans van Eemeren, Bart Garssen, and Bert Meuffels.
\newblock {\em Fallacies and Judgements of Reasonableness. Empirical Research
  Concerning the Pragma-Dialectical Discussion Rules}.
\newblock Springer, 2009.

\bibitem{VanEemeren1984speech}
Frans~H. van Eemeren and Rob Grootendorst.
\newblock {\em {Speech acts in argumentative discussions. A theoretical model
  for analysis of discussions directed towards solving conflicts of opinion}}.
\newblock Foris Publications, 1984.

\bibitem{VanEemeren2004systematic}
Frans~H. van Eemeren and Rob Grootendorst.
\newblock {\em {A Systematic Theory of Argumentation. The pragma-dialectical
  approach}}.
\newblock Cambridge University Press, 2004.

\bibitem{VanLaarKrabbe2012burden}
Jan~Albert van Laar and Erik~C.W. Krabbe.
\newblock {The Burden of Criticism: Consequences of Taking a Critical Stance}.
\newblock {\em Argumentation}, 27(2):201–224, 2013.

\bibitem{Vesic2022graphical}
Srdjan Vesic, Bruno Yun, and Predrag Teovanovic.
\newblock {Graphical Representation Enhances Human Compliance with Principles
  for Graded Argumentation Semantics}.
\newblock In Catherine Pelachaud, Matthew~E. Taylor, Piotr Faliszewski, and
  Viviana Mascardi, editors, {\em Autonomous Agents and Multiagent Systems
  (AAMAS22)}, page 1319 – 1327. International Foundation for Autonomous
  Agents and Multiagent Systems, 2022.

\bibitem{Walton2008argumentation}
Douglas Walton, Chris Reed, and Fabrizio Macagno.
\newblock {\em {Argumentation Schemes}}.
\newblock Cambridge University Press, 2008.

\bibitem{Yu2023principle}
Liuwen Yu, Caren {Al Anaissy}, Srdjan Vesic, Xu~Li, and Leendert van~der Torre.
\newblock {A Principle-Based Analysis of Bipolar Argumentation Semantics}.
\newblock In Sarah Gaggl, Maria~Vanina Martinez, and Magdalena Ortiz, editors,
  {\em Logics in Artificial Intelligence (JELIA23)}, page 209–224. Springer,
  2023.

\end{thebibliography}
	
	
	\newpage
	\appendix
	
	\section{Proofs}
	\setcounter{theorem}{1}
	
	\begin{proposition}
		Strongly coherent extensions fail directionality.
		
		Weakly coherent extensions satisfy directionality.
	\end{proposition}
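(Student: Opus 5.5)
For the negative claim I would give a small counterexample, and for the positive claim a two-direction argument built on Dung's fundamental lemma.

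\textbf{Strong coherence.} I would take three arguments: $a_1:\arg{s}{t}$, $a_2:\arg{t}{u}$ and $a_3:\arg{r}{w}$, with $w\in\uncuts{a_2}$ and no other incompatibilities. Then $a_1\supp a_2$, $a_3\att a_2$, and $a_3$ is unattacked. To check the last point: $u$ is not incompatible with $r$, with $w$ or with $n(a_3)$, so $a_2$ does not attack $a_3$, and the same holds for $a_1$. Let $U=\{a_1\}$. Neither $a_2$ nor $a_3$ attacks $a_1$, and neither provides $s$, so $A\backslash U$ neither attacks nor supports $U$.

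In $\sbaf_{|U}$ the set $\{a_1\}$ is strongly coherent. In $\sbaf$, any $E'$ with $a_1\in E'$ supports $a_2$. Adding $a_3$ would supply the undercutting information $w$, but $a_3\att a_2$, and an admissible extension containing $a_1$ cannot contain both $a_2$ and $a_3$. So consider $a_3\notin E'$. Then $E'\subseteq\{a_1,a_2\}$ contains no undercutting information for $a_2$, and strong support-closure forces $a_2\in E'$. But $a_2$ is attacked by the unattacked argument $a_3$, so $E'$ is not admissible. Hence no strongly coherent $E'$ satisfies $E'\cap U=\{a_1\}$, and directionality fails.

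\textbf{Weak coherence, from $U$ to $A$.} Let $E$ be weakly coherent in $\sbaf_{|U}$. I would build $E'$ by iteration. Set $E_0=E$, and let $E_{i+1}$ add to $E_i$ an argument that $E_i$ supports, defends, and holds no undercutting information for. Each step preserves admissibility by Dung's fundamental lemma: if $E_i$ is admissible and defends $a$, then $E_i\cup\{a\}$ is admissible. Since $A$ is finite, the process stops, and the final set satisfies weak support-closure by construction.

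It remains to show that no argument of $U$ is ever added. I would read the hypothesis ${A\backslash U}\nsupp U$ as saying that whenever a set $E''$ supports $b\in U$, already $E''\cap U$ does. Let $b\in U$ be the first argument of $U$ that would be added. Then $E$ supports $b$. All attackers of $b$ lie in $U$, so the arguments of $E_i$ attacking them lie in $U$, i.e.\ in $E$; hence $E$ defends $b$. Finally, $E\subseteq E_i$ carries no undercutting information for $b$. Weak support-closure of $E$ in $\sbaf_{|U}$ then gives $b\in E$, a contradiction. So $E'\cap U=E$.

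\textbf{Weak coherence, from $A$ to $U$.} Let $E'$ be weakly coherent in $\sbaf$ and put $E=E'\cap U$. Then $E$ is conflict-free. It defends its arguments in $\sbaf_{|U}$: every attacker $c$ of $E$ lies in $U$, so any $d\in E'$ attacking $c$ lies in $U$ as well, i.e.\ $d\in E$.

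For support-closure, take $a\in U$ that $E$ supports and defends, with $E$ holding no undercutting information for $a$. Then $E'$ also supports and defends $a$. The step I expect to be the main obstacle is excluding undercutting information for $a$ coming from $E'\backslash U$. Such information is not an attack, so the hypothesis $A\backslash U\not\att U$ does not obviously forbid it. I would first try to close this gap by reading the independence of $U$ as also excluding undercutting information from $A\backslash U$, in the same way as the paper's ``contains undercutting information'' clause. If the bare hypotheses do not cover this, the statement would need that extra assumption. Once it is excluded, weak support-closure of $E'$ gives $a\in E'\cap U=E$, which completes this direction.
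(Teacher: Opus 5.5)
\textbf{Strong coherence: the counterexample fails for your choice of $U$.} With $U=\{a_1\}$, the set $\{a_1,a_3\}$ is strongly coherent in $\sbaf$. Nothing attacks $a_1$ or $a_3$, so the set is admissible. It does support $a_2$, but it contains the undercutting information $w\in\uncuts{a_2}$, so strong support-closure does not require $a_2$. Since $\{a_1,a_3\}\cap U=\{a_1\}$, directionality is not violated at this $U$. Your case split dismisses $a_3\in E'$ because $a_2$ and $a_3$ cannot coexist, but accepting $a_3$ is exactly what exempts $E'$ from accepting $a_2$.

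The same framework does give a counterexample with $U=\{a_1,a_3\}$. The only outside argument, $a_2$, neither attacks nor helps support $a_1$ or $a_3$, and $\{a_1\}$ is strongly coherent in $\sbaf_{|U}$. In $\sbaf$ the only candidates fail: $\{a_1\}$ must add $a_2$, and $\{a_1,a_2\}$ is not admissible because the unattacked $a_3$ attacks $a_2$. The paper avoids undercuts altogether: $a_1:\arg{s,t,u}{v}$ jointly supports the mutually attacking $a_2:\arg{s}{r}$ and $a_3:\arg{t}{\neg r}$, with $U=\{a_1\}$.

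\textbf{Weak coherence, from $U$ to $A$.} Your direction is correct under your reading of the support hypothesis. You should add that $E$ is already admissible in $\sbaf$ (all attackers of $U$ lie in $U$) before invoking Dung's lemma. Your argument is more careful than the paper's, which takes the closure of $E$ under $R^{Sent(E)}_{\sbaf}$. That operator freezes the sentence set at $Sent(E)$, so arguments supported only through newly added conclusions are never added, and the result need not be weakly support-closed.

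Your strong reading of the support hypothesis is genuinely needed. Take $e_1:\arg{x}{p}$, $e_2:\arg{p}{q}$, $e_3:\arg{x,q}{v}$ and $U=\{e_1,e_3\}$. The set $\{e_2\}$ supports nothing in $U$, and $\{e_1\}$ is weakly coherent in $\sbaf_{|U}$. Yet every weakly coherent $E'\ni e_1$ in $\sbaf$ is forced to take $e_2$ and then $e_3$.

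\textbf{Weak coherence, from $A$ to $U$.} Your suspicion about the flagged step is correct, and the step cannot be closed from the stated hypotheses. The paper's proof simply asserts that $E$ ``clearly'' satisfies weak support-closure. Take $b_1:\arg{x}{p}$, $b_2:\arg{p}{q}$, $b_3:\arg{z}{y}$, with $z\in\uncuts{b_2}$ as the only incompatibility.
\begin{itemize}
\item There are no attacks, and $U=\{b_1,b_2\}$ is independent of $b_3$ even in your strong sense.
\item $\{b_1,b_3\}$ is weakly coherent in $\sbaf$, because $z$ exempts $b_2$.
\item $\{b_1\}$ is not weakly coherent in $\sbaf_{|U}$, since it supports and defends $b_2$ with no undercutting information.
\end{itemize}
So the weak half of the statement needs an extra hypothesis. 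One option is your condition that $A\backslash U$ carries no undercutting information for $U$. The other is saturation: then the paper's lemma, that an admissible set defending $a$ holds no undercutting information for $a$, closes the step. In the example, saturation adds the minimal argument $\arg{z}{z}$, which would attack $b_2$ and be supported by $b_3$, forcing $b_3$ into $U$.
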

	\begin{proof}
		Figure \ref{fig:noDirectForStrong} gives the counterexample for directionality of strongly coherent extensions. For weakly coherent extensions, first let $E\subseteq U$ be such that $E=E'\cap U$ for some weakly coherent $E'\subseteq A$. Then $E$ clearly satisfies weak support-closure. Further, it is also admissible, since $E$ is conflict-free and defended against any attacks within $U$. Thus $E$ is weakly coherent. Now we assume that $E$ is weakly coherent in $\sbaf_{|U}$. Note that the closure of $E$ under $R^{Sent(E)}_\sbaf$ is weakly coherent, as shown by Lemma \ref{lem:R-pres-adm}. Thus there exists a weakly coherent extension $E'\subseteq A$ such that $E=E'\cap U$.
		
	\end{proof}
	
	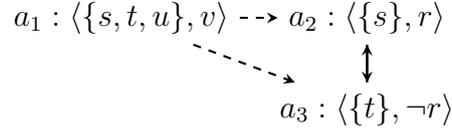
\begin{figure}[h]
		\centering
		\begin{tikzpicture}[node distance=0.5cm]
			
			\node (A1) {$a_1:\arg{s,t,u}{v}$};
			\node (A2) [right=of A1] {$a_2:\arg{s}{r}$};
			\node (A3) [below=of A2] {$a_3:\arg{t}{\neg r}$};
			
			\draw [s-arrow] (A1) -- (A2);
			\draw [s-arrow] (A1) -- (A3);
			\draw [a-arrow] (A2) -- (A3);
			\draw [a-arrow] (A3) -- (A2);
			
		\end{tikzpicture}
		
		\caption{$\{a_1\}$ is strongly coherent if we disregard arguments $a_2$ and $a_3$. However, in the presence of those arguments, accepting $a_1$ would, by strong support-closure, require to accept both $a_2$ and $a_3$. But then the extension would not be conflict-free. Thus, there is no strongly coherent extension containing $a_1$ in the full framework.}
		\label{fig:noDirectForStrong}
	\end{figure}
	
	\begin{lemma}\label{lem:R-monotonic}
		Let $\sbaff$ be an SBAF, $S\subseteq Sent(A)$, and $E,E'\subseteq A$.
		If $E\subseteq E'$, then $R^S(E)\subseteq R^S(E').$
	\end{lemma}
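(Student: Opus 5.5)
The claim follows by unfolding the definition of $R^S$ and reducing it to monotonicity of defence. Fix $a\in R^S(E)$. By Definition \ref{def:RespFunct}, this means two things: $a\in\sarg(S)$, and $E$ defends $a$. We must show $a\in R^S(E')$.

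The first condition, membership in $\sarg(S)$, does not involve $E$ at all. It depends only on $S$ (Definition \ref{def:SargSet}), so it holds unchanged when we pass from $E$ to $E'$.

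For the second condition, we show that defence is upward monotone in the defending set. Let $b\in A$ with $b\att a$. Since $E$ defends $a$, we have $E\att b$, so there is some $c\in E$ with $c\att b$. Because $E\subseteq E'$, also $c\in E'$, hence $E'\att b$. As $b$ was an arbitrary attacker of $a$, the set $E'$ defends $a$.

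Combining the two conditions gives $a\in R^S(E')$, and therefore $R^S(E)\subseteq R^S(E')$. There is no genuine obstacle here. The only point worth stating explicitly is that the $\sarg(S)$ filter is independent of $E$, so monotonicity reduces entirely to the elementary monotonicity of defence. This lemma is what justifies the least-fixpoint construction in Definition \ref{def:WargSet} via Knaster--Tarski, or via finite iteration since $A$ is finite.
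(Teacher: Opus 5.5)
Your proof is correct and follows the same approach as the paper: unfold $R^S(E)$, note that membership in $\sarg(S)$ does not depend on $E$, and use that defence is preserved when the defending set grows. The paper asserts that last step as immediate, whereas you spell it out via an arbitrary attacker $b$ of $a$.
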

	\begin{proof}
		Take any $E, E'\subseteq A$ such that $E\subseteq E'$. Let $a\in R^S(E)$. Then we know that $a\in\sarg(S)$, and $E$ defends $a$. We immediately get $E'$ defends $a$, which is sufficient for $a\in R^S(E')$ as desired. 
	\end{proof}
	
	\begin{lemma}\label{lem:init}
		Let $\sbaff$ be an SBAF and $S\subseteq Sent(A)$ a compatible language extension. 
		Then there exists a unique maximal admissible subset of $\{a\in A\ |\ Sent(a)\subseteq S\text{ and }\inc{n(a)}\cap S=\emptyset\}$.
	\end{lemma}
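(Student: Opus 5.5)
The plan is to show that the union of all admissible subsets of $X:=\{a\in A\ |\ Sent(a)\subseteq S\text{ and }\inc{n(a)}\cap S=\emptyset\}$ is itself admissible. Since $A$ is finite, there are finitely many admissible subsets of $X$, and $\emptyset$ is one of them. So the union $M$ is well defined and contains every admissible subset of $X$. If $M$ is admissible, it is the unique maximal one. Uniqueness then follows at once: any maximal admissible subset $E$ satisfies $E\subseteq M$, so maximality forces $E=M$.

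Defence is the easy half, handled as in Dung's fundamental lemma. Let $a\in M$ and $b\att a$. Then $a\in E$ for some admissible $E\subseteq X$, so $E\att b$. Since $E\subseteq M$, also $M\att b$. Hence $M$ defends all its arguments.

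The main obstacle is conflict-freeness, because a union of admissible sets is not conflict-free in general. Here the compatibility of $S$ and the definition of $X$ should do the work. Suppose $a,b\in M$ and $a\att b$. By Definition \ref{def:SuppAtt}, either $Conc(a)\in\inc{s}$ for some $s\in Sent(b)$, or $Conc(a)\in\inc{n(b)}$.
\begin{itemize}
\item In the first case, $Conc(a)\in Sent(a)\subseteq S$ and $s\in Sent(b)\subseteq S$. So $S$ contains two incompatible sentences, contradicting compatibility.
\item In the second case, $Conc(a)\in S\cap\inc{n(b)}$, contradicting $b\in X$.
\end{itemize}
So in fact all of $X$ is conflict-free, and hence so is $M$. (This uses symmetry of $\emptinc$ only implicitly; the definition of compatibility already covers both orderings.)

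Combining the two halves, $M$ is admissible, and it is the unique maximal admissible subset of $X$.
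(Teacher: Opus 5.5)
Your proposal is correct and follows essentially the same route as the paper: show that the whole candidate set is conflict-free using compatibility of $S$ and the condition $\inc{n(a)}\cap S=\emptyset$, then conclude that the union of all its admissible subsets is admissible. That union is conflict-free because it is a subset of the candidate set, and defended because it is a union of defended sets, so it is the unique maximal admissible subset. Your conflict-freeness step in fact does not use symmetry of incompatibility at all.
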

	\begin{proof}
		We first show that $\{a\in A\ |\ Sent(a)\subseteq S\text{ and }\uncuts{a}\cap S=\emptyset\}$ is conflict-free. Suppose it contains argument $a,b$ such that $a\to b$. Then either $Conc(a)\in\inc{s}$ for some $s\in Sent(b)$, which contradicts compatibility of $S$, or $Conc(a)\in\inc{n(b)}$, contradicting that $\inc{n(b)}\cap S=\emptyset$. Thus, the set is conflict-free.
		
		Now we show that the union of all admissible subsets of the set is admissible, thus clearly being its unique maximal admissible subset. Since the whole set is conflict-free, so are all subsets and thus also their union. Further, the union of defended sets is defended, thus the union of all admissible subsets is also defended and hence admissible itself.
	\end{proof}
	
	\begin{lemma}\label{lem:ArgwEx}
		Let $\sbaff$ be an SBAF and $S\subseteq Sent(A)$ a compatible language extension.
		Then there exists a least fixpoint of $R^S$ containing $Init(S)$.
	\end{lemma}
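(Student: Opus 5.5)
The plan is a Kleene-style iteration of $R^S$ starting from $Init(S)$, using the monotonicity of $R^S$ (Lemma \ref{lem:R-monotonic}) and the finiteness of $A$. By Lemma \ref{lem:init}, $Init(S)$ is well-defined as the unique maximal admissible subset of $\{a\in A\ |\ Sent(a)\subseteq S\text{ and }\uncuts{a}\cap S=\emptyset\}$.

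The key first step is to show $Init(S)\subseteq R^S(Init(S))$. Take $a\in Init(S)$.
\begin{itemize}
\item Since $Sent(a)\subseteq S$, in particular $Prem(a)\subseteq S$.
\item Since $\uncuts{a}\cap S=\emptyset$, we get $a\in\sarg(S)$ by Definition \ref{def:SargSet}.
\item Since $Init(S)$ is admissible, it defends each of its members, so $Init(S)$ defends $a$.
\end{itemize}
Hence $a\in R^S(Init(S))$. This is the only step that uses the specific shape of $Init(S)$, and I expect it to be the main point to verify. In particular, it is the admissibility of $Init(S)$ that makes it a post-fixpoint, rather than merely some subset of $\sarg(S)$.

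Next I define $E_0:=Init(S)$ and $E_{i+1}:=R^S(E_i)$. I show by induction that $E_i\subseteq E_{i+1}$. The base case is the step above. For the inductive step, if $E_i\subseteq E_{i+1}$, then Lemma \ref{lem:R-monotonic} gives $E_{i+1}=R^S(E_i)\subseteq R^S(E_{i+1})=E_{i+2}$. The chain is increasing in the finite set $2^A$, so it stabilises at some $F=E_k=R^S(F)$. This $F$ is a fixpoint containing $E_0=Init(S)$.

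For leastness, let $G$ be any fixpoint of $R^S$ with $Init(S)\subseteq G$. I show $E_i\subseteq G$ by induction. The base case is the assumption $E_0=Init(S)\subseteq G$. For the step, if $E_i\subseteq G$, then monotonicity gives $E_{i+1}=R^S(E_i)\subseteq R^S(G)=G$. Hence $F\subseteq G$, so $F$ is the least fixpoint of $R^S$ containing $Init(S)$.

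Alternatively, one could apply Knaster--Tarski to the complete lattice $\{E\subseteq A\ |\ Init(S)\subseteq E\}$. This also needs $R^S$ to map the lattice into itself, which follows from the same post-fixpoint step and monotonicity. The iteration argument, however, is more elementary.
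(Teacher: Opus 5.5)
Your proof is correct, and it differs from the paper's in two ways.

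First, the paper argues in one line: $\{E\subseteq A \mid Init(S)\subseteq E\}$ is a complete lattice and $R^S$ is monotone (Lemma \ref{lem:R-monotonic}), so Knaster--Tarski gives the claim. That invocation tacitly requires $R^S$ to map this lattice into itself. By monotonicity, this amounts exactly to your post-fixpoint step $Init(S)\subseteq R^S(Init(S))$, which the paper only verifies afterwards (Lemma \ref{lem:R-pres-adm} and the base case of Lemma \ref{lem:Argw-admissible}). Your explicit check via admissibility of $Init(S)$ is therefore the real content of the lemma, not a formality: for an arbitrary monotone map, the fixpoints containing a given set need not have a least element.

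Second, you use Kleene iteration and the finiteness of $A$ instead of Knaster--Tarski. This loses nothing, since SBAFs are finite by definition. It gains an explicit description of the fixpoint as the limit of the increasing chain $E_i=R^S_i(Init(S))$. Your argument thus also yields what the paper proves separately in Lemmas \ref{lem:Argw-admissible} and \ref{prop:ArgwConst}. Your leastness argument, an induction showing $E_i\subseteq G$ for every fixpoint $G\supseteq Init(S)$, is also more direct than the paper's proof by contradiction there.
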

	\begin{proof}
		The set $\{E\subseteq A\ |\ Init(S)\subseteq E\}$ is a complete lattice. Further, Lemma \ref{lem:R-monotonic} gives that $R^S$ is a monotonic function. Thus the claim follows from the Knaster-Tarski fixpoint theorem.
	\end{proof}
	
	\begin{proposition}
		The weak argument set is well-defined.
	\end{proposition}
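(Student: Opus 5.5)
Well-definedness amounts to checking that the two ingredients of Definition \ref{def:WargSet} exist and are unique: the initial set $Init(S)$, and the least fixpoint of $R^S_\sbaf$ containing it. Both have already been established in the preceding lemmas, so the proof combines them and checks that the "least" object is unique. Throughout, $S$ is a compatible language extension with $S\subseteq Sent(A)$, since the weak argument set is only defined for compatible $S$.

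First, I invoke Lemma \ref{lem:init}. The set $\{a\in A\mid Sent(a)\subseteq S,\ \uncuts{a}\cap S=\emptyset\}$ is conflict-free, and admissible subsets are closed under union. Hence there is a unique $\subseteq$-maximal admissible subset, so $Init(S)$ is a well-defined, single set. Here "largest" in the definition is read as "greatest w.r.t.\ $\subseteq$"; Lemma \ref{lem:init} shows this greatest element exists, not merely a maximal one.

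Second, I invoke Lemma \ref{lem:ArgwEx}. The interval $\{E\subseteq A\mid Init(S)\subseteq E\}$ is a complete lattice, and $R^S$ is monotone by Lemma \ref{lem:R-monotonic}. The point needing care is that Knaster–Tarski needs $R^S$ to map this lattice into itself, i.e.\ $Init(S)\subseteq R^S(E)$ whenever $Init(S)\subseteq E$. I verify this directly.
\begin{itemize}
\item Take any $a\in Init(S)$. Then $Prem(a)\subseteq Sent(a)\subseteq S$ and $\uncuts{a}\cap S=\emptyset$, so $a\in\sarg(S)$.
\item Since $Init(S)$ is admissible, it defends $a$.
\item Defence is monotone in the defending set, so $E\supseteq Init(S)$ also defends $a$. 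Hence $a\in R^S(E)$.
\end{itemize}
So $R^S$ restricts to a monotone self-map of the interval. Knaster–Tarski then gives a least fixpoint in this interval, namely the least fixpoint of $R^S$ that contains $Init(S)$. A least element is unique, so $\warg(S)$ is a uniquely determined set. Equivalently, it is the intersection of all fixpoints of $R^S$ containing $Init(S)$, and that family is non-empty.

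The main obstacle is this self-map check, since Lemma \ref{lem:ArgwEx} as stated glosses over it. If it failed, the least fixpoint of $R^S$ over all of $2^A$ might not contain $Init(S)$, and "least fixpoint containing $Init(S)$" would need a different justification. The admissibility of $Init(S)$ from Lemma \ref{lem:init}, together with the monotonicity of defence, settles the issue.
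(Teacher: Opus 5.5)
Your proof is correct and follows the paper's route. The paper's proof simply combines Lemma \ref{lem:init} (the union of all admissible subsets of the candidate set is admissible, so $Init(S)$ is a unique greatest element) with Lemma \ref{lem:ArgwEx} (monotonicity of $R^S$ plus Knaster--Tarski on the interval above $Init(S)$). Your extra check that $R^S$ maps this interval into itself, because $Init(S)\subseteq\sarg(S)$ and $Init(S)$ is admissible, is a step the paper's Lemma \ref{lem:ArgwEx} leaves implicit; it is essentially the first half of Lemma \ref{lem:R-pres-adm}, and you handle it correctly.
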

	\begin{proof}
		By Lemmas \ref{lem:init} and \ref{lem:ArgwEx}.
	\end{proof}
	
	\begin{lemma}\label{lem:R-pres-adm}
		Let $\sbaff$ be an SBAF, $S\subseteq Sent(A)$.
		Then for any admissible extension $E\subseteq A$ such that for all $a\in E$: $a\in\sarg(S)$, we have that $E\subseteq R^{S}(E)$ and  $R^S(E)$ is admissible.
	\end{lemma}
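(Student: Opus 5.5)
The plan is to follow the proof of Dung's Fundamental Lemma, carried out relative to the characteristic function $R^S$ of Definition \ref{def:RespFunct}. There are three things to check: the inclusion $E\subseteq R^S(E)$, that $R^S(E)$ defends all of its arguments, and that $R^S(E)$ is conflict-free. The hypothesis $E\subseteq\sarg(S)$ is only needed for the first item. Compatibility of $S$ should not be needed at all.

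\emph{Inclusion.} Take $a\in E$. By hypothesis $a\in\sarg(S)$. Since $E$ is admissible, $E$ defends $a$. By the definition of $R^S$, this gives $a\in R^S(E)$, so $E\subseteq R^S(E)$.

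\emph{Defence.} Take $a\in R^S(E)$ and any $b$ with $b\att a$. By definition of $R^S$, $E$ defends $a$, so $E\att b$. By the inclusion just shown, $R^S(E)\att b$. Hence $R^S(E)$ defends each of its members. This is the same upward-persistence of defence already used in Lemma \ref{lem:R-monotonic}.

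\emph{Conflict-freeness.} I expect this to be the only step needing a genuine argument, and it is the classic one. Suppose $a,b\in R^S(E)$ with $a\att b$.
\begin{itemize}
\item Since $E$ defends $b$, there is some $e\in E$ with $e\att a$.
\item Since $E$ also defends $a$, and $e$ attacks $a$, there is some $e'\in E$ with $e'\att e$.
\item This contradicts the conflict-freeness of $E$, which holds because $E$ is admissible.
\end{itemize}
Therefore $R^S(E)$ is conflict-free. Together with the defence step, $R^S(E)$ is admissible, which completes the lemma.
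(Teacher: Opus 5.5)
Your proposal is correct and follows the same route as the paper: the inclusion $E\subseteq R^S(E)$ comes from admissibility plus $E\subseteq\sarg(S)$, defence carries over to $R^S(E)$ through that inclusion, and conflict-freeness holds because every member of $R^S(E)$ is defended by the conflict-free set $E$. The only difference is that you write out the standard Dung-style argument for conflict-freeness, whereas the paper cites it as a known fact.
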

	\begin{proof}
		Let $E$ be admissible and $a\in\sarg(S)$ for all $a\in E$. Take any $a\in E$, then, since $E$ is admissible, it defends $a$, thus $a\in R^S(E)$ as desired.
		
		Note that $E\subseteq R^{S}(E)$ gives us that $R^S(E)$ is defended. Further, $R^S(E)$ is a subset of the set of defended arguments by $E$, which is known to be conflict-free. Thus, $R^S(E)$ is also conflict-free.
	\end{proof}
	
	\begin{lemma}\label{lem:Argw-admissible}
		Let $\sbaff$ be an SBAF and $S\subseteq Sent(A)$ a compatible language extension.
		Then for each $i\in\mathbb{N}$, $R^S_i(Init(S))\subseteq R^S_{i+1}(Init(S))$, and $R^S_{i+1}(Init(S))$ is admissible.
	\end{lemma}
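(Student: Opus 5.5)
The plan is to prove the claim by induction on $i$, where $R^S_0(Init(S)) := Init(S)$ and $R^S_{i+1}(Init(S)) := R^S(R^S_i(Init(S)))$. The work is done by Lemma \ref{lem:R-pres-adm}. If an admissible $E$ consists only of arguments in $\sarg(S)$, then $E\subseteq R^S(E)$ and $R^S(E)$ is admissible. So at each stage it suffices to check these two hypotheses for the current iterate.

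\textbf{Base case ($i=0$).} $Init(S)$ is well-defined because $S$ is compatible (Lemma \ref{lem:init}), and it is admissible by definition. Next I check that every $a\in Init(S)$ lies in $\sarg(S)$. By definition of $Init(S)$ we have $Sent(a)\subseteq S$, so in particular $Prem(a)\subseteq S$, and we have $\uncuts{a}\cap S=\emptyset$. These are exactly the two conditions of Definition \ref{def:SargSet}. Lemma \ref{lem:R-pres-adm} applied to $E=Init(S)$ then gives
\[
R^S_0(Init(S))\subseteq R^S_1(Init(S)),
\]
with $R^S_1(Init(S))$ admissible.

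\textbf{Inductive step.} Assume $R^S_i(Init(S))\subseteq R^S_{i+1}(Init(S))$ and that $R^S_{i+1}(Init(S))$ is admissible. Every element of $R^S_{i+1}(Init(S))=R^S(R^S_i(Init(S)))$ is in $\sarg(S)$, directly from Definition \ref{def:RespFunct}, since $R^S$ only ever outputs members of $\sarg(S)$. Hence $E=R^S_{i+1}(Init(S))$ satisfies both hypotheses of Lemma \ref{lem:R-pres-adm}. That lemma yields $R^S_{i+1}(Init(S))\subseteq R^S_{i+2}(Init(S))$, with $R^S_{i+2}(Init(S))$ admissible, which closes the induction.

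The inclusion part of the step could also be obtained from the induction hypothesis by monotonicity (Lemma \ref{lem:R-monotonic}), but Lemma \ref{lem:R-pres-adm} gives inclusion and admissibility together.

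The only point needing care is the base case. $Init(S)$ is not itself an output of $R^S$, so its containment in $\sarg(S)$ has to be read off from the definition of $Init(S)$, as done above. Compatibility of $S$ is used only to guarantee that $Init(S)$ exists.
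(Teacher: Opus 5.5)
Your proof is correct and follows the paper's argument essentially verbatim: induction on $i$ with Lemma \ref{lem:R-pres-adm} doing the work, reading $Init(S)\subseteq\sarg(S)$ off the definition of $Init(S)$ in the base case, and using in the step that $R^S$ only outputs members of $\sarg(S)$. Your indexing in the inductive step, where you apply the lemma to $R^S_{i+1}(Init(S))$ using the induction hypothesis, is in fact slightly cleaner than the paper's.
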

	\begin{proof}
		We proceed by induction on $i$. 
		
		For the base case, we know that $Init(S)$ is admissible and it is clear by its construction that for all $a\in Init(S)$, we have $Prem(a)\subseteq S$ and $\inc{n(a)}\cap S=\emptyset$, i.e.\ $a\in\sarg(S)$. Thus Lemma \ref{lem:R-pres-adm} gives us $Init(S)\subseteq R^S(Init(S))$ and $R^S(Init(S))$ is admissible. 
		
		For the induction step, note that Lemma \ref{lem:R-pres-adm} applies to all $R^S_{i+1}(Init(S))$, as $R^S_{i}(Init(S))$ is admissible by the induction hypothesis and by definition of $R^S$, we have that for all $a\in R^S_{i}(Init(S))$, $a\in\sarg(S)$.
	\end{proof}
	
	\begin{lemma}\label{prop:ArgwConst}
		Let $\sbaff$ be an SBAF and $S\subseteq Sent(A)$ a compatible language extension.
		Then $$Arg_w(S)=\bigcup_{i\in\mathbb{N}}R^S_i(Init(S)).$$
	\end{lemma}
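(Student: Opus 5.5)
We show that the union $U:=\bigcup_{i\in\mathbb{N}}R^S_i(Init(S))$ is a fixpoint of $R^S$ that contains $Init(S)$ and lies below every such fixpoint; it is then the least one, which is $\warg(S)$ by Definition \ref{def:WargSet}. Here $R^S_0(Init(S))=Init(S)$ and $R^S_{i+1}=R^S(R^S_i)$.

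\emph{Containment of $Init(S)$ and the chain.} $Init(S)=R^S_0(Init(S))\subseteq U$ holds trivially. By Lemma \ref{lem:Argw-admissible}, the sets $R^S_i(Init(S))$ form an increasing chain. Since $A$ is finite, the chain stabilises: there is some $k$ with $R^S_k(Init(S))=R^S_{k+1}(Init(S))$. From then on all later terms are equal, so $U=R^S_k(Init(S))$. This gives $R^S(U)=R^S_{k+1}(Init(S))=U$, so $U$ is a fixpoint containing $Init(S)$. If one prefers to avoid finiteness, the same conclusion follows from a compactness argument on defence: any attacker of an argument in $R^S(U)$ is countered by a single argument, which lies in some stage of the chain. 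Finiteness is simpler, so we use it.

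\emph{Minimality.} Let $F$ be any fixpoint of $R^S$ with $Init(S)\subseteq F$. We show $R^S_i(Init(S))\subseteq F$ by induction on $i$. The base case $i=0$ is the assumption $Init(S)\subseteq F$. For the step, Lemma \ref{lem:R-monotonic} gives $R^S_{i+1}(Init(S))=R^S(R^S_i(Init(S)))\subseteq R^S(F)=F$. Hence $U\subseteq F$, so $U$ is the least fixpoint containing $Init(S)$, and therefore $U=\warg(S)$.

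\emph{Main obstacle.} The only delicate point is that $\warg(S)$ was introduced through Knaster--Tarski on the lattice $\{E\mid Init(S)\subseteq E\}$ (Lemma \ref{lem:ArgwEx}). We must check that being least within that lattice is the same as being least among fixpoints containing $Init(S)$. It is, because a fixpoint of $R^S$ contains $Init(S)$ exactly when it belongs to that lattice. A second subtlety is that $R^S$ need not map the lattice into itself. We sidestep this by using the increasing chain from Lemma \ref{lem:Argw-admissible}: every iterate contains $Init(S)$, so the iteration never leaves the lattice.
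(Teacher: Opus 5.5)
Your proof is correct and follows the paper's structure. The union of the increasing chain from Lemma \ref{lem:Argw-admissible} is a fixpoint, and leastness follows by induction using the monotonicity of Lemma \ref{lem:R-monotonic}. The fixpoint step differs only slightly. You use finite stabilisation, whereas the paper argues that $R^S$ commutes with the union, which tacitly relies on finiteness anyway. Your minimality step, showing $U\subseteq F$ for every fixpoint $F\supseteq Init(S)$, is cleaner than the paper's proof by contradiction.

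One small slip in your side remark: $R^S$ does map $\{E\mid Init(S)\subseteq E\}$ into itself, since $Init(S)\subseteq R^S(Init(S))\subseteq R^S(E)$ by Lemmas \ref{lem:R-pres-adm} and \ref{lem:R-monotonic}. Nothing in your argument depends on this.
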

	\begin{proof}
		Note that Lemma \ref{lem:Argw-admissible} gives us that $\bigcup_{i\in\mathbb{N}}R^S(R^S_i(Init(S)))=\bigcup_{i\in\mathbb{N}}R^S_i(Init(S))$. Thus we can prove that $\bigcup_{i\in\mathbb{N}}R^S_i(Init(S))$ is a fixpoint by showing that $$R^S(\bigcup_{i\in\mathbb{N}}R^S_i(Init(S)))=\bigcup_{i\in\mathbb{N}}R^S(R^S_i(Init(S))).$$ 
		\begin{description}
			\item[$\subseteq:$] Let $a\in R^S(\bigcup_{i\in\mathbb{N}}R^S_i(Init(S)))$. Then $a\in\sarg(S)$, and $a$ is defended by $\bigcup_{i\in\mathbb{N}}R^S_i(Init(S))$. Since, by Lemma \ref{lem:Argw-admissible}, $R^S_i(Init(S))\subseteq R^S_{i+1}(Init(S))$, this gives us some $i$ such that $a$ is defended by $R^S_i(Init(S))$, meaning that $a\in R^S(R^S_i(Init(S)))$ and also $a\in \bigcup_{i\in\mathbb{N}}R^S(R^S_i(Init(S)))$ as desired. 
			
			\item[$\supseteq:$] Let $a\in \bigcup_{i\in\mathbb{N}}R^S(R^S_i(Init(S)))$. Again, by Lemma \ref{lem:Argw-admissible}, this gives us some $i$ such that $a\in R^S(R^S_i(Init(S)))$, meaning that $a\in\sarg(S)$, and $a$ is defended by $R^S_i(Init(S))$. The latter gives us that $a$ is defended by $\bigcup_{i\in\mathbb{N}}R^S_i(Init(S))$ and in sum we have $a\in R^S(\bigcup_{i\in\mathbb{N}}R^S_i(Init(S))).$
		\end{description}
		It remains to show that $\bigcup_{i\in\mathbb{N}}R^S_i(Init(S))$ is the least fixpoint containing $Init(S)$. Assume for a contradiction that there is some fixpoint $E$ such that $E\subsetneq \bigcup_{i\in\mathbb{N}}R^S_i(Init(S))$. By Lemma \ref{lem:Argw-admissible}, this gives us some $i$ such that $E\subsetneq R^S_i(Init(S))$. Note that we also have $Init(S)\subsetneq E$, since otherwise $Init(S)$ would itself be the least fixpoint and we would have $Init(S)=\bigcup_{i\in\mathbb{N}}R^S_i(Init(S))$. But then, by $i$ applications of Lemma \ref{lem:R-monotonic}, $R^S_i(Init(S))\subseteq R^S_i(E)=E\subsetneq R^S_i(Init(S))$, a contradiction.
		
		Thus, we conclude that $\bigcup_{i\in\mathbb{N}}R^S_i(Init(S))$ is the least fixpoint containing $Init(S)$.
		
	\end{proof}
	
	\begin{proposition}
		For a compatible language extension $S$, $\warg(S)$ is admissible.
	\end{proposition}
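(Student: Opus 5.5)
The plan is to use the explicit construction of the weak argument set given by Lemma \ref{prop:ArgwConst}, namely $\warg(S)=\bigcup_{i\in\mathbb{N}}R^S_i(Init(S))$. Admissibility of $\warg(S)$ then comes from admissibility of the stages of this increasing chain. We need both conflict-freeness and self-defence of the union.

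First, by Lemma \ref{lem:init}, $Init(S)$ is admissible. By Lemma \ref{lem:Argw-admissible}, the sequence $R^S_i(Init(S))$ is increasing in $i$ and each stage $R^S_{i+1}(Init(S))$ is admissible. Since $A$ is finite, the increasing chain stabilises at some index $k$. Therefore $\warg(S)=R^S_k(Init(S))$, which is either $Init(S)$ or some $R^S_{i+1}(Init(S))$; in both cases it is admissible.

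To avoid relying on finiteness, I would argue directly on the union. For conflict-freeness, suppose $a,b\in\warg(S)$ with $a\att b$. Then $a$ lies in some stage $R^S_i(Init(S))$ and $b$ in some stage $R^S_j(Init(S))$. Monotonicity of the chain puts both in the stage $\max(i,j)$, which is conflict-free, a contradiction. For defence, any $a\in\warg(S)$ lies in some admissible stage. That stage defends $a$, and it is a subset of the union, so the union defends $a$ as well.

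There is no serious obstacle, since the work was done in Lemmas \ref{lem:Argw-admissible} and \ref{prop:ArgwConst}. The only point needing care is the base stage $i=0$. Lemma \ref{lem:Argw-admissible} only asserts admissibility of $R^S_{i+1}(Init(S))$, so admissibility of $R^S_0(Init(S))=Init(S)$ must be taken from Lemma \ref{lem:init}, where it holds by definition as the maximal admissible subset.
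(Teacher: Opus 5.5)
Your proof is correct and follows essentially the paper's route. The paper also obtains admissibility from the chain characterisation $\warg(S)=\bigcup_{i\in\mathbb{N}}R^S_i(Init(S))$ of Lemma \ref{prop:ArgwConst}, together with admissibility preservation under $R^S$ (Lemma \ref{lem:R-pres-adm}, the ingredient behind Lemma \ref{lem:Argw-admissible}); you simply make explicit the union-of-an-increasing-chain step that the paper's one-line proof leaves implicit.
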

	\begin{proof}
		By Lemmas \ref{lem:R-pres-adm} and \ref{prop:ArgwConst}.
	\end{proof}
	
	\begin{proposition}
		Strongly adequate language extensions are also weakly adequate.
	\end{proposition}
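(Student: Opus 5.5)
The plan is to show that, for a compatible $S$, the weak argument set is always contained in the strong argument set, i.e.\ $\warg(S)\subseteq\sarg(S)$. Once this inclusion is available, the proposition follows at once. Let $S$ be strongly adequate. Then $S$ is compatible, and $S\subseteq Sent(A)$, so $\warg(S)$ is well-defined by Proposition \ref{prop:WargWD}. Sentence-closure with respect to $\sarg(S)$ says $Sent(a)\subseteq S$ for every $a\in\sarg(S)$, so in particular for every $a\in\warg(S)$. Hence $S$ is a compatible set satisfying sentence-closure w.r.t.\ $\warg(S)$, which is exactly weak adequacy. The defence requirement on $\sarg(S)$ is not needed for this direction.

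To establish $\warg(S)\subseteq\sarg(S)$, I would use the explicit construction of Lemma \ref{prop:ArgwConst}, namely $\warg(S)=\bigcup_{i\in\mathbb{N}}R^S_i(Init(S))$. It then suffices to check that every iterate lies in $\sarg(S)$.

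For the base case, I take $R^S_0(Init(S))=Init(S)$. Every $a\in Init(S)$ satisfies $Sent(a)\subseteq S$, hence $Prem(a)\subseteq S$, and also $\uncuts{a}\cap S=\emptyset$. So $a\in\sarg(S)$ directly from Definition \ref{def:SargSet}, which is the same observation made in the base case of Lemma \ref{lem:Argw-admissible}.

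For $i\geq 1$, the inclusion is immediate from Definition \ref{def:RespFunct}. For any $E$, $R^S(E)$ only contains arguments in $\sarg(S)$, so $R^S_i(Init(S))\subseteq\sarg(S)$. Taking the union over all $i$ gives the inclusion.

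There is no real obstacle. The only point needing care is that the union in Lemma \ref{prop:ArgwConst} includes the $0$-th iterate $Init(S)$, which is not itself an image of $R^S$. That is why the base case has to be argued separately, via the definition of $Init(S)$, rather than by appeal to the definition of $R^S$.
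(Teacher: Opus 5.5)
Your proof is correct. Its core is the same as the paper's: both rest on the inclusion $\warg(S)\subseteq\sarg(S)$, after which sentence-closure transfers from $\sarg(S)$ to $\warg(S)$.

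The paper, however, proves the stronger claim $\sarg(S)=\warg(S)$. It gets $\warg(S)\subseteq\sarg(S)$ from monotonicity of $R^S$ together with $R^S(\sarg(S))\subseteq\sarg(S)$, a pre-fixpoint argument rather than your iteration. It then obtains the reverse inclusion $\sarg(S)\subseteq Init(S)\subseteq\warg(S)$ from the admissibility of $\sarg(S)$. That admissibility is where the defence clause of strong adequacy enters, along with conflict-freeness, which follows from sentence-closure and compatibility.

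Your version shows that only the first inclusion is needed. That inclusion holds for every compatible $S$, so the defence requirement plays no role in this proposition. The paper's equality gives more: a strongly adequate $S$ yields the same argument extension under both the strong and the weak translation.

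Finally, your separate base case is avoidable. Since $\warg(S)$ is a fixpoint of $R^S$, the definition of $R^S$ gives $\warg(S)=R^S(\warg(S))\subseteq\sarg(S)$ directly, without the iterative characterisation.
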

	\begin{proof}
		Let $S$ be a strongly adequate language extension. We show that $\sarg(S)=\warg(S)$. It is immediate that $Init(S)\subseteq\sarg(S)$. Further, since $R^S$ is monotonic (Lemma \ref{lem:R-monotonic}) and $R^S(\sarg(S))\subseteq \sarg(S)$ (strong support-closure) we also have $\warg(S)\subseteq\sarg(S)$. Now consider any $a\in\sarg(S)$. Then by sentence-closure, we have $Sent(a)\subseteq S$ and by the conditions of $\sarg(S)$ also $\uncuts{a}\cap S=\emptyset$. Thus $a\in \{a\in A\ |\ Sent(a)\subseteq S\text{ and }\uncuts{a}\cap S=\emptyset\}$. This gives $\sarg(S)\subseteq\{a\in A\ |\ Sent(a)\subseteq S\text{ and }\uncuts{a}\cap S=\emptyset\}$ and since $\sarg(S)$ is admissible (see part of proof of Proposition \ref{prop:weak-correspondence} which does not depend on saturatedness), we further have $\sarg(S)\subseteq Init(S)\subseteq \warg(S)$.
	\end{proof}
	
	\begin{proposition}
		Let $\sbaff$ be a saturated SBAF. 
		Then for every strongly adequate language extension $S$, its strong argument set $Arg_s(S)$ is a strongly coherent argument extension.
		
		Also, for every strongly coherent argument extension $E$, its set of sentences $Sent(E)$ is a strongly adequate language extension.
	\end{proposition}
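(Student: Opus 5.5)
Both directions come down to unpacking the definitions. Saturation is needed only to pass between compatibility of sentence sets and conflict-freeness of argument sets, and to read the undercut condition through attacks.

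\emph{First direction.} Let $S$ be strongly adequate and put $E=\sarg(S)$. By definition $E$ defends all its arguments. By sentence-closure $Sent(E)\subseteq S$, and by construction $\uncuts{a}\cap S=\emptyset$ for each $a\in E$. Conflict-freeness then follows exactly as in the proof of Lemma \ref{lem:init}. If $a,b\in E$ and $a\att b$, then either $Conc(a)\in\inc{s}$ for some $s\in Sent(b)$, with both sentences in $S$, contradicting compatibility; or $Conc(a)\in\uncuts{b}$, with $Conc(a)\in S$, contradicting $\uncuts{b}\cap S=\emptyset$. Hence $E$ is admissible. (This is the part, used earlier for Proposition \ref{prop:strongAdeqAreWeak}, that needs no saturation.)

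For strong support-closure, suppose $E$ supports $a$ and $Sent(E)\cap\uncuts{a}=\emptyset$. Then $Prem(a)\subseteq Sent(E)\subseteq S$, so it remains to show $\uncuts{a}\cap S=\emptyset$. This is where saturation enters. Suppose $u\in\uncuts{a}\cap S$ with $u\in Sent(A)$. Saturation gives the minimal argument $m=\arg{u}{u}\in A$. Since $Prem(m)=\{u\}\subseteq S$ and $\inc{n(m)}=\emptyset$, we get $m\in\sarg(S)=E$, so $u\in Sent(E)$, a contradiction. Hence $a\in E$.

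\emph{Second direction.} Let $E$ be strongly coherent and put $S=Sent(E)$. I claim $\sarg(S)=E$. The inclusion $E\subseteq\sarg(S)$ holds because each $a\in E$ has $Prem(a)\subseteq S$, and because if $u\in\uncuts{a}\cap S$, saturation gives $\arg{u}{u}\in A$. That minimal argument is supported by $E$ with no undercutting information, so it lies in $E$ by strong support-closure, and it attacks $a$ (its conclusion is incompatible with $n(a)$), contradicting conflict-freeness. The reverse inclusion $\sarg(S)\subseteq E$ is precisely strong support-closure. Given this equality, sentence-closure is trivial and defence is the admissibility of $E$.

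The remaining step, and the one I expect to be the main obstacle, is compatibility of $S$. Suppose $s,t\in S$ with $s\in\inc{t}$. By saturation there is, say, $\arg{s}{s}\in A$ (or $\arg{t}{t}$; the cases are symmetric). It is supported by $E$, since $s\in Sent(E)$, and it cannot be undercut. So strong support-closure puts it in $E$. Its conclusion $s$ is incompatible with $t\in Sent(b)$ for some $b\in E$, so it attacks $b$, contradicting conflict-freeness. The point needing care is that saturation only guarantees a minimal argument for one of $s,t$, but either choice yields the contradiction, so the argument goes through.
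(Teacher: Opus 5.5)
Your proof is correct and follows the paper's proof essentially step for step. You get conflict-freeness from sentence-closure and compatibility, and strong support-closure from the uncuttable minimal argument for an undercutting sentence. For the converse you establish $\sarg(Sent(E))=E$ and get compatibility by forcing a minimal argument into $E$ via strong support-closure; the only difference is the order in which you present the second direction's steps, which is immaterial.
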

	\begin{proof}
		Let $S$ be a strongly adequate language extension. We check all conditions for strong coherence of $\sarg(S)$.
		\begin{description}
			\item[Conflict-Free:] Suppose there are arguments $a,b\in \sarg(S)$ such that $a\att b$. Since $b\in \sarg(S)$, we know that $\inc{n(b)}\cap S=\emptyset$, hence, by sentence-closure, $Conc(a)\not\in\inc{n(b)}$. Thus we know that $Conc(a)\in Sent(b)$, but then sentence-closure leads to a violation of compatibility of $S$. Thus, $\sarg(S)$ is conflict-free.
			\item[Defence:] This is given by definition.
			\item[Strong Support-Closure:] First, we show that $Sent(\sarg(S))\subseteq S$. So first take any $s\in Sent(\sarg(S))$. Then there exists some $a\in \sarg(S)$ such that $s\in Sent(a)$. By sentence-closure, we have $Sent(a)\subseteq S$, meaning that $s\in S$ as desired. Now suppose that for some $a\in A$, we have that $\sarg(S)$ supports $a$ and does not contain undercutting information, i.e.\ $Prem(a)\subseteq Sent(\sarg(S))$ and $\uncuts{a}\cap Sent(\sarg(S))=\emptyset$. Since $Sent(\sarg(S))\subseteq S$, we directly have $Prem(a)\subseteq S$. It remains to show that $\uncuts{a}\cap S=\emptyset$. Suppose for a contradiction that there is some $t\in\uncuts{a}\cap S$. By saturatedness of $\sbaf$, there is a minimal argument $b\in A$ for $t$. Since $t\in S$, we also know that $b\in\sarg(S)$. But then $t\in Sent(\sarg(S))$, contradicting that $\uncuts{a}\cap Sent(\sarg(S))=\emptyset$. Thus, we conclude that $\uncuts{a}\cap S=\emptyset$ and hence that $a\in\sarg(S)$ as desired.
		\end{description}
		
		Now let $E$ be a strongly coherent argument extension. We check all conditions for strong adequacy of $Sent(E)$.
		\begin{description}
			\item[Compatibility:] Suppose there are $s,t\in Sent(E)$ such that $s\in\inc{t}$. Then there are argument $a,b\in E$ such that $s\in Sent(a)$ and $t\in Sent(b)$. Further, by saturatedness of $\sbaf$, there is a minimal argument $c$ for either $s$ or $t$. Since we assume for minimal arguments that $\uncuts{c}=\emptyset$, strong support-closure gives us $c\in E$. But then either $c\att a$ or $c\att b$, contradicting conflict-freeness of $E$. Thus we conclude that $Sent(E)$ is compatible.
			\item[Defence:] We show that $\sarg(Sent(E))=E$, from which defence follows directly. Thus take $a\in\sarg(Sent(E))$. Then we know that $Prem(a)\subseteq Sent(E)$ and $\uncuts{a}\cap Sent(E)=\emptyset$. Strong support-closure then gives $a\in E$ as desired. Now take $a\in E$. Then we know that $Prem(a)\subseteq Sent(E)$. We need to show that $E$ does not contain undercutting information, i.e.\ $\uncuts{a}\cap Sent(E)=\emptyset$ in order to get $a\in\sarg(Sent(E))$. Thus assume there is some $t\in\uncuts{a}\cap Sent(E)$. By saturatedness of $\sbaf$, we get a minimal argument $b$ for $t$. Strong support-closure gives $b\in E$, but since $b\att a$, this contradicts conflict-freeness of $E$. We conclude that $\uncuts{a}\cap Sent(E)=\emptyset$ and that $a\in\sarg(Sent(E))$ as desired. In sum, $\sarg(Sent(E))=E$.
			\item[Sentence-Closure:] Recall that $\sarg(Sent(E))\subseteq E$. Thus for any $a\in\sarg(Sent(E))$, we know that $a\in E$ and also that $Sent(a)\subseteq Sent(E)$, meaning that $Sent(E)$ satisfies sentence-closure.
		\end{description}
	\end{proof}
	
	\begin{lemma}\label{lem:quick-respect}
		Let $\sbaff$ be a saturated SBAF, $E$ an admissible extension, and $a\in A$ any argument.
		If $E$ defends $a$, then $E$ does not contain undercutting information, i.e.\ $\uncuts{a}\cap Sent(E)=\emptyset$.  
	\end{lemma}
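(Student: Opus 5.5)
We argue by contradiction, using saturatedness to turn undercutting information into an attacking argument, and then admissibility to derive a conflict inside $E$. Suppose $E$ is admissible and defends $a$, yet there is some $t\in\uncuts{a}\cap Sent(E)$. Then $t\in Sent(A)$, since $E\subseteq A$, and $t\in\inc{n(a)}$ for $a\in A$. So the second clause of saturatedness gives a minimal argument $b=\arg{t}{t}\in A$. Because $Conc(b)=t\in\inc{n(a)}$, Definition \ref{def:SuppAtt} gives $b\att a$.

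Since $E$ defends $a$, we get $E\att b$. So some $c\in E$ has $Conc(c)\in\inc{s}$ for some $s\in Sent(b)=\{t\}$, or $Conc(c)\in\inc{n(b)}$. The second case is impossible: by the standing assumption of Definition \ref{def:SBAFlanguage2}, $\inc{n(\arg{t}{t})}=\emptyset$. Hence $Conc(c)\in\inc{t}$, and by symmetry of incompatibility $t\in\inc{Conc(c)}$.

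Now recall that $t\in Sent(E)$, so $t\in Sent(d)$ for some $d\in E$. Then $c\att d$, since $Conc(c)\in\inc{t}$ with $t\in Sent(d)$. Both $c$ and $d$ lie in $E$, which contradicts conflict-freeness of the admissible set $E$. Therefore $\uncuts{a}\cap Sent(E)=\emptyset$.

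The main step to check is the one that passes from $E\att b$ to an attack inside $E$. It depends on two things. First, the attack on the minimal argument cannot be an undercut, which is guaranteed by $\inc{n(\arg{t}{t})}=\emptyset$. Second, a rebuttal of $t$ automatically attacks every argument containing $t$ in any position, because attacks target all of $Sent(d)$, not only conclusions. Note that this argument never uses the first clause of saturatedness; only the clause about undercutting sentences is needed.
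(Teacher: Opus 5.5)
Your proof is correct and follows the paper's argument step for step. You use saturatedness to get the minimal argument $\arg{t}{t}$, defence to get some $c\in E$ with $Conc(c)\in\inc{t}$, and then $c$ attacks the $d\in E$ containing $t$; your explicit exclusion of the undercut case via $\inc{n(\arg{t}{t})}=\emptyset$ fills in a step the paper's proof leaves implicit.
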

	\begin{proof}
		Suppose that $E$ defends $a$. Further, suppose for a contradiction that $E$ contains undercutting information, i.e.\ there is some $t\in\uncuts{a}\cap Sent(E)$. By saturatedness of $\sbaf$, there is a minimal argument $b$ for $t$. Since $E$ defends $a$, we have $E\att b$. This means there is some argument $c\in E$ with $Conc(c)\in\inc{t}$. But we also have that $t\in Sent(E)$, so there is an argument $d\in E$ with $t\in Sent(d)$. But then $c\att d$, contradicting conflict-freeness of $E$.
	\end{proof}
	
	\begin{lemma}\label{lem:E-fixpoint}
		Let $\sbaff$ be a saturated SBAF.
		Then for any weakly coherent argument extension $E$, we have $R^{Sent(E)}(E)=E$.
	\end{lemma}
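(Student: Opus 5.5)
We prove the two inclusions $E\subseteq R^{Sent(E)}(E)$ and $R^{Sent(E)}(E)\subseteq E$ separately. Throughout, write $S=Sent(E)$, and recall that $R^S(E)$ consists of the arguments $a\in\sarg(S)$ defended by $E$.

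For $E\subseteq R^S(E)$, take $a\in E$. Since $E$ is admissible, $E$ defends $a$. It remains to check $a\in\sarg(S)$. First, $Prem(a)\subseteq Sent(a)\subseteq Sent(E)=S$. Second, we need $\uncuts{a}\cap S=\emptyset$. This is exactly Lemma \ref{lem:quick-respect}: $E$ is admissible and defends $a$, and the SBAF is saturated, so $E$ contains no undercutting information for $a$.

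For $R^S(E)\subseteq E$, take $a\in R^S(E)$. Then $Prem(a)\subseteq S=Sent(E)$, so $E$ supports $a$. Also $\uncuts{a}\cap Sent(E)=\emptyset$, so $E$ contains no undercutting information for $a$. Finally, $E$ defends $a$ by the definition of $R^S$. Weak support-closure of $E$ then gives $a\in E$.

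Neither direction is a real obstacle; the only nontrivial ingredient is the undercut-freeness in the first inclusion. Lemma \ref{lem:quick-respect} handles it, and this is the one place where saturatedness is used.
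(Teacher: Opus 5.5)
Your proposal is correct and follows essentially the same route as the paper: weak support-closure gives $R^{Sent(E)}(E)\subseteq E$ directly, and for $E\subseteq R^{Sent(E)}(E)$ you combine admissibility (defence) with Lemma~\ref{lem:quick-respect} to rule out undercutting information. Your remark that this lemma is the only place where saturatedness enters is also accurate.
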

	\begin{proof}
		Weak support-closure of $E$ gives us directly that $R^{Sent(E)}(E)\subseteq E$. For the other direction, take any $a\in E$. Then $Prem(a)\subseteq Sent(E)$ and $E$ defends $a$, thus by Lemma \ref{lem:quick-respect}, we have that $\uncuts{a}\cap Sent(E)=\emptyset$, meaning that $a\in R^{Sent(E)}(E)$ as desired.
	\end{proof}
	
	\begin{lemma}\label{lem:warg=init}
		Let $\sbaff$ be a SBAF and $S$ a weakly adequate language extension.
		Then $\warg(S)=Init(S)$.
	\end{lemma}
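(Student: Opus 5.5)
The plan is to prove the two inclusions separately. The inclusion $Init(S)\subseteq\warg(S)$ is immediate from Definition \ref{def:WargSet}: $\warg(S)$ is by definition a fixpoint of $R^S_\sbaf$ containing $Init(S)$. Here $S$ is compatible because it is weakly adequate, so $Init(S)$ and $\warg(S)$ are well-defined by Proposition \ref{prop:WargWD}.

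For the converse inclusion $\warg(S)\subseteq Init(S)$, I will show that $\warg(S)$ is an admissible subset of
\[
C(S):=\{a\in A\ |\ Sent(a)\subseteq S\text{ and }\uncuts{a}\cap S=\emptyset\}.
\]
Lemma \ref{lem:init} shows that $Init(S)$ is the union of all admissible subsets of $C(S)$, hence their unique maximal element. So once $\warg(S)$ is shown to be such a subset, it is contained in $Init(S)$.

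Admissibility of $\warg(S)$ is exactly the earlier proposition that $\warg(S)$ is admissible for compatible $S$. To show $\warg(S)\subseteq C(S)$, take any $a\in\warg(S)$ and check the two conditions.
\begin{itemize}
\item \emph{Sentences.} Since $S$ is weakly adequate, sentence-closure with respect to $\warg(S)$ gives $Sent(a)\subseteq S$.
\item \emph{No undercutting information.} Since $\warg(S)$ is a fixpoint of $R^S_\sbaf$, we have $a\in R^S_\sbaf(\warg(S))$. By Definition \ref{def:RespFunct}, this means $a\in\sarg(S)$. By Definition \ref{def:SargSet}, this gives $\uncuts{a}\cap S=\emptyset$.
\end{itemize}
Hence $a\in C(S)$.

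The only mildly delicate point is the second condition: the undercut-freeness must be extracted from the fixpoint equation $\warg(S)=R^S_\sbaf(\warg(S))$, not merely from $\warg(S)\supseteq Init(S)$. I expect no real obstacle; in particular, the lemma does not need saturatedness.
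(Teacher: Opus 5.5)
Your proof is correct and follows the paper's structure. The inclusion $Init(S)\subseteq\warg(S)$ holds by definition, and you then show that $\warg(S)$ is an admissible subset of $\{a\in A\mid Sent(a)\subseteq S\text{ and }\uncuts{a}\cap S=\emptyset\}$ using sentence-closure, undercut-freeness and the admissibility of $\warg(S)$. The only difference is in the undercut step. You read undercut-freeness directly off the fixpoint equation $\warg(S)=R^S_\sbaf(\warg(S))$. The paper instead goes through the stagewise characterisation $\warg(S)=\bigcup_{i\in\mathbb{N}}R^S_i(Init(S))$ and a first-stage-of-entry contradiction. Your shortcut is simpler and does not depend on that extra lemma.
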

	\begin{proof}
		We know by definition that $Init(S)\subseteq\warg(S)$. For the other direction, we show that $\warg(S)\subseteq\{a\in A\ |\ Sent(a)\subseteq S\text{ and }\uncuts{a}\cap S=\emptyset\}$. Take any $a\in\warg(S)$. Then by sentence-closure, $Sent(a)\subseteq S$. Now suppose there is some $t\in\uncuts{a}\cap S$. Then certainly $a\not\in Init(S)$. Recall that $\warg(S)=\bigcup_{i\in\mathbb{N}}R^S_i(Init(S))$ (Lemma \ref{prop:ArgwConst}). Thus there is some $i$ such that $a\not\in R^S_i(Init(S))$, but $a\in R^S_{i+1}(Init(S))$. However, $R^S_{i+1}=R^S(R^S_i(Init(S)))=\{a\in A\ |\ a\in\sarg(S)\text{, and $R^S_i(Init(S))$ defends $a$}\}$. Since $\inc{a}\cap S\neq\emptyset$ means $a\not\in\sarg(S)$, we have $a\not\in R^S_{i+1}(Init(S))$, a contradiction. Thus we conclude that $\uncuts{a}\cap S=\emptyset$, meaning that $a\in \{a\in A\ |\ Sent(a)\subseteq S\text{ and }\uncuts{a}\cap S=\emptyset\}$. Thus also $\warg(S)\subseteq\{a\in A\ |\ Sent(a)\subseteq S\text{ and }\uncuts{a}\cap S=\emptyset\}$, and since $\warg(S)$ is admissible (Proposition \ref{prop:SargCFWargAdm}), we have $\warg(S)\subseteq Init(S)$ as desired.
	\end{proof}
	
	\begin{proposition}
		Let $\sbaff$ be a saturated SBAF.
		Then for every weakly adequate language extension $S$, its weak argument set $\warg(S)$ is a weakly coherent argument extension.
		
		Also, for every weakly coherent argument extension $E$, its set of sentences $Sent(E)$ is a weakly adequate language extension.
	\end{proposition}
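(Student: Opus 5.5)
I would prove the two directions separately, using the lemmas just established: Lemma \ref{lem:warg=init}, Lemma \ref{lem:E-fixpoint}, Lemma \ref{lem:quick-respect} and Proposition \ref{prop:SargCFWargAdm}.

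\emph{First direction.} Let $S$ be weakly adequate and put $E=\warg(S)$. Admissibility of $E$ is Proposition \ref{prop:SargCFWargAdm}, so only weak support-closure remains. Sentence-closure gives $Sent(E)\subseteq S$. Suppose $E$ supports $a$, defends $a$, and $\uncuts{a}\cap Sent(E)=\emptyset$. Then $Prem(a)\subseteq Sent(E)\subseteq S$. We still need $\uncuts{a}\cap S=\emptyset$, and this is the one place saturation is used. Suppose some $t\in\uncuts{a}\cap S$ existed. Saturation gives a minimal argument $b=\arg{t}{t}$, and $b$ attacks $a$ (undercut). Since $E$ defends $a$, some $c\in E$ has $Conc(c)\in\inc{t}$. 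But $Conc(c)\in S$ by sentence-closure and $t\in S$, contradicting compatibility of $S$. Hence $a\in\sarg(S)$, and since $E$ defends $a$ we get $a\in R^S(E)=E$, because $E$ is a fixpoint of $R^S$. So $E$ is weakly coherent.

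\emph{Second direction.} Let $E$ be weakly coherent and put $S=Sent(E)$. The main obstacle is showing $\warg(S)=E$, since sentence-closure then follows immediately.

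First, compatibility. Take $s,t\in S$ with $s\in\inc{t}$, coming from arguments $a,b\in E$. By saturation there is a minimal argument $c$ for, say, $s$. Then $c$ attacks $b$, so $E$ must attack $c$: some $d\in E$ has $Conc(d)\in\inc{s}$. But $d$ attacks $a$ as well, contradicting conflict-freeness of $E$. This is the same trick as in Lemma \ref{lem:quick-respect}.

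Next, $E\subseteq Init(S)$. Each $a\in E$ has $Sent(a)\subseteq S$, and $\uncuts{a}\cap S=\emptyset$ by Lemma \ref{lem:quick-respect}, since $E$ defends $a$. As $E$ is admissible, $E\subseteq Init(S)\subseteq\warg(S)$.

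Conversely, I show $E$ is itself a fixpoint of $R^S$ that contains $Init(S)$. It is a fixpoint by Lemma \ref{lem:E-fixpoint}. For the containment, every $a\in Init(S)$ is defended by the admissible set $E\cup Init(S)$, and $Prem(a)\subseteq S$ with no undercutting information. However, weak support-closure requires $a$ to be defended by $E$ itself, so I need $E\cup Init(S)$ to equal $E$.

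To get this, I argue that $E\cup Init(S)$ is admissible and lies inside $\{a\mid Sent(a)\subseteq S,\ \uncuts{a}\cap S=\emptyset\}$, so it is contained in $Init(S)$; hence $Init(S)\supseteq E$. I then iterate: any attacker $x$ of an element $a\in Init(S)$ is attacked by some $y\in Init(S)$. Since $Sent(y)\subseteq S$, I would use saturation (minimal arguments for sentences of $y$ that conflict with $x$) together with $E$'s defence of those minimal arguments to produce a counterattacker of $x$ inside $E$. This step is the delicate one, and I would try to get it from the minimal-argument trick applied to $Conc(x)$. Once $E$ defends every element of $Init(S)$, weak support-closure gives $Init(S)\subseteq E$. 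Then $\warg(S)$, the least fixpoint containing $Init(S)$, is at most $E$, so $\warg(S)=E$.

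Finally, sentence-closure holds trivially: $Sent(a)\subseteq Sent(E)=S$ for every $a\in\warg(S)=E$.
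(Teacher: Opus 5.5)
Your first direction is correct and matches the paper. In the second direction, three parts are also fine: the compatibility argument (which, unlike the paper's, uses only admissibility of $E$ and not weak support-closure), the inclusion $E\subseteq Init(Sent(E))$, and the closing sentence-closure step.

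The gap is the converse inclusion $Init(S)\subseteq E$ for $S=Sent(E)$, which is the actual content of the second direction. Your detour through $E\cup Init(S)$ makes no progress: you already have $E\subseteq Init(S)$, so this union is just $Init(S)$, and concluding $Init(S)\supseteq E$ only repeats what you knew. The step that matters is never proved. That step is that every attacker $x$ of some $a\in Init(S)$ is counterattacked from inside $E$, not merely from inside $Init(S)$. You only announce it (``I would try to get it from\ldots''). So as written, weak support-closure cannot be applied and $\warg(S)=E$ is not established.

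The missing argument is short once you split on the kind of attack.

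\emph{Attack on a sentence of $a$.} Suppose $Conc(x)\in\inc{s}$ for some $s\in Sent(a)$. Since $Sent(a)\subseteq S=Sent(E)$, there is $c\in E$ with $s\in Sent(c)$. So $x\att c$, and since $E$ defends $c$ we get $E\att x$. No saturation and no counterattacker from $Init(S)$ is needed here.

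\emph{Undercut.} Suppose instead $Conc(x)\in\uncuts{a}$. Saturation gives the minimal argument $m=\arg{Conc(x)}{Conc(x)}$, which also undercuts $a$. As $Init(S)$ defends $a$, some $d\in Init(S)$ attacks $m$. Because $\inc{n(m)}=\emptyset$, this means $Conc(d)\in\inc{Conc(x)}$, and $Conc(d)\in S$. Pick $e\in E$ with $Conc(d)\in Sent(e)$. By symmetry of incompatibility $x\att e$, and $E$ defends $e$, so $E\att x$.

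Hence $E$ defends $a$. Since also $Prem(a)\subseteq Sent(E)$ and $\uncuts{a}\cap Sent(E)=\emptyset$, weak support-closure gives $a\in E$. This is exactly the paper's proof. Your closing hint (``the minimal-argument trick applied to $Conc(x)$'') is the right idea, but it is needed only for the undercut case and has to be carried out. The other case goes through the direct transfer of the attack to a member of $E$.
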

	\begin{proof}
		Let $S$ be a weakly adequate language extension. We check all conditions of $\warg(S)$. 
		\begin{description}
			\item[Conflict-Free:] Follows from Proposition \ref{prop:SargCFWargAdm}.
			\item[Defence:] Follows from Proposition \ref{prop:SargCFWargAdm}.
			\item[Weak Support-Closure:] Assume for some $a\in A$ that $\warg(S)$ supports $a$, i.e.\ $Prem(a)\subseteq Sent(\warg(S))$, does not contain undercutting information, i.e.\ $\uncuts{a}\cap Sent(\warg(S))=\emptyset$, and defends $a$. We need to show that $a\in\warg(S)$. First note that $Sent(\warg(S))\subseteq S$, as Lemma \ref{lem:warg=init} shows that $\warg(S)=Init(S)$ and all sentences occuring in arguments of $Init(S)$ are already contained in $S$. Thus we have $Prem(a)\subseteq S$. Further, we can show that $\uncuts{a}\cap S=\emptyset$. Suppose for a contradiction that there is some $t\in\uncuts{a}\cap S$. By saturatedness of $\sbaf$, there is a minimal argument $b$ for $t$, for which we know that $b\att a$. Since $\warg(S)$ defends $a$, we know that there is some $c\in\warg(S)$ such that $c\att b$. By $b$ being a minimal argument, we know that $Conc(c)\in\inc{t}$, and we further know by sentence-closure that $Conc(c)\in S$. But this contradicts compatibility of $S$, since also $t\in S$. Thus we conclude that $\uncuts{a}\cap S=\emptyset$. Now we can use that $\warg(S)$ is a fixpoint of $R^S$ to conclude that, since $\warg(S)$ also defends $a$, we have $a\in\warg(S)$ as desired.
		\end{description}
		
		Now let $E$ be a weakly coherent argument extension. We check all conditions for $Sent(E)$.
		\begin{description}
			\item[Compatibility:] Suppose there are $s,t\in Sent(E)$ such that $s\in\inc{t}$. Then we have arguments $a,b\in E$ such that $s\in Sent(a)$ and $t\in Sent(b)$. Further, by saturatedness of $\sbaf$, there is w.l.o.g.\ a minimal argument $c$ for $s$ (note that $c\att b$). Now, we need to additionally show that $E$ defends $c$. Note that since $s\in Sent(a)$, any attack on $c$ is also an attack on $a$. And since $E$ defends $a$, $E$ thus also defends $c$. Weak support-closure then gives $c\in E$. But since $c\att b$, this contradicts conflict-freeness of $E$. Thus we conclude that $Sent(E)$ is compatible.
			\item[Defence:] We show that $E=\warg(Sent(E))$, from which defence follows directly. By Lemma \ref{lem:E-fixpoint}, we know that $R^{Sent(E)}=E$, thus it suffices to show that $E=Init(S)$ (since then $Init(S)$ will itself be the smallest fixpoint containing it).
			\begin{description}
				\item[$\subseteq$:] Take any $a\in E$. Then we have $Sent(a)\subseteq Sent(E)$. Further, we know that $E$ defends $a$ and $\sbaf$ is saturated, thus by Lemma \ref{lem:quick-respect}, we know that $\uncuts{a}\cap Sent(E)=\emptyset$. This gives us that $a\in \{a\in A\ |\ Sent(a)\subseteq Sent(E)\text{ and }\uncuts{a}\cap Sent(E)=\emptyset\}$. But note that we have just now shown that $E\subseteq \{a\in A\ |\ Sent(a)\subseteq Sent(E)\text{ and }\uncuts{a}\cap Sent(E)=\emptyset\}$, and since $E$ is admissible, we can directly infer that $E\subseteq Init(Sent(E))$, as the latter is the largest admissible subset of $\{a\in A\ |\ Sent(a)\subseteq Sent(E)\text{ and }\uncuts{a}\cap Sent(E)=\emptyset\}$.
				
				\item[$\supseteq$:] Take any $a\in Init(Sent(E))$. Then we have that $Sent(a)\subseteq Sent(E)$ and $\uncuts{a}\cap Sent(E)=\emptyset$. If we can show that $E$ defends $a$, then weak support-closure gives us the desired $a\in E$. Thus take any attacker $b$ of $a$. There are two cases. (1) $Conc(b)\in\inc{s}$ for some $s\in Sent(a)$. Since $Sent(a)\subseteq Sent(E)$, there is some argument $c\in E$ such that $s\in Sent(c)$. But then $b\att c$, and since $E$ defends $c$, we also have $E\att b$. That is, $E$ defends $a$ against $b$. (2) $Conc(b)\in\uncuts{a}$. By saturatedness of $\sbaf$, there is a minimal argument $c$ for $Conc(b)$. Note that $c\att a$ and since $a$ is defended by $Init(Sent(E))$, there is some argument $d\in Init(Sent(E))$ such that $d\att c$. This gives us in particular $Conc(d)\in Sent(E)$ and since $c$ is a minimal argument, we also have $Conc(d)\in\inc{Conc(b)}$. Further, $Conc(d)\in Sent(E)$ gives us some argument $e\in E$ such that $Conc(d)\in Sent(e)$. Recall that incompatibility is symmetric, thus $b\att e$. Finally, since $E$ defends $e$, we have $E\att b$, that is, $E$ defends $a$ against $b$.
				
				In sum, $E$ defends $a$ and weak support-closure gives us $a\in E$ as desired.
			\end{description}
			\item[Sentence-Closure:] Take any $a\in\warg(Sent(E))$. Since $\warg(Sent(E))\subseteq E$, we have $a\in E$ and thus $Sent(a)\subseteq Sent(E))$ as desired.
		\end{description}
	\end{proof}
	
	\setcounter{theorem}{8}
	\begin{proposition}
		Let $\sbaff$ be an SBAF. A language extension $S\subseteq Sent(A)$ is called a confident strongly (resp.\ weakly) adequate language extension if it is $\subseteq$-maximal amongst strongly (resp.\ weakly) adequate language extensions.
		
		An argument extensions $E\subseteq A$ is called a confident strongly (resp.\ weakly) coherent argument extension if it is strongly (resp.\ weakly) coherent and there exists a confident strongly (resp.\ weakly) adequate language extension $S\subseteq Sent(A)$ such that $E=\sarg(S)$ (resp.\ $E=\warg(S)$).
	\end{proposition}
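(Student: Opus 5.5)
Since the statement is worded as a definition, the thing to establish is that it is \emph{well-posed}. Concretely, I would check three points. First, the families over which $\subseteq$-maximality is taken are non-empty, so confident adequate language extensions always exist. Second, maximal elements of these families exist. Third, the argument sets $\sarg(S)$ and $\warg(S)$ used in the second clause are defined for every confident adequate $S$.

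For non-emptiness, I would take $S=\emptyset$. It is trivially compatible. Every argument has a non-empty set of premises by Definition \ref{def:args}, so no argument satisfies $Prem(a)\subseteq\emptyset$, and hence $\sarg(\emptyset)=\emptyset$. The empty set defends all of its (zero) arguments, and sentence-closure holds vacuously. So $\emptyset$ is strongly adequate, and by Proposition \ref{prop:strongAdeqAreWeak} it is also weakly adequate.

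For the existence of maximal elements, I would note that $A$ is finite and each argument has finitely many premises, so $Sent(A)$ is finite. Every adequate extension is a subset of $Sent(A)$, so each of the two families of adequate extensions is a finite non-empty poset under $\subseteq$. Such a poset has maximal elements, and in fact every adequate $S$ lies below some confident one. Thus confident strongly and confident weakly adequate language extensions always exist.

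For the second clause, $\sarg(S)$ is defined for any set of sentences. $\warg(S)$ is only defined for compatible $S$, but a confident weakly adequate $S$ is weakly adequate and therefore compatible by Definition \ref{def:adeqExt}. Proposition \ref{prop:WargWD} then guarantees that $\warg(S)$ exists and is unique, so the equation $E=\warg(S)$ is meaningful.

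I expect no real obstacle, since the only delicate point is the domain of $\warg$, which compatibility settles. One remark is worth adding: the definition does not assert that confident coherent argument extensions exist. For that, coherence of $\sarg(S)$ or $\warg(S)$ is needed, and this is supplied only later, for saturated SBAFs, by Propositions \ref{prop:strong-correspondence} and \ref{prop:weak-correspondence}.
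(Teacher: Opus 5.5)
Your well-posedness check is correct as far as it goes. $\emptyset$ is strongly adequate, and hence weakly adequate, because premises are non-empty. Finiteness of $Sent(A)$ puts every adequate set below a maximal one. Compatibility places confident weakly adequate sets in the domain of $\warg$.

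It does take a different route from the paper. The statement here is a verbatim copy of Definition \ref{def:conf-ext}, and the proof the paper attaches to it is really the proof of Proposition \ref{prop:conf-correspondence}. That proof runs as follows. In a saturated SBAF, a confident strongly (resp.\ weakly) adequate $S$ is in particular strongly (resp.\ weakly) adequate. So $\sarg(S)$ (resp.\ $\warg(S)$) is strongly (resp.\ weakly) coherent by Proposition \ref{prop:strong-correspondence} (resp.\ \ref{prop:weak-correspondence}), and $S$ itself is the witness the definition asks for. The converse direction is just the definition unpacked.

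Your closing remark names exactly this ingredient, but you do not assemble it, so your text on its own does not establish that correspondence.

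As for what each buys: your argument shows that the language-side notion is non-vacuous in every SBAF. The paper never states this and uses the same finiteness reasoning only implicitly, e.g.\ in the proof of Proposition \ref{prop:Pref-are-maxWeak}. The paper's argument transfers confidence from sentences to arguments under saturation. Put together, the two show that confident strongly and weakly coherent extensions exist in every saturated SBAF, which neither argument states alone.
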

	\begin{proof}
		The first part of the proposition follows directly from Propositions \ref{prop:strong-correspondence} and \ref{prop:weak-correspondence}.
		
		The second part is immediate by definition.
	\end{proof}
	
	\begin{proposition}
		Let $\sbaff$ be a saturated SBAF.
		Then any preferred extension $E\subseteq A$ is confident weakly coherent.
	\end{proposition}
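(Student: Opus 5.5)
The plan is to pick a suitable confident weakly adequate language extension above $Sent(E)$ and show that its weak argument set is exactly $E$. Let $E$ be preferred. Since every preferred extension is complete, Observation \ref{obs:compAreWeakCoher} gives that $E$ is weakly coherent. By Proposition \ref{prop:weak-correspondence}, which uses saturatedness, $Sent(E)$ is then a weakly adequate language extension.

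Since $Sent(A)$ is finite, there is a $\subseteq$-maximal weakly adequate language extension $S$ with $Sent(E)\subseteq S$. This $S$ is confident weakly adequate by definition. I deliberately do not try to show that $Sent(E)$ itself is confident. That may fail, because a weakly adequate superset of $Sent(E)$ can add sentences that occur in no accepted argument. Definition \ref{def:conf-ext} only requires some confident $S$ with $E=\warg(S)$, so it is enough to prove $\warg(S)=E$.

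Next, show $E\subseteq \warg(S)$. By Lemma \ref{lem:warg=init}, $\warg(S)=Init(S)$, the largest admissible subset of $X:=\{a\in A\mid Sent(a)\subseteq S,\ \uncuts{a}\cap S=\emptyset\}$. Since $E$ is admissible, it suffices to show $E\subseteq X$.

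Take $a\in E$. We have $Sent(a)\subseteq Sent(E)\subseteq S$. For the undercut condition, suppose $t\in\uncuts{a}\cap S$. By saturatedness there is a minimal argument $b$ for $t$, and $b\att a$. Since $E$ defends $a$, some $c\in E$ attacks $b$. Because $\inc{n(b)}=\emptyset$, this attack must be on $t$, so $Conc(c)\in\inc{t}$. But $Conc(c)\in Sent(E)\subseteq S$ and $t\in S$, which contradicts the compatibility of $S$. Hence $E\subseteq X$, and so $E\subseteq Init(S)=\warg(S)$.

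Finally, $\warg(S)$ is admissible by Proposition \ref{prop:SargCFWargAdm}, and it contains $E$. Since $E$ is $\subseteq$-maximal among admissible extensions, $\warg(S)=E$. So $E$ is weakly coherent and equals $\warg(S)$ for a confident weakly adequate $S$, which means $E$ is confident weakly coherent.

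The step I expect to be the main obstacle is the undercut condition $\uncuts{a}\cap S=\emptyset$. The enlarged set $S$ may contain sentences outside $Sent(E)$, and the argument needs saturatedness together with compatibility of $S$ to exclude that they undercut arguments of $E$.
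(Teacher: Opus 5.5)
Your proof is correct and follows essentially the same route as the paper. It gets weak coherence from Observation~\ref{obs:compAreWeakCoher} and weak adequacy of $Sent(E)$ from Proposition~\ref{prop:weak-correspondence}. It then uses saturatedness plus compatibility to show $E\subseteq Init(S)\subseteq\warg(S)$ for a weakly adequate $S\supseteq Sent(E)$, and finishes with preferredness. The only difference is bookkeeping: the paper takes $S'$ maximal among weakly adequate sets with $\warg(S')=E$ and argues by contradiction that no weakly adequate proper superset exists, whereas you take $S$ maximal among weakly adequate supersets of $Sent(E)$ and prove $\warg(S)=E$ directly, which is slightly cleaner.
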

	\begin{proof}
		Let $E$ be a preferred extension. Since it is then also complete, Observation \ref{obs:compAreWeakCoher} gives us that it is weakly coherent. It thus remains to show that there exists a confident weakly adequate language extension $S$ such that $E=\warg(S)$.
		
		Consider the set $\{S\subseteq Sent(A)\ |\ E=\warg(S)\text{ and $S$ is weakly adequate}\}$. Note that it is non-empty, since $Sent(E)$ is weakly adequate and $\warg(Sent(E))=E$ (Proposition \ref{prop:weak-correspondence}). Recall that we only consider finite frameworks, thus there exists a $\subseteq$-maximal element $S'$ that is weakly adequate and $\warg(S')=E$. It remains to show that $S'$ is also $\subseteq$-maximal amongst weakly adequate language extensions.
		
		Take any weakly adequate $S''$ such that $S'\subsetneq S''$. Since $S'$ is $\subseteq$-maximal amongst weakly adequate extensions with $\warg(S)=E$, we know that $\warg(S'')\neq E$. We first show that $E\subsetneq\{a\in A\ |\ Sent(a)\subseteq S''\text{ and }\uncuts{a}\cap S''=\emptyset\}$. Take any $a\in E$. Then clearly, $Sent(a)\subseteq Sent(E)\subseteq S'\subsetneq S''$. Now suppose there is some $t\in\uncuts{a}\cap S''$. Then, by saturatedness, there exists a minimal argument $b$ for $t$. Note that $b\att a$ and since $E$ is defended, there is some $c\in E$ such that $c\att b$, that is, $Conc(s)\in\inc{t}$. But since $Sent(c)\subseteq Sent(E)\subseteq S'\subseteq S''$, this contradicts compatibility of $S''$. Hence, $\uncuts{a}\cap S''=\emptyset$ and we can note that $E\subsetneq\{a\in A\ |\ Sent(a)\subseteq S''\text{ and }\uncuts{a}\cap S''=\emptyset\}$. Since $E$ is admissible, this gives $E\subseteq Init(S'')\subseteq \warg(S'')$.
		
		In sum, $E\subsetneq \warg(S'')$, but since $\warg(S'')$ is admissible (Proposition \ref{prop:weak-correspondence}), this contradicts that $E$ is preferred. Hence, there exists not weakly adequate $S''$ such that $S'\subsetneq S''$ and we conclude that $E$ is confident weakly coherent.
	\end{proof}
	
	\begin{proposition}
		Let $\sbaff$ be a strongly saturated SBAF.
		Then any confident weakly coherent argument extension $E\subseteq A$ is preferred.
	\end{proposition}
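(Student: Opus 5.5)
The plan is to argue by contradiction. Let $E$ be confident weakly coherent, so there is a confident weakly adequate language extension $S$ with $E=\warg(S)$. Since strongly saturated SBAFs are saturated, $E$ is admissible. Suppose $E$ is not preferred, and let $E^*$ be a preferred extension with $E\subsetneq E^*$. By Proposition \ref{prop:Pref-are-maxWeak}, $E^*$ is weakly coherent, and by Proposition \ref{prop:weak-correspondence}, $S^*:=Sent(E^*)$ is weakly adequate with $\warg(S^*)=E^*$. The goal is to build a weakly adequate language extension strictly containing $S$, which contradicts the maximality of $S$.

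The natural candidate is $S\cup S^*$. The easy part is that $S\subseteq S\cup S^*$ strictly. By Lemma \ref{lem:warg=init} we have $E=Init(S)$, so $Sent(E)\subseteq S$. If $S^*\subseteq S$, then every $a\in E^*$ would have $Sent(a)\subseteq S$. An undercutting sentence $t\in\uncuts{a}\cap S$ is excluded by the argument of Lemma \ref{lem:quick-respect}: take the minimal argument for $t$ given by saturation; $E^*$ must attack it, and that attacking conclusion lies in $S^*\subseteq S$, contradicting compatibility of $S$. Hence $E^*$ would be an admissible subset of the set defining $Init(S)$, so $E^*\subseteq Init(S)=E$, which is a contradiction. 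So some sentence of $S^*$ lies outside $S$.

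The main obstacle is showing that $S\cup S^*$ is compatible, and that sentence-closure holds with respect to $\warg(S\cup S^*)$. For compatibility, I would use \emph{strong} saturation. Suppose $s\in S$ and $t\in S^*\setminus S$ with $s\in\inc{t}$. Then both minimal arguments $c_s=\l\{s\},s\r$ and $c_t=\l\{t\},t\r$ exist. The argument $c_s$ attacks some $b\in E^*$ with $t\in Sent(b)$, so $E^*$ attacks $c_s$; that is, some $Conc(d)\in\inc{s}$ with $d\in E^*$. I would then want to derive a contradiction, using that $c_s$ can be defended by arguments available from $S$. This is the weak point, because $s$ might lie in $S$ without being in $Sent(E)$.

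If this fails, I would shift the target. Instead of $S\cup S^*$, I would consider $S':=Sent(E^*)\cup (S\setminus\bigcup\{\inc{u}: u\in Sent(E^*)\})$, or more directly just $S^*$ enlarged by the sentences of $S$ compatible with it. I would then argue that $S'\supseteq S$: any $s\in S$ incompatible with $Sent(E^*)$ should be ruled out, using strong saturation to produce a minimal argument for $s$ that $E^*$ must attack while $E\subseteq E^*$. Finally, I would check sentence-closure of $S'$. By the computation in Lemma \ref{lem:warg=init}, $\warg(S')=Init(S')$. The arguments added there have all their sentences in $S'$ by definition of the initial set, so sentence-closure reduces to showing that fixpoint iterations add no arguments with sentences outside $S'$. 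This follows because $R^{S'}$ only adds arguments in $\sarg(S')$ whose conclusions, being defended, must be compatible with $S'$, again by strong saturation. Since $S\subsetneq S'$, this contradicts the confidence of $S$, so $E$ is preferred.
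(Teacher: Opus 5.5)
\paragraph{There is a genuine gap: compatibility of $S\cup S^*$ is never shown.}
Your candidate $S\cup S^*$ is a workable target, and your argument that $S^*\not\subseteq S$ is correct. However, the two verifications that carry the proof are either left open or argued incorrectly.

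The first is compatibility. You flag it as the weak point and do not resolve it. Your fallback $S'$ does not help, because showing $S'\supseteq S$ is the same claim. The missing observation is that, under strong saturation, your worry that $s$ might lie in $S$ but not in $Sent(E)$ is unfounded. Suppose $s\in S$ and $t\in S^*$ with $s\in\inc{t}$. Then $c_s=\arg{s}{s}\in A$, and $\{c_s\}$ is admissible:
\begin{itemize}
\item it cannot be undercut;
\item it is conflict-free because $S$ is compatible;
\item it counterattacks every attacker, since any attacker's conclusion is incompatible with $s$.
\end{itemize}
Also $Sent(c_s)\subseteq S$. Hence $c_s\in Init(S)\subseteq\warg(S)=E\subseteq E^*$. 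But $c_s$ attacks the argument of $E^*$ containing $t$, which contradicts conflict-freeness of $E^*$. This is exactly where strong, rather than plain, saturation is needed (cf.\ Example~\ref{ex:weakNotPref}). It is also the device the paper's proof relies on: a minimal argument for a sentence of a weakly adequate extension automatically lies in its initial set.

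\paragraph{Sentence-closure is argued circularly and with a non sequitur.}
You invoke Lemma~\ref{lem:warg=init} to get $\warg(S')=Init(S')$. That lemma presupposes that $S'$ is already weakly adequate, since its proof uses sentence-closure, so the appeal is circular. Your closing justification also shows only that conclusions of arguments added by $R^{S'}$ are \emph{compatible} with $S'$, whereas sentence-closure requires them to be \emph{members} of $S'$.

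What is missing is an appeal to the maximality of $E^*$. Repeat your own undercut argument with $S\cup S^*$ in place of $S$, which is legitimate once compatibility is established. This gives $E^*\subseteq Init(S\cup S^*)\subseteq\warg(S\cup S^*)$. Since $\warg(S\cup S^*)$ is admissible (Proposition~\ref{prop:SargCFWargAdm}), preferredness forces $\warg(S\cup S^*)=E^*$, and its sentences lie in $S^*\subseteq S\cup S^*$.

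With these two repairs your union construction goes through, and it is somewhat more direct than the paper's proof. The paper instead takes a confident weakly adequate extension for the preferred superset (via Proposition~\ref{prop:Pref-are-maxWeak}). It then shows, sentence by sentence, that $S$ is strictly contained in it, using the fact that any sentence outside that maximal extension must clash with one inside it.
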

	\begin{proof}
		Let $E$ be weakly coherent and suppose it is not preferred. We show that $E$ is not confident. We need to show that for any $S\subseteq Sent(A)$ such that $\warg(S)=E$, there exists an adequate language extension $S'$ such that $S\subsetneq S'$. 
		
		Since $E$ is admissible, there exists a preferred extension $E'$ such that $E\subsetneq E'$. By Proposition $\ref{prop:Pref-are-maxWeak}$, we know that $E'$ is confident weakly coherent. Hence, there exists a confident weakly adequate language extension $S'$ such that $\warg(S')=E'$. 
		
		Now take any $S\subseteq Sent(A)$ such that $\warg(S)=E$. We show that $S\subsetneq S'$. Take any $s\in Sent(A)\backslash S'$. Then $s\not\in S'$ and since $S'$ is confident, we know that $S'\cup\{s\}$ is not weakly adequate. We can show that $S'\cup\{s\}$ is not compatible. Suppose it is. Then $\warg(S'\cup\{s\})$ is admissible (Proposition \ref{prop:SargCFWargAdm}), and we know that $E'\not\subseteq\warg(S'\cup\{s\})$ (since otherwise either $S'\cup\{s\}$ would be weakly adequate, if $E'=\warg{S'\cup\{s\}}$, or $E'$ not preferred, if $E\subsetneq \warg{S'\cup\{s\}}$). This lets us take an argument $a\in E'\backslash (S'\cup\{s\})$ and we know that $s\in\uncuts{a}$ (since otherwise $a\in Init(S'\cup\{s\})$). By saturatedness of $\sbaf$, we then have a minimal argument $b$ for $s$. Note that $b\att a$ and since $E'$ is defended, there is some $c\in E'$ such that $Conc(c)\in\inc{s}$. But note that $Sent(c)\subseteq Sent(E)= Sent(\warg(S'))\subseteq S'$ (see proof of Proposition \ref{prop:weak-correspondence}), contradicting compatibility of $S'\cup\{s\}$. Thus, we know that $S'\cup\{s\}$ cannot be compatible. But then there is some $t\in S'$ such that $t\in\inc{s}$. By strong saturatedness, there exists a minimal argument $d$ for $t$. Since it is a minimal argument, $\{d\}$ is admissible and hence $d\in Init(S')\subseteq E'$. Since $d\att b$, we also have $E'\att b$. But since $E\subseteq E'$, we know that $b\not\in E$ (since otherwise $E'$ would not be conflict-free). Finally, since $b$ is a minimal argument for $s$, we know that $s\not\in S$ (since otherwise $b\in\warg(S)=E$). In sum, $S\subseteq S'$.
		
		It remains to note that $S\neq S'$, since otherwise $E=\warg(S)=warg(S')=E'$. Thus $S\subsetneq S'$ and $E$ is not confident. This establishes the result, since we know that any preferred $E\subseteq A$ is weakly coherent.
	\end{proof}
	
	\begin{proposition}
		Let $\sbaff$ be an SBAF where $\forall a\in A:|Prem(a)|=1$ and $\neg\exists t\in Sent(A),\exists a\in A:t\in\uncuts{a}$. 
		Then an extension $E\subseteq A$ that is strongly coherent is also d-admissible.
		
		If further $\forall a,b\in A$, we have $Prem(a)\neq Prem(b)$, then an extension $E\subseteq A$ that is d-admissible is also strongly coherent.
	\end{proposition}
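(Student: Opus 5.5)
The plan is to make the comparison precise by fixing the BAF induced by the SBAF: $a\supp b$ iff $\{a\}$ supports $b$, i.e.\ $Prem(b)\subseteq Sent(a)$. Since every argument has exactly one premise, every SBAF support is already witnessed by a single supporting argument. Since no sentence lies in any $\uncuts{a}$, the undercutting clauses drop out: attacks are only rebuttals and underminings on $Sent(b)$, and strong support-closure just says that $E$ contains every $a$ with $Prem(a)\subseteq Sent(E)$.

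The main tool is a structural lemma about complex attacks. I would prove by induction on $i$ that if $b\att^i a$, then there are $c,d$ with $b\supp^* c$, $c\att d$ and $a\supp^* d$, where $\supp^*$ is the reflexive–transitive closure of $\supp$.
\begin{itemize}[label={}]
\item The base case $i=0$ is trivial (take $c=b$, $d=a$).
\item For a supported attack $b\supp c'$, $c'\att^i a$, prepend $b\supp c'$ to the chain given by the hypothesis.
\item For a mediated attack $a\supp c'$, $b\att^i c'$, the hypothesis gives $c'\supp^* d$, and hence $a\supp^* d$.
\end{itemize}
Conversely, if $e\att c$ and $b\supp^* c$, then iterating mediated attacks gives $e\att^{co} b$.

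For the first direction, let $E$ be strongly coherent. Closure under the induced $\supp$ is immediate: if $a\in E$ and $a\supp b$, then $E$ supports $b$, so $b\in E$. Closure under $\supp$ also gives closure under $\supp^*$. For conflict-freeness in $\l A,\att^{co}\r$, suppose $b\att^{co}a$ with $a,b\in E$. The lemma gives $c,d\in E$ with $c\att d$, contradicting conflict-freeness of $E$. For defence, suppose $b\att^{co}a$ with $a\in E$. The lemma yields $d\in E$ (by closure) with $c\att d$ and $b\supp^* c$. Since $E$ is admissible in the original sense, some $e\in E$ has $e\att c$, and the converse part of the lemma gives $e\att^{co} b$.

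For the second direction, let $E$ be d-admissible. Strong support-closure follows from closure under $\supp$. If $Prem(a)=\{s\}\subseteq Sent(E)$, pick $e\in E$ with $s\in Sent(e)$; then $e\supp a$, so $a\in E$. Conflict-freeness in the original sense holds because $\att\subseteq\att^{co}$. The remaining point, which I expect to be the main obstacle, is defence in the original sense. Given $b\att a$ with $a\in E$, d-admissibility only provides $e\in E$ with $e\att^{co} b$. The lemma gives $e\supp^* c$, $c\att d$, $b\supp^* d$, and closure puts $c\in E$. What remains is to turn the attack of $c$ on $d$ into a direct attack from $E$ on $b$, perhaps by first routing it back along the support chain from $b$ to $d$.

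This is where I would try to use uniqueness of premises. I would do an induction along the chain $b=d_0\supp d_1\supp\dots\supp d_k=d$. If $Conc(c)$ is incompatible with $Prem(d_j)$, and that sentence lies in $Sent(d_{j-1})$, we step back to an attack on $d_{j-1}$. If instead $Conc(c)$ hits $Conc(d_j)$, I would try to use the fact that $d_j$ is the unique argument with its premise, together with closure of $E$, to find another member of $E$ attacking an earlier link. Failing that, I would argue by contradiction that $E\cup\{$the attacked links$\}$ violates d-admissibility. I am least sure of this last step and would check it against small examples first.
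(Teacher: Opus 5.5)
Your first direction is correct, and it is organised differently from the paper. The paper shows by induction on $i$ that $E$ is admissible for each $\att^i$. You instead prove once that every complex attack $b\att^{co}a$ factors as $b\supp^* c$, $c\att d$, $a\supp^* d$, prove the converse along mediated chains, and read off conflict-freeness and defence directly. This works whether $\supp$ is read as $Prem(b)\subseteq Sent(a)$ or as pure premise support.

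The genuine gap is the step you flagged in the second direction: defence of a d-admissible $E$ in $\l A,\att\r$. It cannot be closed, because that half of the statement is false. Take $b:\arg{q}{p}$, $c:\arg{p}{r}$, $e:\arg{u}{v}$, whose only incompatibilities are $p\in\inc{u}$ and $r\in\inc{v}$ (with their symmetric counterparts), and with no undercutting information. Premises are singletons and pairwise distinct.
\begin{itemize}
\item The direct attacks are $b\att e$, $c\att e$ and $e\att c$.
\item The only non-trivial support is $b\supp c$, under either reading.
\item The complex-attack closure adds exactly the mediated attack $e\att^{co}b$.
\end{itemize}
Hence $\{e\}$ is conflict-free in $\l A,\att^{co}\r$, counterattacks both its attackers $b$ and $c$, and is closed under $\supp$, so it is d-admissible. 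Yet $b\att e$ in the SBAF while $e\not\att b$, since $v$ clashes only with $r\notin Sent(b)$. So $\{e\}$ is not even admissible, let alone strongly coherent. This is exactly your case where the counterattack hits $Conc(d_j)$ for $j\ge 1$. Nothing in $Sent(b)$ is touched, so there is no earlier link to step back to. Neither uniqueness of premises nor a contradiction with d-admissibility can help.

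The paper's own proof passes over this point with ``It is clear that $E$ is admissible'', so it has the same hole. Your normal-form lemma does prove the second direction once saturation is added. Let $b\att a$ with $a\in E$, $Conc(b)=p\in\inc{t}$ and $t\in Sent(a)$; saturation gives a minimal argument for $t$ or for $p$.
\begin{itemize}
\item If there is a minimal argument $m_t$ for $t$, it lies in $E$: by distinctness $m_t=a$ if $t\in Prem(a)$, and $a\supp m_t$ if $t=Conc(a)$. Moreover $m_t\att b$.
\item Otherwise there is a minimal argument $m_p$. By distinctness, $b$ and $m_p$ are the only $d$ with $b\supp^* d$. Apply your lemma to a counterattack $e\att^{co}b$ with $e\in E$; by closure it gives some $c'\in E$ attacking $b$ or $m_p$. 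An attack on $m_p$ is an attack on $p\in Sent(b)$.
\end{itemize}
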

	\begin{proof}
		Let $E$ be strongly coherent. We first check closure under $\supp$. Suppose there is some $a\in E$ and $b\in A$ such that $a\supp b$. Since $|Prem(a)|=1$, we know that $Prem(a)\subseteq Sent(E)$ and since there are no undercuts, we know that $\uncuts{a}\cap Sent(E)=\emptyset$. Thus, by strong support-closure, we have $b\in E$ and $E$ is closed under $\supp$.
		
		Now we show inductively that $E$ is admissible according to $\att^{ded}$. We know by definition that $E$ is admissible w.r.t.\ $\att$, hence it remains to show that $E$ is admissible w.r.t.\ $\att^{i+1}$, assuming it is admissible w.r.t.\ $\att^{i}$. We show defence first. Take any $a\in E$ such that there exists some $b\in A$ with $b\att^{i+1} a$. There are three cases: (i) if also $b\att^i a$, then we have $E\att^i b$ by assumption. (ii) if $b$ supported attacks $a$, then there exists $c\in A$ such that $b\supp c$ and $c\att^i a$. By assumption, we have $E\att^i c$ and thus $E$ mediated attacks $b$, i.e.\ $E\att^{i+1} b$. (iii) if $b$ mediated attacks $a$, then there exists $c\in A$ such that $a\supp c$ and $b\att^i c$. By closure under $\supp$, we have $c\in E$ and by assumption $E\att^i b$, thus also $E\att^{i+1} b$. In sum, $E$ is defended. For conflict-freeness, take any $a,b\in E$. We know that $a\not\att^i b$ by assumption. Further, since both arguments are in $E$ and $E$ is closed under $\supp$, both a supported or a mediated attack from $a$ to $b$ would contradict conflict-freeness of $E$ under $\att^i$. In sum, $E$ is admissible for each $\att^i$ and thus also for $\att^{ded}$. Together with closure under $\supp$, this means that $E$ is d-admissible.
		
		Now assume that $\forall a,b\in A$, we have $Prem(a)\neq Prem(b)$ and let $E$ be d-admissible. It is clear that $E$ is admissible in $\sbaf$. It remains to check strong support-closure. Thus take any $a\in A$ such that $Prem(a)\in Sent(E)$ (recall that there are no undercuts). Our assumption guarantees that then there is some $b\in E$ such that $b\supp a$, and by closure under $\supp$, we get $a\in E$ as desired.
	\end{proof}
	The second direction indeed requires the extra condition. If we just have two arguments, $a_1:\l\{s\},t\r$ and $a_2:\l\{s\},u\r$, it would be d-admissible to just accept $a_1$. However, it would not be strongly coherent, because it would commit to accepting $s$ and thus all premises (and no undercuts) of $a_2$. Hence, strongly coherent extensions would have to be either empty or contain both arguments.
	
\end{document}